\documentclass[letterpaper]{article} 
\usepackage[preprint]{aaai2027}  
\usepackage[hyphens]{url}  
\usepackage{graphicx} 
\usepackage{natbib}  
\usepackage{caption} 
\usepackage{algorithm}
\usepackage{algorithmic}

\usepackage{newfloat}
\usepackage{listings}
\DeclareCaptionStyle{ruled}{labelfont=normalfont,labelsep=colon,strut=off} 
\floatstyle{ruled}
\newfloat{listing}{tb}{lst}{}
\floatname{listing}{Listing}

\usepackage{booktabs}

\usepackage{xr}

\makeatletter
\newcommand{\externaldocumentnocite}[1]{%
    \begingroup
    \let\bibcite\@gobbletwo
    \externaldocument{#1}%
    \endgroup
}
\makeatother

\ifdefined\APPENDIXONLY
    \externaldocumentnocite{main}
\else
    \externaldocumentnocite{supplement}
\fi

\usepackage{amsmath,amssymb,amsthm}

\newcommand{\RR}{\mathbb{R}}
\newcommand{\SSym}{\mathbb{S}}
\newcommand{\Aop}{\mathcal{A}}
\newcommand{\Mcal}{\mathcal{M}}
\newcommand{\Tcal}{\mathcal{T}}
\newcommand{\smin}{\sigma_{\min}}
\newcommand{\Rcal}{\mathcal{R}}
\newcommand{\ip}[2]{\left\langle #1,#2\right\rangle}
\newcommand{\norm}[1]{\left\lVert #1\right\rVert}
\newcommand{\abs}[1]{\left\lvert #1\right\rvert}
\newcommand{\op}{\mathrm{op}}

\DeclareMathOperator{\rank}{rank}
\DeclareMathOperator{\tr}{tr}
\DeclareMathOperator{\diag}{diag}
\DeclareMathOperator{\grad}{grad}
\DeclareMathOperator{\Hess}{Hess}
\DeclareMathOperator{\Proj}{Proj}

\theoremstyle{definition}
\newtheorem{definition}{Definition}
\newtheorem{assumption}{Assumption}

\theoremstyle{plain}
\newtheorem{theorem}{Theorem}
\newtheorem{proposition}{Proposition}
\newtheorem{lemma}{Lemma}
\newtheorem{corollary}{Corollary}

\theoremstyle{remark}
\newtheorem{remark}{Remark}

\title{Over-parameterization Is Neither Robust nor Fragile in Matrix Sensing: \\A Geometric Theory of Solution Robustness}

\author{
    Runze Shi\textsuperscript{\rm},
    Ziye Ma\textsuperscript{\rm}\corresponding
}
\affiliations{
    \textsuperscript{\rm}City University of Hong Kong\\
    runzeshi3-c@my.cityu.edu.hk, ziyema@cityu.edu.hk
}

\begin{document}

\maketitle

\ifdefined\APPENDIXONLY

\appendix
\setcounter{secnumdepth}{2}
\input{appendix}

\bibliography{aaai2027}

\else

\begin{abstract}
Over-parameterization is ubiquitous in modern machine learning, yet its effect on robustness remains poorly understood. We study this question in matrix sensing, a canonical nonconvex problem that can also be viewed, under an appropriate parameterization, as training a two-layer quadratic neural network. We develop a rank-aware robustness theory for the Burer–Monteiro factorization when the search rank \(k\) may exceed the target rank \(r_{\star}\), thereby capturing over-parameterization. Our analysis shows that robustness depends jointly on the degree of over-parameterization and the local geometry of the attained second-order stationary point: increasing \(k\) generally worsens sensitivity, but stationary points with the same search rank can exhibit substantially different robustness. Motivated by this observation, we propose a trajectory-aware algorithm that uses spectral balancing and randomized sketching to steer optimization toward less sensitive solutions. In matrix-sensing experiments, the method reduces paired sensitivity by \(18\)–\(29\%\) for isotropic operators and \(87\)–\(89\%\) for weak-tail operators, without degrading reconstruction accuracy. These results demonstrate that parameter count alone does not determine robustness and identify solution geometry and the optimization trajectory as important levers for improving it. 

\end{abstract}


\section{Introduction}

Matrix sensing seeks to recover an unknown low-rank matrix from a collection
of linear measurements, typically corrupted by noise. It is a canonical
instance of the broader class of low-rank inverse problems, which also
includes matrix completion \citep{candes2009exact}, robust principal component
analysis and low-rank-plus-sparse decomposition \citep{candes2011robust},
lifted formulations of phase retrieval and blind deconvolution
\citep{candes2013phaselift, ahmed2014blind}, and low-rank quantum state tomography
\citep{gross2010quantum}.
Matrix sensing provides a canonical
setting for studying nonconvex low-rank estimation: the observation model is
simple enough to permit precise analysis, while the rank constraint retains
the key non-convex landscape that proves be mathematical challenging in large-scale machine learning
\citep{recht2010guaranteed,bhojanapalli2016global,ge2017nospurious}. Specifically, we consider positive-semidefinite matrix sensing,
\begin{equation}
\label{eq:intro-objective}
\begin{aligned}
    f_w(X)
    &=
    \tfrac{1}{2}
    \norm{\Aop(XX^\top-M_\star)-w}_2^2,
    \ \ X\in\RR^{n\times k}, \\
    y_w &= \Aop(M_\star)+w
\end{aligned}
\end{equation}
where \(M_\star\succeq0\) has rank \(1\le r_\star\le k\).
For rank-one positive-semidefinite measurements \(A_i=a_i a_i^\top\), as in
semidefinite liftings of phase retrieval \citep{candes2013phaselift}, the
prediction satisfies
\(\ip{A_i}{XX^\top}=\sum_{j=1}^k(a_i^\top x_j)^2\).
Thus, this special case is equivalent to training a width-\(k\) two-layer
quadratic network with fixed output weights \cite{li2018algorithmic}. In the positive-semidefinite setting considered in \eqref{eq:intro-objective}, the Burer--Monteiro
factorization writes \(M=XX^\top\), replacing the full matrix and eliminating the rank constraint 
\citep{burer2003nonlinear}.  The factor width \(k\) determines the search rank
and may exceed the unknown target rank \(r_\star\), indicating a mathematically elegant but representative form of over-parametrization.

Over-parameterization is useful when the target rank is unknown, since choosing
\(k < r_\star\) cannot recover the true solution, and practitioners normally make $k$ much larger than the conjectured $r_\star$ to achieve a "safe-margin". When the sensing matrices are i.i.d. sampled from Gaussian distribution and enough measurements are made, it is generally known that
matrix-sensing problems have no spurious local minima and exhibit
directions of escape at saddle points
\citep{bhojanapalli2016global,ge2017nospurious,zhang2019sharp}. However, the requirements on the quantity and geometric uniformity of the sensing data for which such benign landscape exists is quite stringent. Critically, a recent line of work \cite{zhang2025improved,ma2023noisyoverparam,stoegerSoltanolkotabi2021smallInit} has shown that simply increasing search rank $k$ can loosen such requirements. Thus, in terms of optimization, increase in parametrization can serve as useful degrees of freedom rather than merely
redundant parameters.

However, the impact of over-parametrization on robustness is a much more complicated one. Existing theory first asks whether all local minima remain accurate after the measurements are corrupted. Under Gaussian noise and a \(2r\)-RIP constant below \(1/10\), \citet{bhojanapalli2016global} show that every local minimum lies within \(20\sigma_w\sqrt{\log(n)/m}\) of the ground truth. For general smooth objectives in the exactly parameterized regime, \citet{ma2023noisy} show that a clean RIP constant \(\delta<1/3\) places every noisy local minimum in a recovery ball whose radius is linear in the noise magnitude; for squared-loss matrix sensing, this radius scales as \(O(\norm{\Aop}\norm{w}_2/(1-3\delta))\). Their arbitrary-RIP result instead excludes local minima from a region around the truth. \citet{ma2023noisyoverparam} extend this class-wide analysis to noisy over-parameterized matrix sensing: under suitable RIP conditions, they bound the maximum recovery error of an arbitrary local minimum, while under weaker RIP they establish a close-or-far alternative. \emph{These results therefore certify that robustness can be somehow maintained or expected even in the over-parametrized regime.}

A second line of results asks whether optimization can reach this certified set. The strict-saddle guarantees in \citet{ma2023noisy,ma2023noisyoverparam} show that points outside the recovery region must have either a large gradient or a direction of negative curvature, allowing perturbed gradient descent to find an approximate local minimum in polynomial time. A third approach changes the loss and studies a particular optimization trajectory. For an \(\ell_1\)-loss under Sign-RIP, \citet{ma2023global} show that subgradient descent from a small initialization can recover the truth despite rank overestimation and Gaussian or gross corruptions, even when some global minimizers overfit the noise. This is an algorithm-specific robustness guarantee for a different, robust loss.

A natural next question remains unanswered: is robustness the same for all attained solutions, and if not, how can optimization be steered toward the more robust ones? Existing recovery bounds assign one worst-case radius to the entire set of local minima, while strict-saddle theory guarantees convergence to some member of that set. They can certify that every solution is sufficiently close to the truth, but they cannot compare the sensitivities of two solutions inside the same recovery region. In particular, they do not pair a noisy critical point \(M_w\) with its clean counterpart \(M_0\), quantify the solution-specific response \(\norm{M_w-M_0}_F/\norm{w}_2\), or explain why solutions obtained with the same sensing operator, noise level, and search rank can respond differently. This distinction is especially important under over-parameterization because this means that the degree of parametrization is not the only factor influencing robustness. We therefore develop a local, quantitative, and rank-aware theory that identifies the spectral and curvature geometry governing the robustness of an attained stationary point, and then uses those solution-level quantities to steer optimization toward more robust solutions.

This point-wise view also allows for a trajectory-level study of how robustness can be influenced during
optimization. Building on small-initialization dynamics and
randomized matrix sketching
\citep{stoegerSoltanolkotabi2021smallInit,tong2021scaledgd,
xu2023preconditioning,halko2011finding,woodruff2014sketching}, we develop a
trajectory-aware method that can actively guide trajectories toward more robust solutions. We evaluate the
procedure on isotropic and weak-tail sensing operators using both
reconstruction accuracy and paired perturbation sensitivity.

\paragraph{Contributions:}
\begin{itemize}

\item \textbf{A geometric theory of robustness.}
We develop a point-wise theory that explains how over-parameterization and the
local geometry of an attained solution jointly determine its sensitivity to
measurement noise. The theory covers both rank-deficient and full-rank
solutions and distinguishes solutions that have the same search rank but
different robustness.

\item \textbf{A robust algorithm for matrix sensing and beyond.}
We translate the theory into a trajectory-aware method that steers
optimization toward less sensitive solutions. Across the sensing operators we
test, the method improves paired robustness without sacrificing reconstruction error. In the appendix, we also demonstrate preliminary evidence that such approach can also help improve robustness in LoRA fine-tuning of large langauge models.


\end{itemize}

\section{Related Work}
\label{sec:related-work}
\paragraph{Burer--Monteiro factorization and fixed-rank geometry.}
The Burer--Monteiro parameterization represents a positive-semidefinite
matrix as \(M=XX^\top\), enforcing an upper rank bound while reducing
the ambient dimension \citep{burer2003nonlinear}.  Rank-deficient
second-order points of smooth factorized semidefinite programs admit
global optimality certificates
\citep{boumal2016non,boumal2020deterministic}.  Complementarily,
fixed-rank manifold optimization provides the tangent-space Hessians
and local perturbation tools needed to analyze rank-preserving motion
\citep{absil2008optimization,vandereycken2013lowrank,
boumal2023introduction}.  We use these two viewpoints for different
rank profiles: rank-deficient points are treated through a matrix-space
KKT condition, whereas full-rank points are analyzed through local
continuation on a fixed-rank manifold.

\paragraph{Robustness of Matrix Sensing.}
We first talk about studies of robustness for the classic matrix sensing setting~\eqref{eq:intro-objective} with $\ell_2$ loss. Most works in this regime only guarantees global recoverability under noise, but neglects discussion of robustness of non-global solutions. \citet{bhojanapalli2016global} shows that with Gaussian noise and a restrictive RIP constant threshold of $1/10$, every local minima lies within \(20\sigma_w\sqrt{\log(n)/m}\) of the ground truth, which is considered to be the one of the most classic results in this field. Later on, several works have iterated upon this idea, and proved that the RIP constant can be as large as $1/2$ for robust global recovery to still hold \cite{ma2023noisy,li2020nonconvex,zhu2021global}. Specifically, recent works have also extended this analysis to the over-parameterized regime, and has preliminary shown that global solutions can still be robust, but the story is more complicated. \citet{ma2023noisyoverparam} has shown that in the over-parameterized setting global solution can still be recovered to certain accuracy under presence of noise, but this accuracy can be highly fragile and depends on eigenvalues of critical points. Another line of work by \citet{zhang2022improved,zhang2025sharp} also corroborates this fact and states that over-parameterization is benign under general loss functions. However, the caveat with this line of work is they require the loss function to satisfy RIP property with noise corruptions, which is quite unlikely.

\paragraph{Improving Robustness of Low-rank Optimization}
A separate line of work considers how to improve robustness of low-rank matrix problems. One of the most commonly considered approach is by adopting a robust loss, like \(\ell_1\) loss. For a known rank, \citet{li2020nonconvex} establishes local exact
recovery and convergence guarantees for non-convex robust matrix
factorization.  With an over-parameterized rank, \citet{ding2021rankoverspecified}
show that a subgradient method with diminishing step sizes still achieves
exact recovery under sparse Gaussian-measurement corruptions.
\citet{ma2023global} further show that small initialization can prevent
the redundant factor directions from fitting the noise, yielding global
trajectory guarantees under both outlier and Gaussian noise models.
Recent work extends this direction to preconditioned asymmetric
factorizations \citep{giampouras2025opsa}.  At the same time,
\citet{ma2025localoptimality} show that, in over-parameterized robust
matrix recovery, the ground-truth factors may be strict saddle points
rather than local minima, emphasizing that algorithmic selection cannot
be explained by local optimality alone. Despite the success of adopting a robust loss in the matrix space, such practice cannot be applied in larger-scale machine learning problems due to its non-differentiability. Therefore, we still need a more careful, and practically relevant analysis of robustness in differentiable regression losses.


\paragraph{Optimization dynamics of Low-rank Problems.}
Global landscape guarantees do not determine which region an optimization algorithm selects. However, there are existing works that we can build upon to explain robustness better. For example, in over-parameterized matrix sensing,
small initialization induces an implicit spectral bias that first
aligns the iterates with the signal subspace and suppresses redundant
directions \citep{stoegerSoltanolkotabi2021smallInit}. Gram
preconditioning accelerates ill-conditioned low-rank estimation
\citep{tong2021scaledgd}, and damped variants extend this mechanism to
over-parameterized noisy matrix sensing \citep{xu2023preconditioning}.
Randomized sketches provide scalable approximations of dominant
subspaces and Rayleigh information
\citep{halko2011finding,woodruff2014sketching}.  We use these tools for
a different purpose: to monitor and modify the local robustness
geometry along the optimization trajectory.  Because exact RIP
certification is generally intractable
\citep{bandeira2013certifying}, our finite-probe RIP quantities are
treated as diagnostics rather than formal global certificates.


\section{Problem Setup and Notation}
\label{sec:problem-setting}

Fix positive integers \(n,m,k\) with \(k\le n\). Let \(\SSym^n\) denote the space of real symmetric matrices equipped with the Frobenius inner product. The sensing operator \(\Aop:\SSym^n\to\RR^m\) is defined by \((\Aop Z)_i=\ip{A_i}{Z}\), where \(A_i\in\SSym^n\), with adjoint \(\Aop^\ast z=\sum_{i=1}^m z_iA_i\). We observe \(y_w=\Aop(M_\star)+w\), where \(w\in\RR^m\) and \(M_\star\succeq0\) has rank \(1\le r_\star\le k\). The theory treats \(w\) deterministically; distributional noise assumptions are used only in the experiments. The case \(k>r_\star\) is over-parameterized.

Define the smooth matrix-space loss on \(\SSym^n\) by
\begin{equation}
\label{eq:Fw}
    F_w(M)=\frac12\norm{\Aop(M-M_\star)-w}_2^2 .
\end{equation}
The nonconvex factorized problem studied in this paper is
\begin{equation}
\label{eq:factorized-problem}
    \min_{X\in\RR^{n\times k}} f_w(X),
    \qquad f_w(X)=F_w(XX^\top).
\end{equation}
We write \(H=\Aop^\ast\Aop\), \(N_w=\Aop^\ast w\), and \(G_w(M)=\nabla F_w(M)=H(M-M_\star)-N_w\), where \(G_w(M)\) is the ambient Frobenius gradient. Then assume \(r_\star\le k\) and \(\Aop\) satisfies
\((\delta,2k)\)-RIP for some \(\delta\in[0,1)\), i.e.,
\(\left|\norm{\Aop(Z)}_2^2-\norm{Z}_F^2\right|
\le\delta\norm{Z}_F^2\) for all \(Z\in\SSym^n\) with
\(\rank(Z)\le2k\). We set \(s_{\rm RIP}=2k\), and, for \(M\succeq0\),
write its positive eigenvalues as
\(\lambda_1(M)\ge\cdots\ge\lambda_{\rank(M)}(M)>0\).

For \(1\le q\le\rank(M)\), let \(P_q(M)\) be a projector onto a
top-\(q\) eigenspace and set \(P_q^\perp=I-P_q\). The tangent space of
the rank-\(q\) PSD manifold at a matrix with range projector \(P\) is
\begin{equation}
\label{eq:tangent-space}
    \Tcal(P)=\{\eta\in\SSym^n:P^\perp\eta P^\perp=0\}.
\end{equation}
Every \(\eta\in\Tcal(P)\) has rank at most \(2q\). Let factor width \(k\) be the search rank; \(q=\rank(M)\) at deficient points, \(q=k\)
along continuations, and \(q<k\) only for diagnostics.

\begin{definition}[Tangent gain and normal penalty]
\label{def:gains-ratios}
For a rank-\(q\) projector \(P\), define
\begin{equation}
\label{eq:tangent-gains}
    \alpha_T(P)=
    \inf_{\eta\in\Tcal(P),\,\norm{\eta}_F=1}
    \norm{\Aop(\eta)}_2^2.
\end{equation}
For \(M\succeq0\) and \(1\le q\le\rank(M)\), define
\begin{equation}
\label{eq:normal-ratios}
    \rho_q^-(M;w)=
    \frac{\norm{[P_q^\perp G_w(M)P_q^\perp]_-}_{\op}}
    {\lambda_q(M)}.
\end{equation}
Here \([B]_-\succeq0\) is the negative spectral part of \(B\).Here \([B]_-\) is the negative spectral part of \(B\).  The exact Hessian penalty is \(\rho_q^-\).
\end{definition}

Under Assumption~\ref{ass:unified-rip}, every rank-\(q\) projector
with \(q\le k\) satisfies
\begin{equation}
\label{eq:tangent-rip}
    1-\delta\le\alpha_T(P)\le1+\delta.
\end{equation}

\section{Stability of Rank-$k$ Critical Points}
\label{sec:full-rank-stability}
As per existing over-parametrized matrix sensing theory, any critical point that is rank-deficient is already globally optimal, which is rare to come by~\cite{boumal2016non}. Therefore, the majority of critical points we encounter will be of full rank (rank-$k$). This section will be dedicated to studying how the robustness of rank-$k$ critical points differ from each other, and in the next section we will extend this discussion to rank-deficient ones. We first show how the normal block of the ambient gradient reduces tangent curvature in the factorized formulation. We then use the resulting curvature margin to study the surrounding space of Riemannian critical points projected on a rank-$k$ manifold.

\paragraph{Tangent curvature in the factorized formulation.}

Let $M=XX^\top$, where $X$ has full column rank, and suppose
$G_w(M)X=0$. For a factor direction $D$, let
$\eta=XD^\top+DX^\top\in\Tcal(P)$, where $P$ is the range projector of $M$.
The Hessian decomposes as
\begin{equation}
    \label{eq:hessian-decomposition-main}
    \nabla^2 f_w(X)[D,D]
    =\norm{\Aop(\eta)}_2^2+2\ip{G_w(M)}{DD^\top}.
\end{equation}
The first term is the sensing curvature along $\eta$. In the second,
$G_w(M)X=0$ removes the range and cross terms, so only
$[P^\perp G_w(M)P^\perp]_-$ reduces curvature. Normalized by the active
spectral floor $\lambda_k(M)$, this loss is $\rho_k^-(M;w)$ from
Definition~\ref{def:gains-ratios}. Hence,
\begin{equation}
    \label{eq:active-margin-main}
    \begin{aligned}
        \nabla^2 f_w(X)[D,D]
        &\ge
        \left(\alpha_T(P)-\rho_k^-(M;w)\right)\norm{\eta}_F^2
        \\
        &\ge
        \left(1-\delta-\rho_k^-(M;w)\right)\norm{\eta}_F^2 .
    \end{aligned}
\end{equation}
The first inequality combines the lower tangent gain $\alpha_T(P)$ with the
most negative Rayleigh quotient of the normal-normal block, normalized by
$\lambda_k(M)$; the second uses the $2k$-RIP lower bound
$\alpha_T(P)\ge1-\delta$. Thus, $\alpha_T(P)-\rho_k^-(M;w)$ lower-bounds
$\nabla^2 f_w(X)[D,D]/\norm{\eta}_F^2$ and is the tangent-curvature margin at
$M$; $1-\delta-\rho_k^-(M;w)$ is its conservative RIP-based lower bound.
Because $[P^\perp G_w(M)P^\perp]_-$ can differ across attained points,
rank-$k$ critical points with the same search rank and sensing operator can
differ in robustness. Details are given in Appendix~\ref{app:s4proofs}.

\paragraph{Riemannian critical points on the rank-$k$ manifold.}
Let
\[
    \Mcal_k=\{M\succeq0:\rank(M)=k\}
\]
be the embedded rank-$k$ PSD manifold. At a full-column-rank critical factor,
horizontal factor directions lift all tangent directions of $\Mcal_k$. For
$\eta\in T_M\Mcal_k$ and a horizontal factor direction $D$ satisfying
$\eta=XD^\top+DX^\top$, \citet[Theorem~1]{luo2025nonconvex} give
\[
    \ip{\eta}{\Hess_{\Mcal_k}F_w(M)[\eta]}
    =\nabla^2 f_w(X)[D,D].
\]
Thus,~\eqref{eq:active-margin-main} also lower-bounds the Riemannian Hessian on
$T_M\Mcal_k$.

For $s\in[0,1]$, write
\[
    F_s(M)=F_{sw}(M),
    \qquad
    G_s(M)=G_{sw}(M).
\]
Let $M_1=M_w\in\Mcal_k$ be a computed rank-$k$ Riemannian critical point of
$F_1=F_w$ satisfying
\begin{equation}
    \label{eq:endpoint-critical}
    \grad_{\Mcal_k}F_w(M_w)=0.
\end{equation}
For the embedded Frobenius metric,
$\grad_{\Mcal_k}F_s(M)=0$ is equivalent to $G_s(M)P=0$.

Define the RIP-based tangent-curvature margin at $M_w$ by
\begin{equation}
    \label{eq:endpoint-margin-main}
    \mu_w=(1-\delta)-\rho_k^-(M_w;w).
\end{equation}
If $\mu_w>0$, the Riemannian Hessian at $M_w$ is positive definite on
$T_{M_w}\Mcal_k$. The implicit-function theorem gives a $C^1$ curve for $s$ in
a neighborhood of $1$, and the small-noise condition in
Appendix~\ref{app:s4proofs} continues it over $[0,1]$ while keeping the
tangent-Hessian margin and rank-$k$ spectral floor uniformly positive:
\[
    \begin{aligned}
        s&\mapsto M_s\in\Mcal_k,\qquad 0\le s\le1,\\
        M_1&=M_w,\qquad \grad_{\Mcal_k}F_s(M_s)=0.
    \end{aligned}
\]
Set $M_0:=M_{s=0}$, the clean rank-$k$ Riemannian critical point paired with
$M_w$ by this curve. The Hessian equality, complete argument, and uniform
bounds are given in Appendix~\ref{app:s4proofs}.

\begin{theorem}[Pointwise rank-stable perturbation bound]
\label{thm:rank-stable-bound}
Suppose Assumption~\ref{ass:unified-rip} and the local stability conditions at
$M_w$ in Assumptions~\ref{ass:endpoint-margin}--\ref{ass:small-noise} hold.
Then the noisy rank-$k$ Riemannian critical point $M_w$ and the corresponding
clean rank-$k$ Riemannian critical point $M_0$ satisfy
\begin{equation}
    \label{eq:rank-stable-rip}
    \norm{M_w-M_0}_F
    \le
    \frac{2\sqrt{1+\delta}}
    {(1-\delta)-\rho_k^-(M_w;w)}
    \norm{w}_2 .
\end{equation}
\end{theorem}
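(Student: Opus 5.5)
We propose to integrate the derivative of the continuation curve $s\mapsto M_s$ constructed above, and to bound that derivative using the tangent-curvature margin along the path. By Assumptions~\ref{ass:endpoint-margin}--\ref{ass:small-noise}, the curve $M_s\in\Mcal_k$ is $C^1$ on $[0,1]$ with $G_s(M_s)P_s=0$. The small-noise condition keeps the margin uniformly positive, and we take it to give $\mu_s:=(1-\delta)-\rho_k^-(M_s;sw)\ge \mu_w/2$ for all $s$ (or a comparable factor). This choice is meant to account for the constant $2$ in \eqref{eq:rank-stable-rip}. Then
\[
\norm{M_w-M_0}_F\le\int_0^1\norm{\dot M_s}_F\,ds ,
\]
so it suffices to show $\norm{\dot M_s}_F\le \sqrt{1+\delta}\,\norm{w}_2/\mu_s$ for every $s$.

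\textbf{Step 1: the tangent equation.} Write the criticality condition as $\Proj_{T_{M_s}\Mcal_k}G_s(M_s)=0$ and differentiate in $s$. We have $\partial_s G_s(M)=-\Aop^\ast w=-N_w$ and $\dot M_s\in T_{M_s}\Mcal_k$. Differentiating gives
\[
\Hess_{\Mcal_k}F_s(M_s)[\dot M_s]=\Proj_{T_{M_s}\Mcal_k}(N_w),
\]
where the Riemannian Hessian already contains the Weingarten term coming from the variation of the projector. This is the standard implicit-function identity, and it is the same one used to produce the curve.

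\textbf{Step 2: coercivity.} By the Hessian identity of \citet[Theorem~1]{luo2025nonconvex} together with \eqref{eq:active-margin-main} applied at $M_s$ with noise $sw$, we get
\[
\ip{\eta}{\Hess_{\Mcal_k}F_s(M_s)[\eta]}\ge\mu_s\norm{\eta}_F^2 \quad\text{for all } \eta\in T_{M_s}\Mcal_k .
\]
Taking $\eta=\dot M_s$ in the tangent equation yields
\[
\mu_s\norm{\dot M_s}_F^2\le \ip{\dot M_s}{N_w}=\ip{\Aop(\dot M_s)}{w}.
\]
Since $\dot M_s$ has rank at most $2k$, the $2k$-RIP gives $\norm{\Aop(\dot M_s)}_2\le\sqrt{1+\delta}\,\norm{\dot M_s}_F$. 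Applying Cauchy--Schwarz and dividing by $\norm{\dot M_s}_F$, we obtain $\norm{\dot M_s}_F\le\sqrt{1+\delta}\,\norm{w}_2/\mu_s$.

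\textbf{Step 3: integrate.} Using $\mu_s\ge\mu_w/2$ in the integral from the first paragraph gives \eqref{eq:rank-stable-rip}.

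The main obstacle is the uniform margin bound. We need to control how $\rho_k^-(M_s;sw)$ and $\lambda_k(M_s)$ drift along the curve, so that the margin never falls below $\mu_w/2$ and the spectral floor stays positive, which also keeps the curve inside $\Mcal_k$. We would argue by continuity and bootstrapping. On any interval where $\mu_s\ge\mu_w/2$, the derivative bound shows $\norm{M_s-M_w}_F\le 2\sqrt{1+\delta}\,\norm{w}_2/\mu_w$. Weyl's inequality then bounds the change in $\lambda_k$. The change in the normal block $P^\perp G P^\perp$ is bounded by $\norm{H}_{\op}$ times that displacement, plus $\norm{N_w}$, plus a term from rotation of the projector controlled by $\norm{\dot M}/\lambda_k$. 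The small-noise assumption should make the total change less than $\mu_w/2$, which closes the bootstrap. If the appendix's assumption is stated directly as a uniform margin along the path, this step reduces to citing it.
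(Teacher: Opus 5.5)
Your proof is correct and follows the paper's argument: differentiate the criticality identity along the continuation curve to get $\Hess_{\Mcal_k}F_s(M_s)[\dot M_s]=\Proj_{T_{M_s}\Mcal_k}(\Aop^\ast w)$, use the uniform margin $\mu_w/2$ supplied by Lemma~\ref{lem:small-noise-continuation}, bound the forcing term by $\sqrt{1+\delta}\norm{w}_2$ via RIP on rank-$2k$ tangent matrices, and integrate over $s\in[0,1]$; testing the equation against $\dot M_s$ instead of inverting the Hessian is an equivalent variant. One small fix in your bootstrap sketch: control the drift of the normal block through rank-one Rayleigh quotients, $\abs{v^\top H(M-M_w)v}\le(1+\delta)\norm{M-M_w}_F$, as the paper does, rather than through $\norm{H}_{\op}$, which RIP does not bound and which would not close the bootstrap with the constants in Assumption~\ref{ass:small-noise}.
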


Theorem~\ref{thm:rank-stable-bound} makes the comparison among rank-$k$
critical points explicit: a smaller negative normal component gives a larger
tangent-curvature margin and a tighter noise-linear perturbation bound, whereas
the bound deteriorates as $\rho_k^-(M_w;w)$ approaches $1-\delta$. Thus, the
search rank alone does not determine the robustness of an attained point. The
bound controls the perturbation between the two rank-$k$ Riemannian critical
points $M_w$ and $M_0$ connected by the continuation curve, rather than the
recovery error relative to $M_\star$. In particular, when $r_\star<k$, the
rank-$k$ critical point $M_0$ cannot equal the lower-rank target $M_\star$;
direct recovery guarantees are given in Theorem~\ref{thm:three-cases}(2) and
(3). The associated projector-rotation bound is given in
Appendix~\ref{app:s4proofs}. The result is intrinsic and does not depend on a
particular local chart; complementary coordinate expressions are deferred to
Appendix~\ref{ssec:Supporting-Chart-Diagnostics}.

\section{Rank Deficiency and the Three Robustness Cases}
\label{sec:rank-cases}
We now extend the discussion to both rank-deficient and full-rank
SOSPs in this section. First we explain how unused columns provide a matrix-space optimality
certificate, and then we will show how the slack spectral tail limits the negative spectral part of the normal block at a full-rank point. These two arguments yield
three rank-aware robustness routes centered at the computed noisy point:
direct recovery for rank-deficient SOSPs, critical points stability along a full-rank
branch, and direct recovery relative to the lower-rank target.

\subsection{Rank-Deficient SOSPs}
Rank deficiency makes the factorized second-order condition especially useful.
If \(X\) has an unused column direction, a factor perturbation along that
direction has zero first-order matrix variation. The second-order condition
then gives a direct condition on the ambient matrix gradient. Since the same
argument applies to every matrix direction, it forces

\[
    G_w(XX^\top)\succeq0.
\]

First-order criticality also gives \(G_w(M)X=0\), where \(M=XX^\top\), and
therefore \(\langle G_w(M),M\rangle=0\). For every \(Z\succeq0\), these two
conditions imply

\[
    \langle G_w(M),Z-M\rangle
    =\langle G_w(M),Z\rangle\ge0.
\]

So the rank-deficient SOSP is a global minimizer
in matrix space. In the noiseless case, its objective value is zero and the
unified \(2k\)-RIP gives \(M=M_\star\). With noise, the same optimality
condition and RIP give the noise-linear recovery bound

\[
    \norm{M-M_\star}_F
    \le
    \frac{\norm{w}_2}{\sqrt{1-\delta}}.
\]

Thus, the rank-deficient guarantee is stated directly relative to the target;
no clean rank-\(k\) critical point is needed. This is the PSD matrix-sensing
specialization of the standard rank-deficiency certificate for
Burer--Monteiro factorizations
\citep{burer2003nonlinear,boumal2016non,boumal2020deterministic}. The formal
statements and complete arguments are deferred to Appendix~\ref{s5proofs}.

\subsection{Full-Rank Points and the Spectral Tail}

For a full-rank SOSP, write
\(M=XX^\top=\sum_{i=1}^k\lambda_i u_iu_i^\top\), with
\(\lambda_1\ge\cdots\ge\lambda_k>0\), and
\(P_q=\sum_{i=1}^q u_iu_i^\top\), \(q\le k\). Second-order stationarity and
the unified \(2k\)-RIP give
\begin{align}
    \norm{[P_q^\perp G_w(M)P_q^\perp]_-}_{\op}
    &\le (1+\delta)\lambda_k(M),
    \label{eq:tail-normal-bound}\\
    \rho_q^-(M;w)
    &\le
    (1+\delta)\frac{\lambda_k(M)}{\lambda_q(M)}.
    \label{eq:rho-tail-ratio}
\end{align}
Thus, \(\lambda_k(M)/\lambda_q(M)\) bounds the normalized negative spectral
part of the normal block, but not the full tail energy
\(\sum_{i=q+1}^k\lambda_i(M)^2\).

For target recovery, first-order criticality gives
\(G_w(M_w)=P_w^\perp G_w(M_w)P_w^\perp\), where \(P_w=P_k(M_w)\). Define
\begin{equation}
    \label{eq:absolute-normal-leakage}
    b_w^-
    :=
    \norm{[P_w^\perp G_w(M_w)P_w^\perp]_-}_{\op}
    =
    \lambda_k(M_w)\rho_k^-(M_w;w).
\end{equation}
Another ingredient controls the portion of the lower-rank target that lies
outside \(\operatorname{range}(M_w)\). This target-subspace leakage is controlled by
the current recovery error. Combining that estimate with \(b_w^-\) and the
unified \(2k\)-RIP gives the direct full-rank recovery certificate below. The formal
spectral-tail statement, the target-subspace leakage estimate, and the complete
normal-leakage argument are deferred to Appendix~\ref{s5proofs}.

\subsection{Main Theoretical Result}

Here we present our main theorem, which elaborates three possible robust cases.
Given the discussion about rank-deficient solutions above, we only compare it to \(M_\star\). For a
full-rank critical point under noise, one can either view it as perturbed from a clean and nearby full rank solution, or perturbed from $M_\star$. We believe those three cases offer a rather complete characterization of possible scenarios.

\begin{theorem}[Three robustness certificates]
\label{thm:three-cases}
Let \(X_w\) be an SOSP of \(f_w\), and write \(M_w=X_wX_w^\top\). Under
Assumption~\ref{ass:unified-rip}, the robustness of \(X_w\) can be characterized as:
\begin{enumerate}
    \item \textbf{General Case.}
    Suppose \(\rank(X_w)=k\), Assumption~\ref{ass:small-noise} additionally holds, and the
    RIP-based curvature margin is positive:
    \begin{equation}
        \label{eq:case-full-full-margin}
        \mu_w=(1-\delta)-\rho_k^-(M_w;w)>0.
    \end{equation}
    Let \(s\mapsto M_s\in\mathcal M_k\) be the critical continuation supplied
    by Lemma~\ref{lem:small-noise-continuation}, with \(M_1=M_w\), and let
    \(M_0=M_{s=0}\) be its clean rank-\(k\) critical point. Then
    \begin{equation}
        \label{eq:case-full-full-pointwise}
        \norm{M_w-M_0}_F
        \le
        \frac{2\sqrt{1+\delta}}
        {(1-\delta)-\rho_k^-(M_w;w)}
        \norm{w}_2.
    \end{equation}
    For rank \(q<k\), Lemma~\ref{lem:spectral-tail} in
    Appendix~\ref{s5proofs} gives deterministic bounds for the best
    rank-\(q\) approximation; direct recovery relative to \(M_\star\) is
    handled separately in part~(3).
    
    \item \textbf{Direct Rank-deficient certificate.}
    If \(\rank(X_w)<k\), then \(M_w\) is a global noisy PSD minimizer and
    \begin{equation}
        \label{eq:case-rank-def-target}
        \norm{M_w-M_\star}_F
        \le
        \frac{\norm{w}_2}{\sqrt{1-\delta}}.
    \end{equation}

    \item \textbf{Direct Full-rank Certificate.}
    Suppose \(\rank(X_w)=k\) with \(P_w=P_k(M_w)\), and let \(b_w^-\) be the
    absolute negative normal leakage defined in
    \eqref{eq:absolute-normal-leakage}. Define
    \begin{equation}
        \label{eq:case-mu-leak}
        \mu_{\rm leak}
        =
        1-\delta
        -
        \frac{b_w^-}{\lambda_{r_\star}(M_\star)}.
    \end{equation}
    If \(\mu_{\rm leak}>0\), then
    \begin{equation}
        \label{eq:case-normal-leakage}
        \norm{M_w-M_\star}_F
        \le
        \frac{\sqrt{1+\delta}}{\mu_{\rm leak}}\norm{w}_2.
    \end{equation}
    This bound is derived using a different technical route than the general case, and can be useful if one is only interested in comparing to $M_\star$.
\end{enumerate}
\end{theorem}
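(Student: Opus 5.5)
We prove the three parts separately, since each rests on a different earlier ingredient.

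\textbf{Part (1).} This is a restatement of Theorem~\ref{thm:rank-stable-bound}. An SOSP of full rank $k$ satisfies $G_w(M_w)X_w=0$, hence $G_w(M_w)P_w=0$, so $M_w$ is a Riemannian critical point on $\Mcal_k$. Under $\mu_w>0$ and Assumption~\ref{ass:small-noise}, Lemma~\ref{lem:small-noise-continuation} supplies the continuation $s\mapsto M_s$. For completeness we would sketch its mechanism, using
\[
\|w\|\,\|\mathcal A\|\lesssim\mu_w\lambda_k(M_w).
\]
\begin{itemize}
\item Differentiate $\Proj_{T_{M_s}}G_s(M_s)=0$ in $s$. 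This gives $\Hess_{\Mcal_k}F_s(M_s)[\dot M_s]=\Proj_{T}\Aop^\ast w$, where the derivative of $G_s$ in $s$ is $-\Aop^\ast w$.
\item Pair this with $\dot M_s\in T_{M_s}\Mcal_k$. The left side is at least $\mu_s\norm{\dot M_s}_F^2$, by \eqref{eq:active-margin-main} and the Hessian identity of \citet{luo2025nonconvex}. The right side is $\ip{\Aop(\dot M_s)}{w}\le\sqrt{1+\delta}\,\norm{\dot M_s}_F\norm{w}_2$, by RIP, since $\rank(\dot M_s)\le 2k$.
\item Hence $\norm{\dot M_s}_F\le\sqrt{1+\delta}\,\norm{w}_2/\mu_s$.
\item The small-noise assumption guarantees $\mu_s\ge\mu_w/2$ uniformly on $[0,1]$.
\item Integrating over $[0,1]$ yields \eqref{eq:case-full-full-pointwise} with the factor $2$.
\end{itemize}
We expect the \textbf{main obstacle} to be the uniform control of $\rho_k^-(M_s;sw)$ along the curve. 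This requires Lipschitz control of $\lambda_k(M_s)$, which varies by at most $\norm{M_s-M_w}$, and of $P^\perp G_s P^\perp$; we would rely on the appendix assumption for this.

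\textbf{Part (2).} Follow the rank-deficiency certificate from the main text.
\begin{itemize}
\item Pick a unit $v\in\ker X_w$ and a unit vector $u$, and set $D=uv^\top$. Then $X_wD^\top+DX_w^\top=0$, so \eqref{eq:hessian-decomposition-main} gives $0\le2\ip{G_w}{uu^\top}$. Hence $G_w(M_w)\succeq0$.
\item With $G_wX_w=0$, the KKT conditions of the convex problem $\min_{M\succeq0}F_w$ hold. Therefore $F_w(M_w)\le F_w(M_\star)=\tfrac12\norm{w}^2$.
\item Use the variational inequality with $Z=M_\star$: $\ip{G_w(M_w)}{M_\star-M_w}\ge0$. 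Writing $\Delta=M_w-M_\star$, this reads $-\norm{\Aop\Delta}^2+\ip{w}{\Aop\Delta}\ge0$, so $\norm{\Aop\Delta}\le\norm{w}$.
\item Since $\rank\Delta\le2k$, RIP gives $\sqrt{1-\delta}\norm{\Delta}_F\le\norm{w}$, which is the claim.
\end{itemize}

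\textbf{Part (3).} Again set $\Delta=M_w-M_\star$, which has rank at most $2k$. Since $G_w(M_w)=P_w^\perp G_wP_w^\perp$ and $\ip{G_w}{M_w}=0$, we have
\[
\ip{G_w(M_w)}{\Delta}=-\ip{G_w}{M_\star}=-\ip{P_w^\perp G_wP_w^\perp}{P_w^\perp M_\star P_w^\perp}\le b_w^-\tr(P_w^\perp M_\star P_w^\perp).
\]
The left side equals $\norm{\Aop\Delta}^2-\ip{w}{\Aop\Delta}$.

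The remaining step is the leakage estimate
\[
\tr(P_w^\perp M_\star P_w^\perp)=\tr(P_w^\perp\Delta P_w^\perp)\le\norm{\Delta}_F^2/\lambda_{r_\star}(M_\star).
\]
The first equality holds because $P_w^\perp M_wP_w^\perp=0$ (up to sign). For the inequality, write $P_w^\perp M_\star P_w^\perp=C$ and note $P_wM_\star P_w\succeq0$. Via a Schur-complement argument, $\norm{P_wM_\star P_w^\perp}_F^2\ge\lambda_{r_\star}(M_\star)\tr C$, and $\norm{\Delta}_F^2\ge2\norm{P_w\Delta P_w^\perp}_F^2$, which gives the inequality up to a constant. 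We would check this constant carefully, because it must match the stated form.

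Combining with RIP gives $(1-\delta)\norm{\Delta}_F^2-\sqrt{1+\delta}\norm{w}\norm{\Delta}_F\le b_w^-\norm{\Delta}_F^2/\lambda_{r_\star}$. Dividing by $\norm{\Delta}_F$ yields \eqref{eq:case-normal-leakage}.
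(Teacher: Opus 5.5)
\textbf{Parts (1) and (2) are fine; Part (3) has a gap.} Parts (1) and (2) follow the paper's own route. For (1), that is the continuation lemma with the velocity bound $\norm{\dot M_s}_F\le 2\sqrt{1+\delta}\norm{w}_2/\mu_w$, integrated over $[0,1]$. For (2), that is the unused-column argument giving $G_w(M_w)\succeq0$, then the variational inequality at $Z=M_\star$ and RIP. Your opening reduction in Part (3), to $\ip{G_w(M_w)}{\Delta}\le b_w^-\tr(P_w^\perp M_\star P_w^\perp)$, is also the paper's.

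\textbf{The leakage estimate.} The gap is the estimate $\tr(P_w^\perp M_\star P_w^\perp)\le\norm{\Delta}_F^2/\lambda_{r_\star}(M_\star)$, the one non-routine step of Part (3). Your Schur-complement route does not deliver it, because the inequality you propose,
\[
\norm{P_wM_\star P_w^\perp}_F^2\ge\lambda_{r_\star}(M_\star)\tr(P_w^\perp M_\star P_w^\perp),
\]
is false. If $\operatorname{range}(M_\star)\perp\operatorname{range}(M_w)$, the cross block vanishes while $\tr(P_w^\perp M_\star P_w^\perp)=\tr(M_\star)>0$. The direction is also wrong in principle: for a PSD block matrix, the Schur complement bounds the cross block from \emph{above} by the diagonal blocks, never from below. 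The constant in the leakage estimate must be exactly $1$ to produce $\mu_{\rm leak}=1-\delta-b_w^-/\lambda_{r_\star}(M_\star)$. So ``up to a constant'' leaves \eqref{eq:case-normal-leakage} unproved.

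\textbf{The fix.} You must keep the normal--normal block as well as the cross block, and it is cleanest to work row-wise. Since $P_w^\perp M_w=0$, you have $P_w^\perp M_\star=-P_w^\perp\Delta$. Writing $M_\star=U_\star\Lambda_\star U_\star^\top$,
\[
\tr(P_w^\perp M_\star P_w^\perp)=\norm{P_w^\perp U_\star\Lambda_\star^{1/2}}_F^2=\norm{P_w^\perp\Delta U_\star\Lambda_\star^{-1/2}}_F^2\le\frac{\norm{\Delta}_F^2}{\lambda_{r_\star}(M_\star)}.
\]
Equivalently, $\lambda_{r_\star}(M_\star)\tr(P_w^\perp M_\star P_w^\perp)\le\norm{P_w^\perp M_\star}_F^2=\norm{P_wM_\star P_w^\perp}_F^2+\norm{P_w^\perp M_\star P_w^\perp}_F^2\le\norm{\Delta}_F^2$. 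In the orthogonal-range example it is the second term that carries the bound. This is the paper's Lemma~\ref{lem:target-subspace-leakage}. With it in place, your final RIP rearrangement goes through exactly as you wrote it.
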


\begin{remark}[Scope of the full-rank continuation certificate]
Under standard no-spurious-minimum RIP conditions
\citep{bhojanapalli2016global,ge2017nospurious,zhang2019sharp}, a clean
full-rank SOSP distinct from the lower-rank target \(M_\star\) may not exist. Since those RIP requirements are usually hard to satisfy, the general case offers a more practical bound under weaker data conditioning and also covers non-global solutions.
\end{remark}

\section{Anchor-Free Guidance Toward Robust Full-Rank Critical Points}
\label{sec:methods}
The preceding theory identifies the pointwise margin
\((1-\delta)-\rho_k^-\) as a key factor of
stability. We now turn this observation into an optimization
procedure and ask whether the trajectory can be steered toward a
less sensitive critical point. At a high level, our method acts on
the numerator and denominator of \(\rho_k^-\) during an early
guidance window, gradually removes this guidance, and finally
polishes the iterate on the original objective \(f_w\). Then we will evaluate the polished critical points under an end-to-end protocol, an
initialization-controlled protocol, and comparisons with robust
low-rank recovery baselines.

For \(M_t=X_tX_t^\top\), let \(P_t=P_k(M_t)\) and define
\begin{equation}
\label{eq:method-rho-k}
\begin{aligned}
\rho_{k,t}^-
&=
\frac{b_t^-}{\lambda_k(M_t)},
\qquad
G_t
=
\Aop^\ast\!\bigl(\Aop(M_t)-y_w\bigr),
\\
b_t^-
&=
\max\!\left\{
0,\,
-\lambda_{\min}
\!\left(P_t^\perp G_tP_t^\perp\right)
\right\}.
\end{aligned}
\end{equation}
Here \(b_t^-\) is the absolute negative normal leakage, while
\(\lambda_k(M_t)\) is the smallest nonzero eigenvalue \(\lambda_k(M_t)\). The method is
``anchor-free'' because it uses only the current factor, observed
residual, applications of \(\Aop\) and \(\Aop^\ast\), and random
probes. It does not use \(M_\star\), clean measurements, or a penalty
anchoring the trajectory to its initialization.

\begin{algorithm}[t]
\caption{Decayed Anchor-Free Balanced Guidance}
\label{alg:decayed-balanced-guidance}
\begin{algorithmic}[1]
\REQUIRE Measurements \(y_w\), sensing operator \(\Aop\), search rank
\(k\), horizon \(T\), and the optimization and guidance parameters
listed in Appendix~\ref{app:guidance-details}.
\STATE Spectrally initialize \(X_0\in\mathbb R^{n\times k}\).
\STATE Set \(T_g=\lfloor\theta T\rfloor\).
\FOR{\(t=0,\ldots,T-1\)}
    \STATE Compute the residual
    \(e_t=\mathcal A(X_tX_t^\top)-y_w\) and
    \(G_t=\mathcal A^\ast e_t\).
    \IF{\(t<T_g\)}
        \STATE Set \(c_t=(1-t/T_g)^3\).
        \STATE Compute the scale-normalized inverse-trace direction
        \(g_{{\rm inv},t}\) from \eqref{eq:app-invtrace-direction} as defined in Appendix~\ref{app:guidance-details}.
        \STATE Draw \(s\) normal probes and compute the sampled Rayleigh
        direction \(g_{-,t}\) from \eqref{eq:app-rayleigh-direction} using the matrix-free sketch in Appendix~\ref{app:guidance-details}.
    \ELSE
        \STATE Set \(c_t=0\), \(g_{{\rm inv},t}=0\), and \(g_{-,t}=0\).
    \ENDIF
    \STATE Form
    \[
        g_t
        =
        2G_tX_t
        +
        \alpha_0c_t g_{{\rm inv},t}
        +
        \beta_0c_t g_{-,t}.
    \]
    \STATE Set
    \(S_t=X_t^\top X_t\) and
    \(\xi_t=\xi\,\max\{\operatorname{tr}(S_t)/k,\epsilon_0\}\).
    \STATE Take the preconditioned step
    \[
        X_{t+1}
        =
        X_t-\eta_tg_t(S_t+\xi_tI_k)^{-1}.
    \]
    \IF{\(t<T_g\) and \(h\) divides \(t+1\)}
        \STATE Apply the normal-direction correction
        \eqref{eq:app-periodic-correction} in Appendix~\ref{app:guidance-details} using a fresh sketch.
    \ENDIF
\ENDFOR
\STATE Set all guidance weights to zero and polish on \(f_w\) alone.
\RETURN The first full-rank polished iterate satisfying
\eqref{eq:critical-filter}.
\end{algorithmic}
\end{algorithm}

\subsection{Certificate-Guided Optimization}
\label{sec:certificate-guidance}

Equation~\eqref{eq:method-rho-k} exposes two complementary controls.
Protecting \(\lambda_k(M_t)\) prevents the denominator of
\(\rho_{k,t}^-\) from collapsing, while reducing \(b_t^-\) suppresses
negative normal curvature. Since \(\rho_k^-=b^-/\lambda_k\), the bounds
\(\lambda_k(\widetilde M)\geq a\lambda_k(M)\) and
\(b^-(\widetilde M)\leq c b^-(M)\) immediately imply
\(\rho_k^-(\widetilde M;w)\leq
(c/a)\rho_k^-(M;w)\). Thus both quantities must be controlled.
Thus, improving one component alone may not raise the certified margin
if the other worsens comparably.

To protect the active floor, we use the \(k\times k\) Gram matrix
\(S_t=X_t^\top X_t\) and the inverse-trace penalty
\[
\Omega_{\rm inv}(X_t)
=
(\bar\lambda_t+\varepsilon_t)
\tr\!\left((S_t+\varepsilon_t I_k)^{-1}\right),
\qquad
\bar\lambda_t=\frac{\tr(S_t)}{k}.
\]
Its scale-normalized gradient \(g_{{\rm inv},t}\) acts most strongly
on weak factor directions and therefore targets the denominator
\(\lambda_k(M_t)\) without forming \(M_t\).

To control the numerator, we project Gaussian probes onto
\(\operatorname{range}(X_t)^\perp\) and evaluate their normal Rayleigh quotients
against \(G_t\). The direction \(g_{-,t}\) penalizes the detected
negative quotients and moves them toward zero, providing a
matrix-free approximation to reducing \(b_t^-\). Exact gradients,
probe construction, and the periodic normal-direction correction
are given in Appendix~\ref{app:guidance-details}.

During the guidance window, the two directions are combined with the
original gradient:
\begin{equation}
\label{eq:guided-direction}
\begin{aligned}
g_t
&=
2G_tX_t+\alpha_0c_tg_{{\rm inv},t}
+\beta_0c_tg_{-,t},
\\
c_t
&=
(1-t/T_g)^3,
\qquad
T_g=\lfloor\theta T\rfloor .
\end{aligned}
\end{equation}
We take the damped Gram-preconditioned step
\[
X_{t+1}
=
X_t-\eta_tg_t
\left(S_t+\xi_tI_k\right)^{-1},
\qquad
\xi_t=\xi\max\{\bar\lambda_t,\epsilon_0\}.
\]
After \(T_g\), all guidance terms are removed. We then optimize
\(f_w\) alone until
\begin{equation}
\label{eq:critical-filter}
\frac{\norm{\nabla f_w(X)}_F}
{\max\{1,\norm{X}_F\}}
\leq \varepsilon_{\rm crit}.
\end{equation}
Consequently, guidance affects trajectory and solution selection,
but every reported critical point is an approximate critical point of the
original objective rather than of a permanently regularized loss.

\subsection{Empirical Protocol and Results}
\label{sec:empirical-validation}
We evaluate two questions: whether
Algorithm~\ref{alg:decayed-balanced-guidance} reduces end-to-end
critical-point response without degrading reconstruction, and how
its robustness--reconstruction trade-off compares with existing
low-rank recovery methods under controlled initialization.

\paragraph{Setup.}
We generate rank-\(r_\star\) PSD targets and use five
over-parameterized configurations with \(n\) ranging from \(50\) to
\(150\), \(r_\star\) from \(2\) to \(5\), and \(k\) from \(5\) to
\(12\). Measurements use implicit bilinear sensing with \(m=6nk\).
We consider isotropic Gaussian and Rademacher operators, together
with a weak-tail operator that attenuates directions orthogonal to
the target subspace by a factor \(0.35\). The latter is a deliberately
anisotropic stress test, not a heavy-tailed noise model. We use five
seeds and three independent Gaussian measurement perturbations per
configuration--seed instance, each with norm
\(0.08\norm{\Aop(M_\star)}_2\). 
Appendix~\ref{app:guidance-details} gives the exact configurations, seeds, operator construction, and optimization details.

\paragraph{Protocols and metrics.}
The primary end-to-end protocol initializes the clean and noisy runs
from their respective observations,
\(\operatorname{SpecInit}(\Aop^\ast y_0)\) and
\(\operatorname{SpecInit}(\Aop^\ast y_{w_j})\). The conditional
shared observed-spectral protocol instead initializes both members
of each pair from the same noisy-data initializer
\(\operatorname{SpecInit}(\Aop^\ast y_{w_j})\), thereby isolating
trajectory selection from initialization sensitivity. We report results separately.

For protocol \(p\in\{\mathrm{E2E},\mathrm{Shared}\}\), method \(a\), and matched instance \(s\), define
\[
    \mathsf R_{p,s}^{a}=\operatorname*{median}_{j=1,2,3} \norm{M_{{\rm crit},p,s}^{a}(w_j)-M_{{\rm crit},p,s}^{a}(0)}_F/\norm{w_j}_2
\]
and \(\mathsf Q_{p,s}=\mathsf R_{p,s}^{\rm A1}/\mathsf R_{p,s}^{\rm Plain}\). We form \(\mathsf Q_{p,s}\) for each matched instance and report its median over the \(25\) instances for each operator. We also report \(\mathsf E_{{\rm rel},s}=\norm{M_{{\rm crit},s}^{\rm A1}-M_\star}_F/\norm{M_{{\rm crit},s}^{\rm Plain}-M_\star}_F\).

We report operator-level medians of \(\mathsf Q_{p,s}\) over the
\(25\) matched instances. Here, \(\mathsf Q<1\) and \(\mathsf E_{\rm rel}<1\) indicate lower
paired critical-point sensitivity and reconstruction error than
Plain, respectively.


\begin{table}[t]
\centering
\small
\setlength{\tabcolsep}{2.5pt}
\renewcommand{\arraystretch}{1.08}
\begin{tabular}{llrrrr}
\toprule
Protocol & Operator & Ratio & \(\Delta_{\mathsf R}\) &
\(\mathsf E_{\rm rel}\) & Joint \\
\midrule
E2E & Gauss.    & 0.788 & 21.2\% & 0.887 & 23/25 \\
E2E & Radem.    & 0.785 & 21.5\% & 0.863 & 25/25 \\
E2E & Weak-tail & 0.180 & 82.0\% & 0.625 & 25/25 \\
\midrule
Shared & Gauss.    & 0.781 & 21.9\% & 0.887 & 23/25 \\
Shared & Radem.    & 0.771 & 22.9\% & 0.863 & 25/25 \\
Shared & Weak-tail & 0.117 & 88.3\% & 0.625 & 25/25 \\
\bottomrule
\end{tabular}
\caption{Algorithm~\ref{alg:decayed-balanced-guidance} relative to
Plain. Ratio is the median instancewise response ratio
\(\mathsf Q_{p,s}\), and
\(\Delta_{\mathsf R}=1-\operatorname{median}_s\mathsf Q_{p,s}\).
Joint counts instances in which both response and reconstruction
improve. Each row contains \(25\) configuration--seed instances.}
\label{tab:protocol-summary}
\end{table}

\begin{figure}
    \centering    \includegraphics[width=\linewidth]{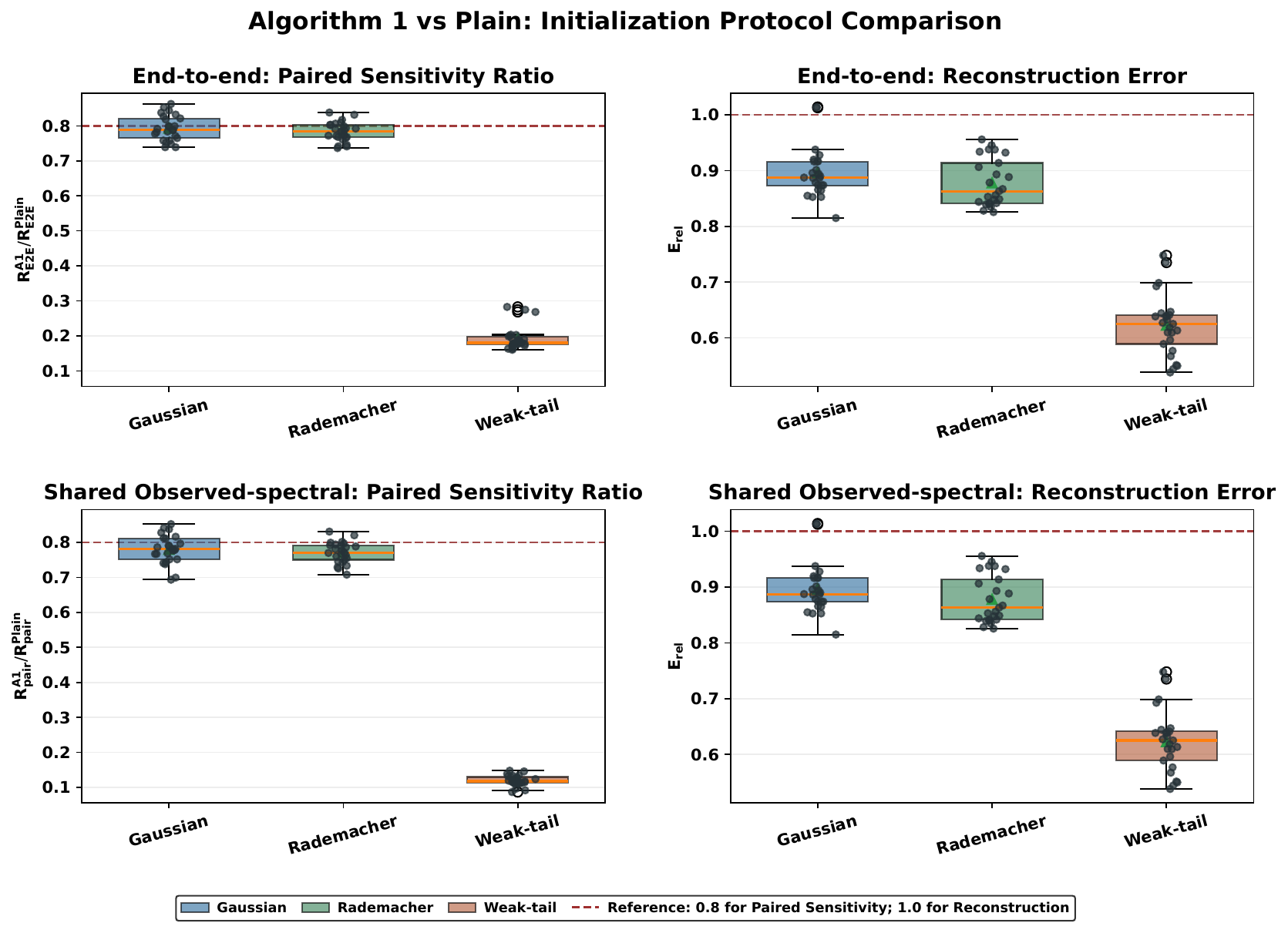}
    \caption{Distributions of the Algorithm~1/Plain Paired Sensitivity Ratio
(left) and reconstruction-error ratios (right) under E2E (top) and
Shared (bottom) initialization. Smaller values favor Algorithm~1;
the dashed lines mark \(20\%\) response improvement and reconstruction
parity. The right panels coincide because both protocols use the same
noisy-data runs.}
    \label{fg:ete-sos}
\end{figure}

\paragraph{Main results.}
Table~\ref{tab:protocol-summary} reports operator-level medians of
the paired critical-point sensitivity ratio \(\mathsf Q\) and
relative reconstruction error \(\mathsf E_{\rm rel}\), while
Figure~\ref{fg:ete-sos} shows their instance-level distributions.
Under E2E, the median \(\mathsf Q\) values are \(0.788\), \(0.785\),
and \(0.180\), and the corresponding median
\(\mathsf E_{\rm rel}\) values are \(0.887\), \(0.863\), and \(0.625\)
for Gaussian, Rademacher, and weak-tail sensing. Both metrics are
below one in \(73/75\) matched instances. All reported Algorithm~1
solutions pass the unguided criticality and full-rank checks.

Under Shared initialization, the median \(\mathsf Q\) values are
\(0.781\), \(0.771\), and \(0.117\), with the same
\(\mathsf E_{\rm rel}\) values and joint count. Because Shared
conditions out initialization sensitivity, it is treated as a
controlled rather than primary result. The weak-tail gains reflect
the anisotropic stress test, not an easier sensing problem.

\paragraph{Robust recovery and reconstruction baselines.}
L1-SubGM~\cite{ma2023global} and
ScaledSM~\cite{tong2021scaledsm} are robust matrix-recovery methods
using nonsmooth \(\ell_1\)-type losses for corrupted measurements.
GNMR~\cite{zilber2022gnmr} is included as a strong nonconvex
reconstruction baseline, testing whether lower paired sensitivity
merely results from improved recovery.

\begin{figure}
    \centering    \includegraphics[width=\linewidth]{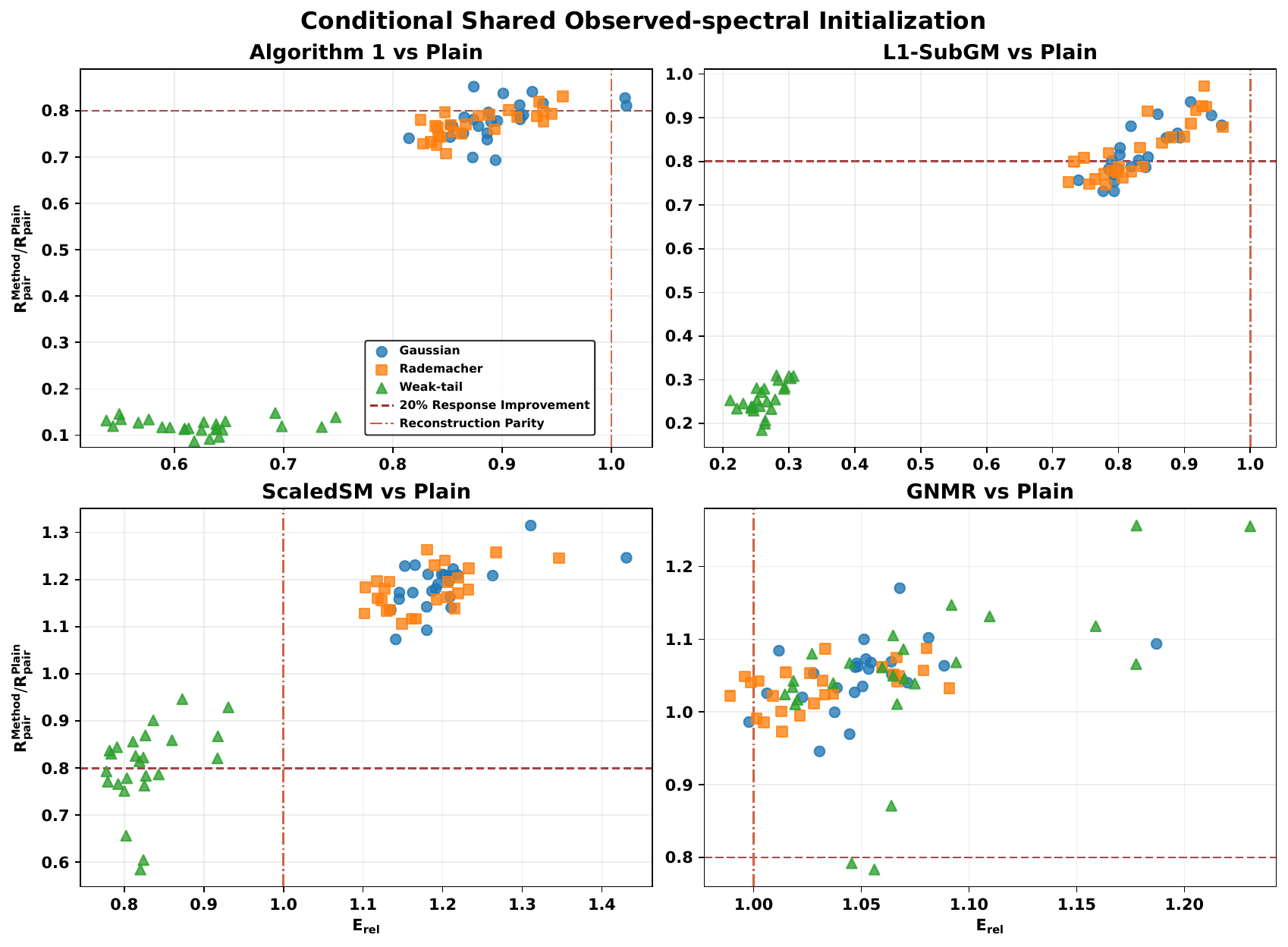}
    \caption{Conditional Shared comparison relative to Plain. Each point
is one configuration--seed instance. The horizontal line marks a
\(20\%\) reduction in paired sensitivity, and the vertical line marks
reconstruction parity; the lower-left region improves both metrics.}
    \label{fig:shared}
\end{figure}

Figure~\ref{fig:shared} compares all methods under Shared using the
same paired-sensitivity and reconstruction metrics. Algorithm~1 has
the smallest pooled median paired-sensitivity ratio (\(0.751\)),
compared with \(0.776\) for L1-SubGM, \(1.155\) for ScaledSM, and
\(1.049\) for GNMR, and improves both metrics in \(73/75\) instances.
L1-SubGM also gives favorable trade-offs, especially under weak-tail
sensing; ScaledSM improves mainly in that setting, while GNMR gives
no consistent sensitivity reduction. All displayed solutions are
polished on \(f_w\), and \(f_w\)-criticality claims refer to the
polished critical points.

\paragraph{Additional experiments.}
Appendix~\ref{app:guidance-details} reports further E2E,
shared-initialization, and large-scale matrix-sensing results with
consistent trends. It also presents a preliminary
Algorithm~1-inspired LoRA~\cite{hu2021lora} study that fine-tunes
RoBERTa-base~\cite{liu2019roberta} on
GLUE/SST-2~\cite{wang-etal-2018-glue}, improving corrupted-data
accuracy by approximately one percentage point.

\section{Conclusion}

We develop a rank-aware robustness theory for over-parameterized PSD matrix sensing. Rank-deficient SOSPs admit direct recovery guarantees, whereas full-rank robustness depends on tangent curvature, the active spectral floor, and negative normal leakage. These quantities motivate a transient, anchor-free method whose polished critical points improve paired robustness across Gaussian, Rademacher, and weak-tail operators without worsening reconstruction. Thus robustness is governed by solution geometry and solution selection, not factor width alone.

\bibliography{aaai2027}

\clearpage

\setcounter{secnumdepth}{2}
\section*{Appendix}

\appendix

\setcounter{figure}{2}
\setcounter{table}{1}
\setcounter{equation}{39}

\section{Algorithmic and Experimental Details}
\label{app:guidance-details}

This appendix provides the implementation and diagnostic details of
Algorithm~\ref{alg:decayed-balanced-guidance}, followed by the complete
experimental protocol and additional matrix-sensing results omitted from
Section~\ref{sec:empirical-validation}.

\subsection{Algorithmic and Diagnostic Details}
\label{app:algorithmic-details}

Algorithm~\ref{alg:decayed-balanced-guidance} is implemented using only
factor-space operations, applications of the sensing operator and its
adjoint, and inverses of \(k\times k\) matrices. It never materializes
\(M_t=X_tX_t^\top\) or the ambient normal block as a dense
\(n\times n\) matrix.

\paragraph{Matrix-free normal sketch.}
Let \(U_t\) be an orthonormal basis of
\(\operatorname{range}(X_t)\). Given
\(\Xi_t\in\mathbb R^{n\times s}\) with i.i.d.\ Gaussian entries, we form
\begin{equation}
\label{eq:app-normal-sketch}
Q_t
=
\operatorname{orth}
\bigl(\Xi_t-U_t(U_t^\top\Xi_t)\bigr).
\end{equation}
Thus, the projection onto
\(\operatorname{range}(X_t)^\perp\) uses only products with the thin
matrix \(U_t\). Writing the columns of \(Q_t\) as \(q_{t,j}\), define
\[
r_{t,j}
=
\left\langle G_t,q_{t,j}q_{t,j}^\top\right\rangle
=
\left\langle e_t,
\mathcal A(q_{t,j}q_{t,j}^\top)\right\rangle,
\]
where
\[
e_t=\mathcal A(X_tX_t^\top)-y_w,
\qquad
G_t=\mathcal A^\ast e_t.
\]
Negative values of \(r_{t,j}\) detect negative normal curvature. Let
\[
\mathcal J_t=\{j:r_{t,j}<0\}.
\]
The sampled normal-curvature penalty is
\[
\widehat\Phi_{-,t}
=
\begin{cases}
\displaystyle
\frac{1}{2|\mathcal J_t|}
\sum_{j\in\mathcal J_t}(-r_{t,j})^2,
& \mathcal J_t\neq\varnothing,\\[2mm]
0,
& \mathcal J_t=\varnothing.
\end{cases}
\]
The probes are held fixed while differentiating the current update. If
\[
H_{t,j}
=
\mathcal A^\ast\mathcal A(q_{t,j}q_{t,j}^\top),
\]
the resulting guidance direction is
\begin{equation}
\label{eq:app-rayleigh-direction}
g_{-,t}
=
-\frac{2}{|\mathcal J_t|}
\sum_{j\in\mathcal J_t}
(-r_{t,j})H_{t,j}X_t
\end{equation}
when \(\mathcal J_t\neq\varnothing\), and zero otherwise. The negative
sign moves the sampled negative Rayleigh quotients toward zero.

\paragraph{Inverse-trace direction.}
Let
\[
S_t=X_t^\top X_t,
\qquad
\bar\lambda_t=\frac{\operatorname{tr}(S_t)}{k},
\]
and set
\[
\varepsilon_t
=
\varepsilon_{\rm frac}
\max\{\bar\lambda_t,\epsilon_0\}.
\]
With
\[
R_t=(S_t+\varepsilon_tI_k)^{-1},
\qquad
\tau_t=\bar\lambda_t+\varepsilon_t,
\]
the scale-normalized inverse-trace penalty is
\[
\Omega_{\rm inv}(X_t)
=
\tau_t\operatorname{tr}(R_t).
\]
During the current update, \(\varepsilon_t\) is treated as fixed.
We use the \(\tau_t\)-normalized gradient direction
\begin{equation}
\label{eq:app-invtrace-direction}
\begin{aligned}
g_{{\rm inv},t}
&:=
\tau_t^{-1}
\nabla_X\Omega_{\rm inv}(X_t) \\
&=
2X_t
\left[
\frac{\operatorname{tr}(R_t)}
     {k\tau_t}I_k
-
R_t^2
\right].
\end{aligned}
\end{equation}
The \(R_t^2\) term acts most strongly on the weakest factor
directions, while the identity term controls the overall
factor scale.

\paragraph{Periodic normal correction.}
Every \(h\) iterations during the guidance window, the implementation
also applies a low-frequency correction. For a fresh set of normal
probes, define
\[
a_{t,j}
=
\frac{(-r_{t,j})_+}
     {\sum_{\ell=1}^{s}(-r_{t,\ell})_+}.
\]
If all sampled Rayleigh quotients are nonnegative, the correction is
skipped. Otherwise, set
\[
\widetilde v_t
=
\sum_{j=1}^{s}a_{t,j}q_{t,j},
\qquad
v_t
=
\frac{\widetilde v_t-U_t(U_t^\top\widetilde v_t)}
     {\|\widetilde v_t-U_t(U_t^\top\widetilde v_t)\|_2}.
\]
We approximate the weakest right factor direction \(u_t\in\mathbb R^k\)
using four inverse iterations with a damped version of \(S_t\). The
correction is
\begin{equation}
\label{eq:app-periodic-correction}
X_{t+1}
\leftarrow
X_{t+1}+\gamma_t v_tu_t^\top,
\end{equation}
where
\[
\gamma_t
=
\min\left\{
\gamma_0c_t
\sqrt{\max_j(-r_{t,j})_+},
\;
\gamma_{\max}\sqrt{\bar\lambda_t}
\right\}.
\]
The cap prevents the correction from dominating the base optimization
step.

\paragraph{Decay, preconditioning, and polishing.}
Let
\[
T_g=\lfloor\theta T\rfloor,
\qquad
c_t=
\left(1-\frac{t}{T_g}\right)^3,
\quad 0\leq t<T_g.
\]
During this window, the factor-space direction is
\[
d_t
=
2G_tX_t
+\alpha_0c_tg_{{\rm inv},t}
+\beta_0c_tg_{-,t}.
\]
We apply the damped Gram-preconditioned update
\begin{equation}
\label{eq:app-guided-update}
X_{t+1}
=
X_t-\eta_t d_t
\left(S_t+\xi_tI_k\right)^{-1},
\qquad
\xi_t
=
\xi\max\{\bar\lambda_t,\epsilon_0\}.
\end{equation}
The learning rate uses a \(25\)-iteration linear warm-up before reaching
its nominal value. For \(t\geq T_g\), all guidance terms and periodic
corrections are removed, and the method optimizes only
\[
f_w(X)
=
\frac12
\left\|
\mathcal A(XX^\top)-y_w
\right\|_2^2.
\]

After native optimization, the resulting solution is polished on
\(f_w\) for at most \(500\) additional steps. A reported critical point
must remain full rank and satisfy
\begin{equation}
\label{eq:app-criticality}
\frac{\|\nabla f_w(X)\|_F}
     {\max\{1,\|X\|_F\}}
\leq
\varepsilon_{\rm crit}.
\end{equation}
Therefore, guidance affects trajectory and critical-point selection,
whereas stationarity is always tested on the original unmodified
objective.

\paragraph{Computational cost.}
The inverse-trace direction requires forming \(S_t\), solving a
\(k\times k\) linear system, and multiplying an \(n\times k\) factor
by a \(k\times k\) matrix. Its additional cost is
\[
O(nk^2+k^3).
\]
The normal guidance uses \(s\) rank-one sensing evaluations and
adjoint-factor products together with an \(n\times s\) QR
factorization. Since \(s\) and \(k\) are small relative to \(n\), the
implementation stores only thin factor and sketch matrices and avoids
ambient eigendecompositions.

\paragraph{Finite-probe diagnostic.}
Exact RIP certification is computationally intractable in general.
We therefore use a finite-probe diagnostic only as an empirical screen.
For each probe, draw an orthonormal matrix
\(V_j\in\mathbb R^{n\times q}\), where \(q=\min\{2k,n\}\), and a
normalized Gaussian vector \(z_j\in\mathbb R^q\). Set
\[
Z_j=V_j\operatorname{diag}(z_j)V_j^\top,
\qquad
\|Z_j\|_F=1.
\]
Using \(N_{\rm RIP}=24\) probes, we compute
\[
\widehat\delta
=
\max_{1\leq j\leq N_{\rm RIP}}
\left|
\|\mathcal A(Z_j)\|_2^2-1
\right|,
\qquad
\widehat\mu
=
1-\widehat\delta-\rho_k^-.
\]
These quantities test only the sampled directions and are not certified
RIP constants or theorem-level robustness certificates.

\subsection{Matrix-Sensing Experiments}
\label{app:matrix-sensing-experiments}

\paragraph{Data and sensing operators.}
For each instance, we generate
\[
M_\star
=
U_\star\Lambda_\star U_\star^\top,
\qquad
\operatorname{rank}(M_\star)=r_\star,
\]
where \(U_\star\) is obtained by orthonormalizing a Gaussian matrix.
The nonzero eigenvalues of \(M_\star\) are geometrically spaced between
\(1.5\) and \(0.55\).

We use the implicit bilinear sensing operator
\[
\mathcal A_i(Z)
=
\frac{a_i^\top Zb_i}{\sqrt m},
\qquad
m=6nk.
\]
For Gaussian sensing, \(a_i\) and \(b_i\) have independent standard
normal entries. For Rademacher sensing, their entries are independent
random signs. For weak-tail sensing, Gaussian vectors are decomposed
into components inside and outside
\(\operatorname{range}(M_\star)\), and the orthogonal component is
scaled by \(0.35\). This produces an anisotropic stress test with
reduced visibility in slack directions. The target subspace is used
only to generate the synthetic operator and is not supplied to any
optimization method.

The formal grid is
\[
\begin{aligned}
(n,r_\star,k)\in\{&
(50,2,5),(75,3,6),(100,3,8),\\
&(125,4,10),(150,5,12)\}.
\end{aligned}
\]
We use seeds \(15,66,102,999,1325\), giving \(25\) matched
configuration--seed instances per operator and \(75\) in total.
Measurement perturbations satisfy
\[
y_{w_j}
=
\mathcal A(M_\star)+w_j,
\qquad
\|w_j\|_2
=
0.08\|\mathcal A(M_\star)\|_2,
\]
where \(j=1,2,3\) and each \(w_j\) is an independently sampled
Gaussian direction normalized to the prescribed magnitude.

\paragraph{Non-oracle protocols.}
Both formal protocols construct their initializations only from
observed measurements. Under E2E, the clean and perturbed runs use
separately constructed observed-data initializations. In particular,
Algorithm~1 and Plain use
\[
\operatorname{SpecInit}(\mathcal A^\ast y_0)
\quad\text{and}\quad
\operatorname{SpecInit}(\mathcal A^\ast y_{w_j}),
\]
respectively. Hence E2E includes sensitivity introduced by the
initialization map.

Under Shared observed-spectral initialization, both members of each
clean--perturbed pair use the same initialization constructed from
\(y_{w_j}\). This conditions out initialization differences and
isolates the subsequent optimization trajectory. E2E is the primary
protocol, while Shared is a controlled non-oracle comparison.

\paragraph{Evaluation metrics.}
Let \(M_{p,s}^{a}(w_j)\) and \(M_{p,s}^{a}(0)\) denote the polished
critical points produced by method \(a\), protocol \(p\), and matched
instance \(s\). Define
\[
d_{p,s,j}^{a}
=
\frac{
\|M_{p,s}^{a}(w_j)-M_{p,s}^{a}(0)\|_F
}{
\|w_j\|_2
}
\]
and
\[
\mathsf R_{p,s}^{a}
=
\operatorname{median}_{j=1,2,3}
d_{p,s,j}^{a}.
\]
The relative paired-sensitivity ratio is
\[
\mathsf Q_{p,s}^{a}
=
\frac{\mathsf R_{p,s}^{a}}
     {\mathsf R_{p,s}^{\rm Plain}}.
\]
For the primary noisy-data critical point \(M_s^a\), the relative
reconstruction error is
\[
\mathsf E_{{\rm rel},s}^{a}
=
\frac{\|M_s^a-M_\star\|_F}
     {\|M_s^{\rm Plain}-M_\star\|_F}.
\]
Values below one favor method \(a\). The primary noisy-data runs are
the same under E2E and Shared, so their
\(\mathsf E_{\rm rel}\) values coincide.

All formal runs are polished on \(f_w\) using tolerance
\(\varepsilon_{\rm crit}=0.025\). Every Algorithm~1/Plain matched row
passes the paired criticality check for all three perturbations under
both protocols. All reported primary solutions also satisfy
\(\lambda_k(XX^\top)>10^{-8}\).

\paragraph{Optimization settings.}
Algorithm~1 and Plain use learning rate \(0.035\) and at most \(220\)
native iterations. Algorithm~1 uses a \(25\%\) guidance window with
cubic decay, inverse-trace weight \(0.003\), Rayleigh-gradient weight
\(0.02\), correction weight \(0.08\), sketch size \(8\), correction
period \(10\), and Gram damping fraction \(0.05\).

L1-SubGM~\cite{ma2023global} uses step size \(0.04\), decay \(0.99\),
and initialization scale \(0.05\). ScaledSM~\cite{tong2021scaledsm}
uses scale parameter \(0.03\) and decay \(0.99\).
GNMR~\cite{zilber2022gnmr} uses \(40\) LSQR iterations per
Gauss--Newton step. Since these methods have different native
objectives, all their reported solutions are subsequently polished
on the common squared objective \(f_w\).

\begin{table}[t]
\centering
\small
\setlength{\tabcolsep}{3.5pt}
\renewcommand{\arraystretch}{1.05}
\begin{tabular}{@{}lr@{}}
\toprule
Parameter & Value \\
\midrule
Measurements & \(m=6nk\) \\
Formal seeds & \(15,66,102,999,1325\) \\
Native iterations & at most \(220\) \\
Algorithm~1/Plain learning rate & \(0.035\) \\
Relative perturbation norm & \(0.08\) \\
Paired perturbations & \(3\) \\
Guidance fraction & \(0.25\) \\
Guidance decay power & \(3\) \\
Normal sketch size & \(8\) \\
Normal correction period & \(10\) \\
Common polishing steps & at most \(500\) \\
Criticality tolerance & \(0.025\) \\
Finite-RIP probes & \(24\) \\
\bottomrule
\end{tabular}
\caption{Common settings for the formal matrix-sensing experiments.}
\label{tab:app-formal-settings}
\end{table}

\paragraph{Algorithm~1 versus Plain.}
Table~\ref{tab:app-configuration-results} reports configuration-level
medians over five seeds. The two \(\mathsf Q\) columns correspond to
E2E and Shared observed-spectral initialization. The reconstruction
ratio is common to both protocols because their primary noisy-data
runs are identical.

\begin{table}[t]
\centering
\small
\setlength{\tabcolsep}{3pt}
\renewcommand{\arraystretch}{1.03}
\begin{tabular}{@{}llrrr@{}}
\toprule
Operator & \((n,r_\star,k)\)
& \(\mathsf Q_{\rm E2E}\)
& \(\mathsf Q_{\rm Shared}\)
& \(\mathsf E_{\rm rel}\) \\
\midrule
Gauss. & \((50,2,5)\)   & .8323 & .8161 & .9375 \\
Gauss. & \((75,3,6)\)   & .8269 & .8123 & .9162 \\
Gauss. & \((100,3,8)\)  & .7883 & .7780 & .8874 \\
Gauss. & \((125,4,10)\) & .7651 & .7516 & .8864 \\
Gauss. & \((150,5,12)\) & .7483 & .7378 & .8739 \\
\midrule
Radem. & \((50,2,5)\)   & .8031 & .7991 & .9322 \\
Radem. & \((75,3,6)\)   & .8064 & .7898 & .9338 \\
Radem. & \((100,3,8)\)  & .7884 & .7708 & .8668 \\
Radem. & \((125,4,10)\) & .7658 & .7504 & .8487 \\
Radem. & \((150,5,12)\) & .7459 & .7333 & .8347 \\
\midrule
Weak-tail & \((50,2,5)\)   & .1722 & .1171 & .5960 \\
Weak-tail & \((75,3,6)\)   & .1768 & .1337 & .5512 \\
Weak-tail & \((100,3,8)\)  & .1827 & .1128 & .6250 \\
Weak-tail & \((125,4,10)\) & .1865 & .1119 & .6387 \\
Weak-tail & \((150,5,12)\) & .2683 & .1296 & .6468 \\
\bottomrule
\end{tabular}
\caption{Configuration-level medians for
Algorithm~\ref{alg:decayed-balanced-guidance} relative to Plain.
Each row aggregates five seeds. Values below one favor
Algorithm~\ref{alg:decayed-balanced-guidance}.}
\label{tab:app-configuration-results}
\end{table}

All configuration-level medians satisfy
\(\mathsf Q<1\) and \(\mathsf E_{\rm rel}<1\).
At the instance level, both inequalities hold in \(23/25\),
\(25/25\), and \(25/25\) cases for Gaussian, Rademacher, and
weak-tail sensing, respectively, under both protocols. Shared gives
uniformly smaller configuration-level \(\mathsf Q\) values because it
conditions out initialization sensitivity. Its weak-tail gains should
therefore be read as controlled evidence under the anisotropic stress
test, not as evidence that weak-tail sensing is easier.

\begin{figure}[t]
\centering
\includegraphics[width=\linewidth]
{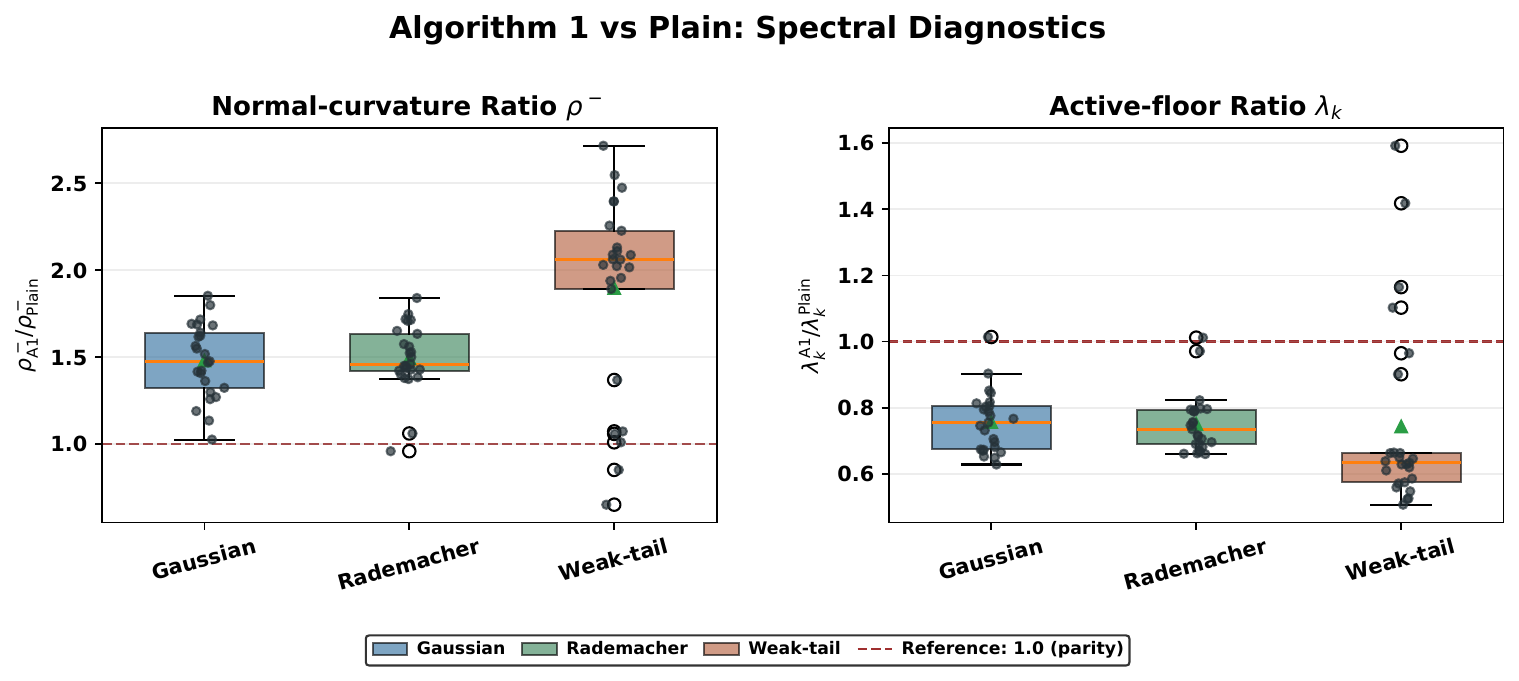}
\caption{Terminal spectral diagnostics for
Algorithm~\ref{alg:decayed-balanced-guidance} relative to Plain.
The panels show
\(\rho_{\rm A1}^{-}/\rho_{\rm Plain}^{-}\) and
\(\lambda_k^{\rm A1}/\lambda_k^{\rm Plain}\), respectively.
Each operator pools \(25\) matched instances. Smaller values favor
Algorithm~1 for \(\rho^-\), whereas larger values favor it for
\(\lambda_k\). E2E and Shared use the same primary noisy-data runs,
so these diagnostics coincide and are shown only once.}
\label{fig:common-rho-lambda}
\end{figure}

Figure~\ref{fig:common-rho-lambda} complements the direct paired
measurements with terminal solution geometry. The median
\(\rho^-\) ratios are \(1.476\), \(1.459\), and \(2.061\), while the
median active-floor ratios are \(0.755\), \(0.735\), and \(0.634\)
for Gaussian, Rademacher, and weak-tail sensing. Thus, the reduction
in \(\mathsf Q\) is not accompanied by uniform improvement in either
terminal diagnostic. These quantities describe local geometry at the
polished critical point; neither is intended to replace the directly
measured paired sensitivity.

\paragraph{External comparators.}
All external methods are evaluated relative to Plain using the same
metrics and common \(f_w\)-polishing. E2E recomputes the
observed-spectral initialization for each paired perturbation, whereas
Shared reuses a common observed-spectral initialization. Consequently,
only \(\mathsf Q\) changes between protocols; reconstruction,
\(\rho^-\), and \(\lambda_k\) are identical up to numerical precision.
Figures~\ref{fig:app-l1-subgm-diagnostics}--%
\ref{fig:app-gnmr-diagnostics} report the corresponding operator-wise
distributions. All displayed paired solutions pass the common
\(f_w\)-criticality check.

\begin{figure*}[p]
\centering\includegraphics[width=.68\textwidth]{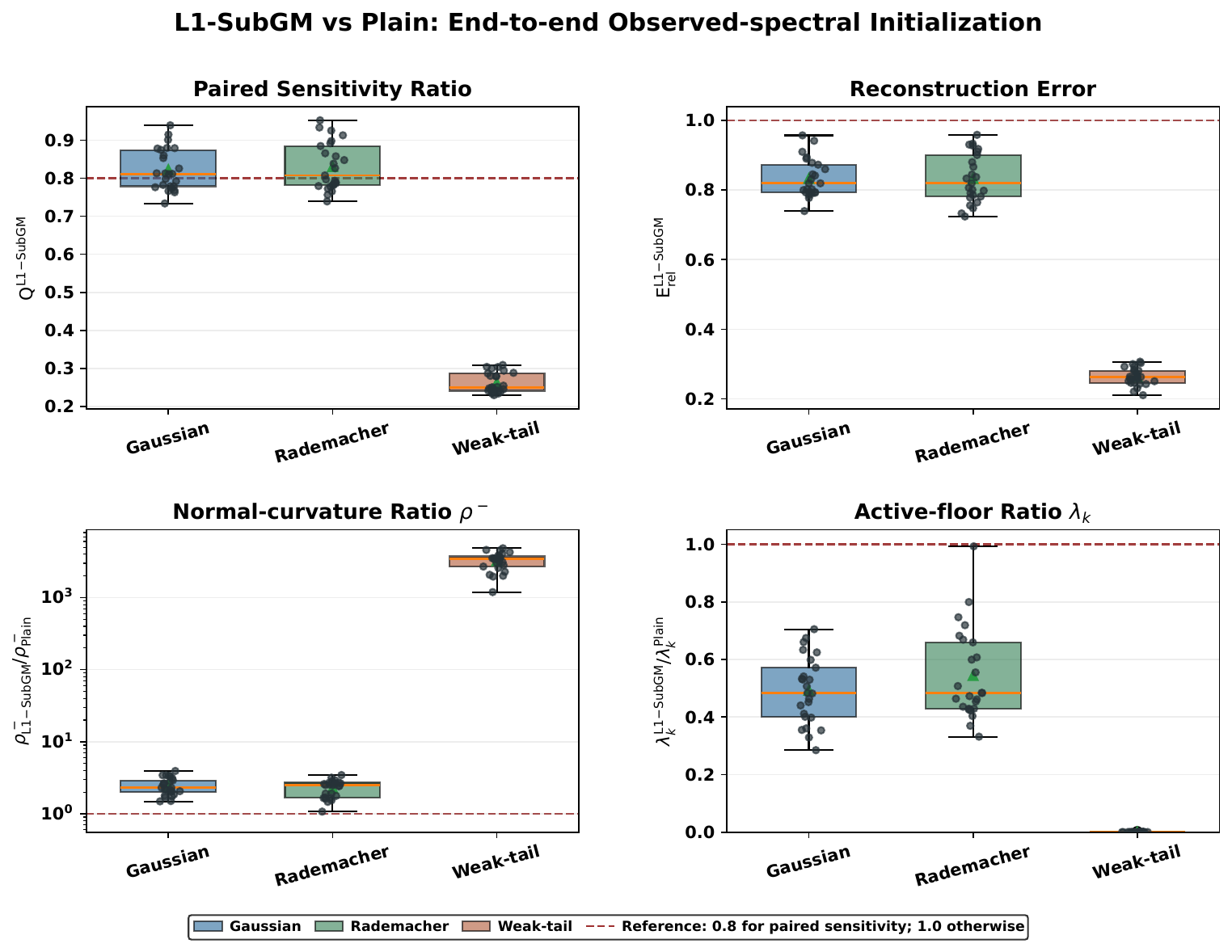}
\par\vspace{-1mm}\includegraphics[width=.68\textwidth]{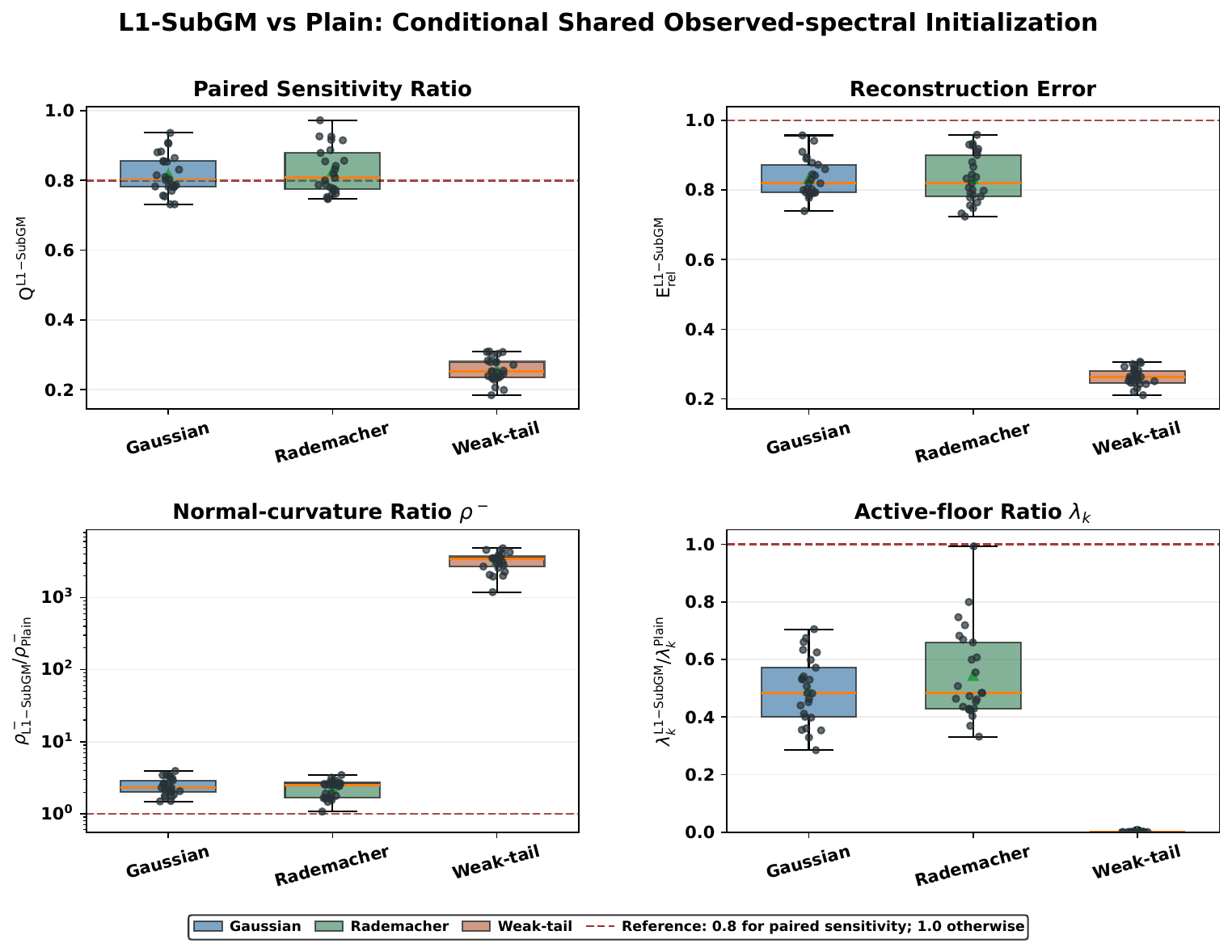}
\caption{L1-SubGM relative to Plain under E2E (top) and Shared
(bottom). From upper left to lower right, the panels show
\(\mathsf Q\), \(\mathsf E_{\rm rel}\), the \(\rho^-\) ratio, and the
\(\lambda_k\) ratio. The dashed references are \(\mathsf Q=0.8\) and
one for the other metrics. The \(\rho^-\) panel uses a logarithmic
scale because of the weak-tail range.}
\label{fig:app-l1-subgm-diagnostics}
\end{figure*}

L1-SubGM~\cite{ma2023global} has pooled median \(\mathsf Q\) values
\(0.779\) under E2E and \(0.776\) under Shared, together with pooled
\(\mathsf E_{\rm rel}=0.791\). Both metrics are below one in all
\(75\) instances, with the largest reductions under weak-tail
sensing. Its terminal spectral ratios nevertheless move in the
opposite directions, especially for weak-tail sensing, illustrating
that paired sensitivity is not determined by either diagnostic alone.

\begin{figure*}[p]
\centering
\includegraphics[width=.68\textwidth]
{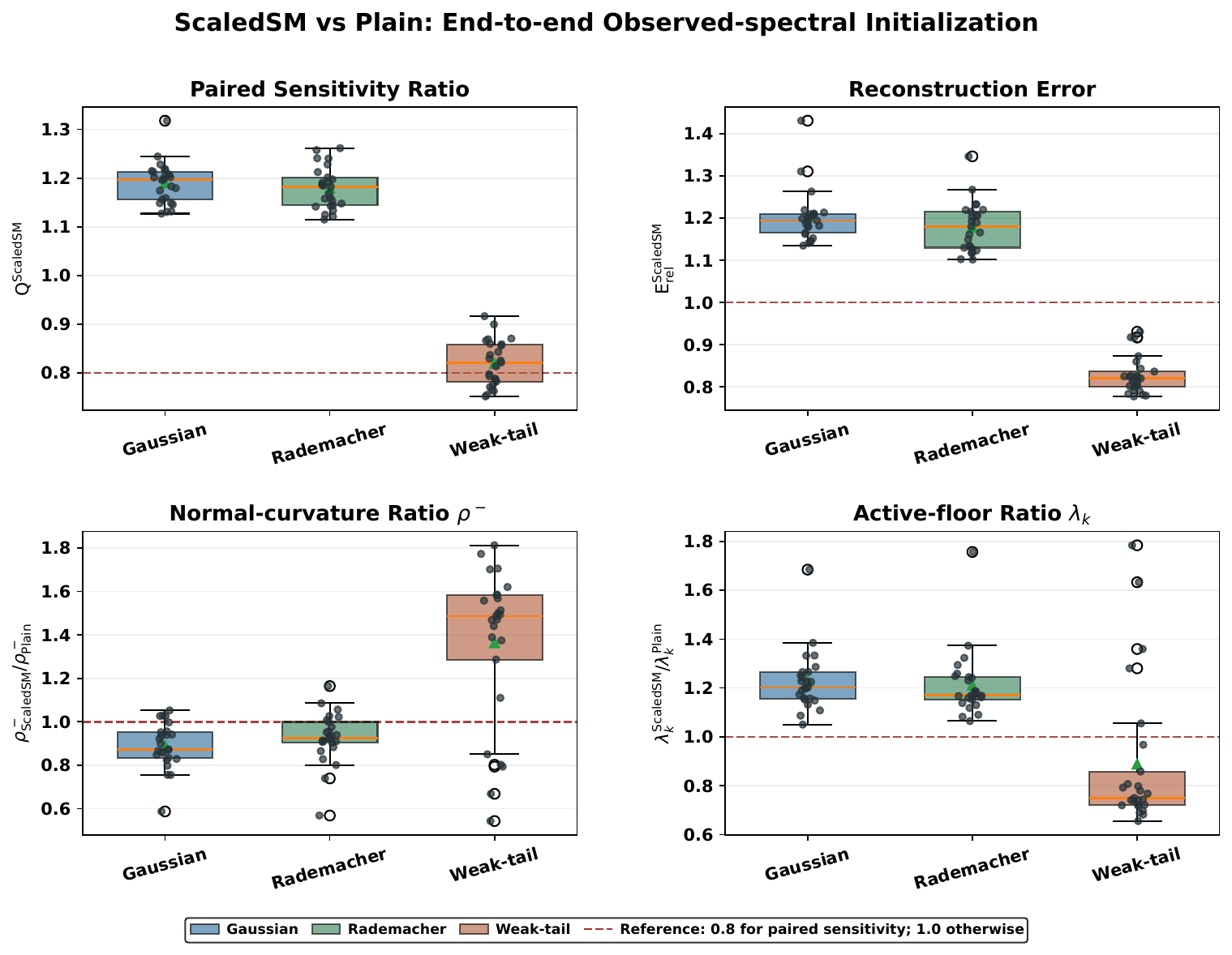}
\par\vspace{-1mm}
\includegraphics[width=.68\textwidth]{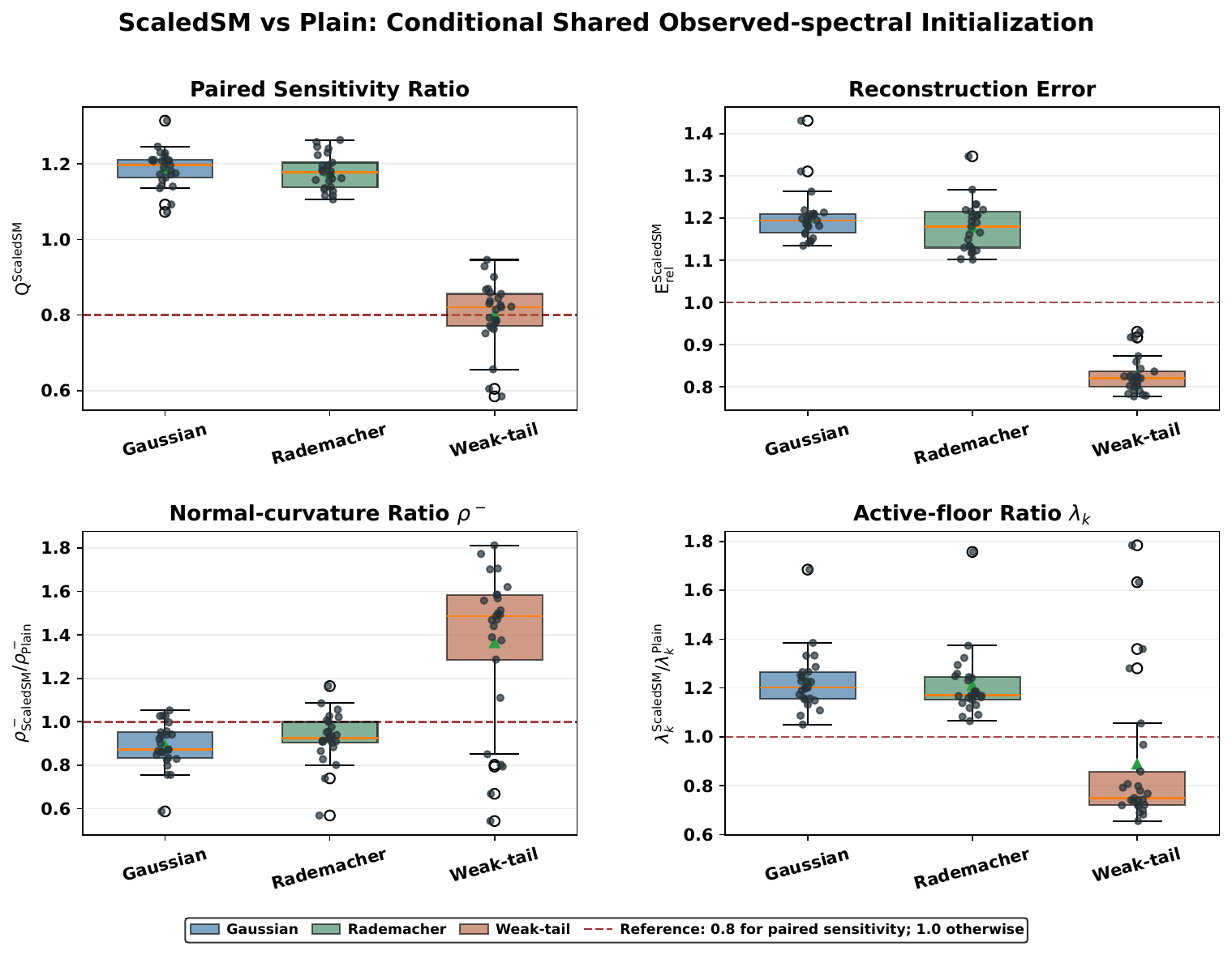}
\caption{ScaledSM relative to Plain under E2E (top) and Shared
(bottom), using the same four diagnostics and reference lines as
Figure~\ref{fig:app-l1-subgm-diagnostics}. Each operator contains
\(25\) matched configuration--seed instances.}
\label{fig:app-scaledsm-diagnostics}
\end{figure*}

ScaledSM~\cite{tong2021scaledsm} has pooled median \(\mathsf Q\)
values \(1.148\) and \(1.155\) under E2E and Shared, respectively,
and pooled \(\mathsf E_{\rm rel}=1.146\). It lowers both paired
sensitivity and reconstruction error in all \(25\) weak-tail
instances, but in none of the \(50\) Gaussian or Rademacher
instances. Its benefit is therefore specific to the anisotropic
stress test rather than uniform across sensing operators.

\begin{figure*}[p]
\centering
\includegraphics[width=.68\textwidth]{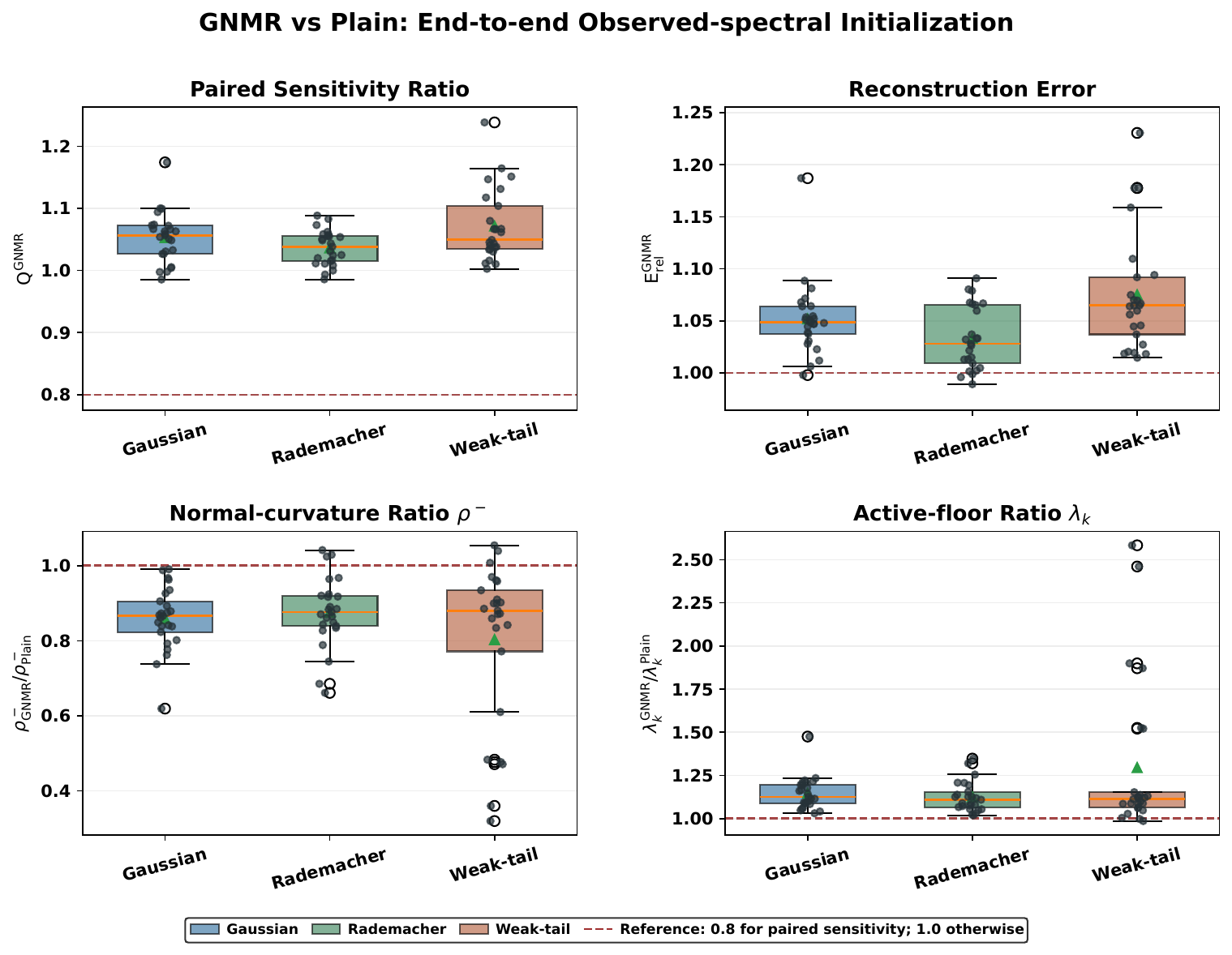}
\par\vspace{-1mm}
\includegraphics[width=.68\textwidth]
{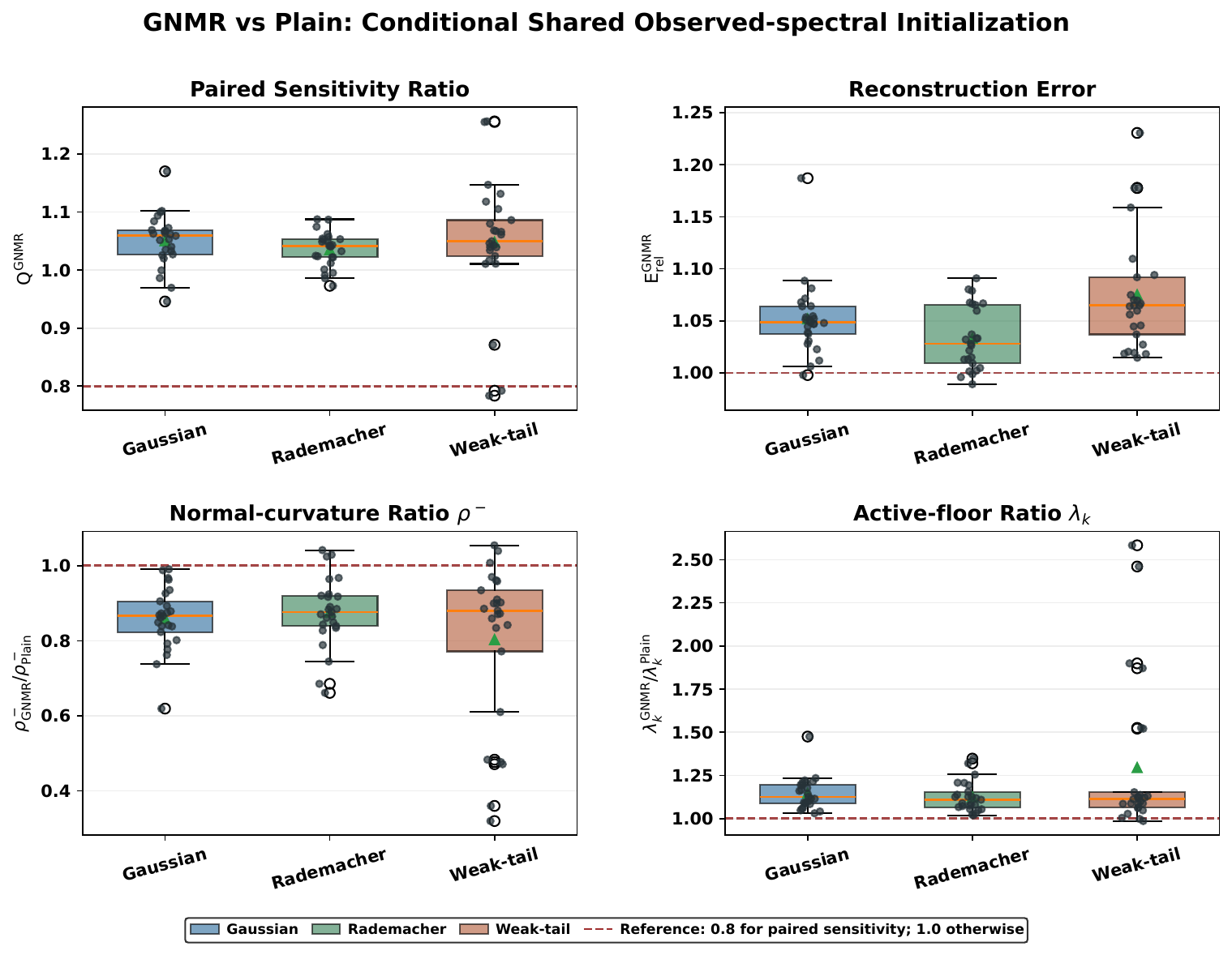}
\caption{GNMR relative to Plain under E2E (top) and Shared (bottom),
using the same four diagnostics and reference lines as
Figure~\ref{fig:app-l1-subgm-diagnostics}. Each operator contains
\(25\) matched configuration--seed instances.}
\label{fig:app-gnmr-diagnostics}
\end{figure*}

GNMR~\cite{zilber2022gnmr} has pooled median
\(\mathsf Q\simeq1.049\) under both protocols and pooled
\(\mathsf E_{\rm rel}=1.048\), with only one joint improvement among
the \(75\) instances. Although its median \(\rho^-\) ratios are below
one and its median active-floor ratios are above one, these favorable
terminal directions do not produce a consistent reduction in paired
sensitivity.

Taken together, the comparator results show that terminal
\(\rho^-\) and \(\lambda_k\) ratios do not by themselves rank methods
by paired sensitivity. The theoretical certificates use these
quantities jointly with the remaining local and operator-dependent
conditions; they do not assert cross-method monotonicity in either
terminal ratio alone. Accordingly, \(\mathsf Q\) remains the direct
experimental measure, while \(\rho^-\) and \(\lambda_k\) provide
complementary geometric diagnostics.

\paragraph{Large-scale experiments.}
We evaluate whether the paired-sensitivity improvements persist as the
ambient dimension increases. We use
\[
(n,r_\star,k)\in
\{(250,4,10),(500,5,12),(1000,5,12)\},
\]
with Gaussian, Rademacher, and weak-tail sensing, five seeds per
configuration, and \(m=6nk\) measurements. The relative measurement-noise
level and paired-perturbation norm are both \(0.08\), and each paired
critical-point sensitivity is aggregated over three perturbations.
Algorithm~\ref{alg:decayed-balanced-guidance} and Plain use at most
\(220\) native iterations, followed by the same \(f_w\)-polishing
procedure for at most \(500\) iterations. The criticality tolerance is
\(0.025\).

We evaluate both the primary E2E protocol and the controlled Shared
observed-spectral protocol. In E2E, the initialization changes with the
perturbed observations. Shared instead uses the same observed-spectral
initialization for the two objectives in each matched pair, thereby
conditioning out initialization sensitivity. The main noisy-data runs
are identical under the two protocols; hence their reconstruction and
terminal spectral diagnostics coincide, while their relative paired
sensitivities \(\mathsf Q\) can differ.

\begin{table*}[t]
\centering
\scriptsize
\setlength{\tabcolsep}{3.0pt}
\renewcommand{\arraystretch}{1.06}
\begin{tabular}{llrrrrrrrr}
\toprule
\(n\) & Operator &
\(\mathsf Q_{\rm E2E}\) &
\(\Delta_{\rm E2E}\) &
\(\mathsf Q_{\rm Sh}\) &
\(\Delta_{\rm Sh}\) &
\(\mathsf E_{\rm rel}\) &
\(\rho^-_{\rm Plain}\) &
\(\rho^-_{\rm A1}\) &
Joint \\
\midrule
250  & Gauss.    & 0.715 & 28.5\% & 0.710 & 29.0\% & 0.821 & 0.2624 & 0.4093 & 5/5 \\
250  & Radem.    & 0.712 & 28.8\% & 0.706 & 29.4\% & 0.824 & 0.2543 & 0.3954 & 5/5 \\
250  & Weak-tail & 0.307 & 69.3\% & 0.125 & 87.5\% & 0.668 & 0.0077 & 0.0084 & 5/5 \\
\midrule
500  & Gauss.    & 0.713 & 28.7\% & 0.703 & 29.7\% & 0.823 & 0.1826 & 0.2850 & 5/5 \\
500  & Radem.    & 0.722 & 27.8\% & 0.702 & 29.8\% & 0.834 & 0.1951 & 0.3058 & 5/5 \\
500  & Weak-tail & 0.385 & 61.5\% & 0.120 & 88.0\% & 0.586 & 0.0029 & 0.0061 & 5/5 \\
\midrule
1000 & Gauss.    & 0.709 & 29.1\% & 0.684 & 31.6\% & 0.835 & 0.1287 & 0.1950 & 5/5 \\
1000 & Radem.    & 0.720 & 28.0\% & 0.684 & 31.6\% & 0.840 & 0.1384 & 0.1934 & 5/5 \\
1000 & Weak-tail & 0.570 & 43.0\% & 0.157 & 84.3\% & 0.584 & 0.0031 & 0.0041 & 5/5 \\
\bottomrule
\end{tabular}
\caption{Formal large-scale comparison of
Algorithm~\ref{alg:decayed-balanced-guidance} with Plain over five
seeds per \((n,\mathrm{operator})\) cell. The quantities
\(\mathsf Q_{\rm E2E}\) and \(\mathsf Q_{\rm Sh}\) are the median
instancewise relative paired critical-point sensitivities under E2E
and Shared observed-spectral initialization. The corresponding
sensitivity reductions are
\(\Delta_{\rm E2E}=100(1-\mathsf Q_{\rm E2E})\) and
\(\Delta_{\rm Sh}=100(1-\mathsf Q_{\rm Sh})\).
The reconstruction ratio is common to both protocols because their
main noisy-data runs are identical. The two \(\rho^-\) columns report
the median terminal normal-curvature ratios of Plain and Algorithm~1;
these values are likewise common to the two protocols up to numerical
precision. Joint counts seeds for which both
\(\mathsf Q<1\) and \(\mathsf E_{\rm rel}<1\); all cells have \(5/5\)
under both protocols.}
\label{tab:large-scale-formal}
\end{table*}

\begin{figure*}[t]
\centering
\includegraphics[width=.78\textwidth]
{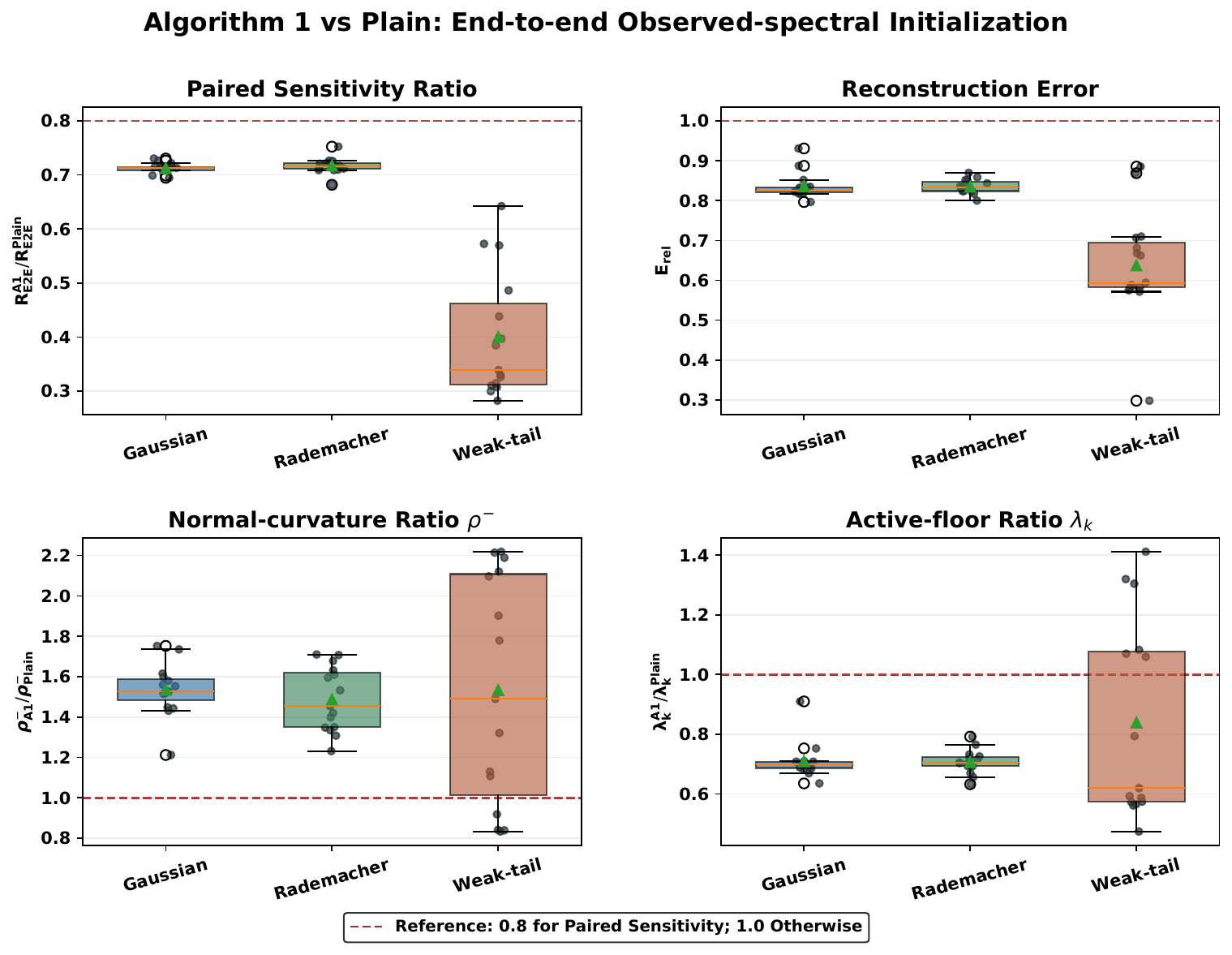}
\caption{Formal large-scale comparison of
Algorithm~\ref{alg:decayed-balanced-guidance} with Plain.
Each row contains the five seeds for one fixed
\((n,\mathrm{operator})\) cell.
The quantities \(\mathsf Q_{\rm E2E}\) and \(\mathsf Q_{\rm Sh}\)
are the median instancewise relative paired critical-point
sensitivities under E2E and Shared observed-spectral initialization.
Their corresponding sensitivity reductions are
\(\Delta_{\rm E2E}=100(1-\mathsf Q_{\rm E2E})\) and
\(\Delta_{\rm Sh}=100(1-\mathsf Q_{\rm Sh})\).
The reconstruction ratio \(\mathsf E_{\rm rel}\) and the terminal
\(\rho^-_{\rm Plain}\) and \(\rho^-_{\rm A1}\) values are common to
both protocols because their main noisy-data runs are identical.
Joint counts seeds for which both \(\mathsf Q<1\) and
\(\mathsf E_{\rm rel}<1\); every row has \(5/5\) under both protocols.
Pooling the three dimensions gives \(15/15\) joint improvements for
each operator and \(45/45\) per protocol.}
\label{fig:large-scale-e2e}
\end{figure*}

\begin{figure*}[t]
\centering
\includegraphics[width=.78\textwidth]
{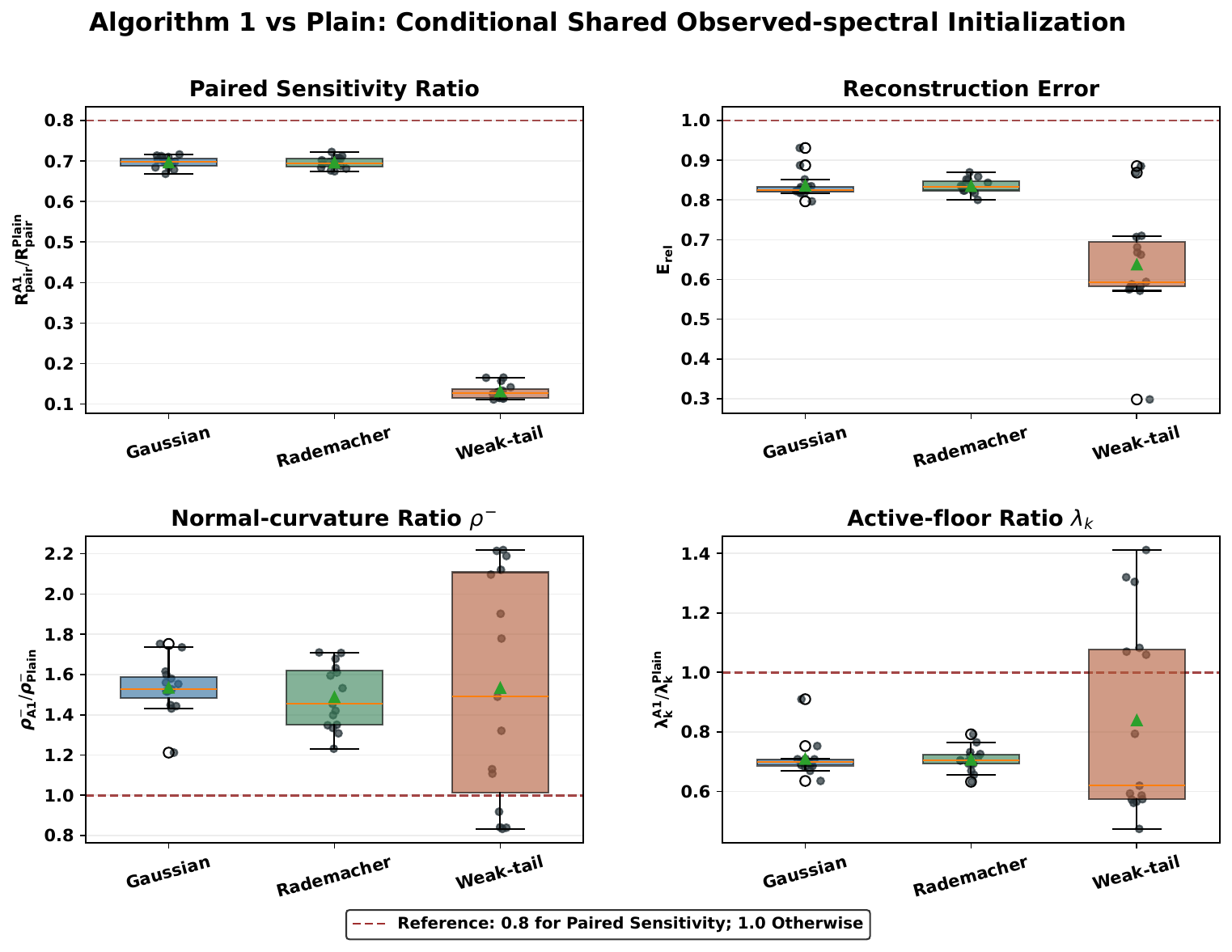}
\caption{Formal large-scale comparison under Shared observed-spectral
initialization. Only the paired-sensitivity panel changes relative to
Figure~\ref{fig:large-scale-e2e}; reconstruction and terminal spectral
diagnostics coincide because the two protocols use the same main
noisy-data solutions.}
\label{fig:large-scale-shared}
\end{figure*}

Table~\ref{tab:large-scale-formal} shows that the improvements persist
through \(n=1000\). Under E2E, the Gaussian and Rademacher median
\(\mathsf Q\) values remain between \(0.709\) and \(0.722\), giving
approximately \(28\%\)--\(29\%\) lower paired critical-point
sensitivity than Plain at every dimension. For weak-tail sensing,
\(\mathsf Q_{\rm E2E}\) increases from \(0.307\) at \(n=250\) to
\(0.570\) at \(n=1000\), but remains below one and therefore still
gives a \(43.0\%\) reduction at the largest scale.

Conditioning out initialization sensitivity produces smaller Shared
ratios. The Gaussian and Rademacher values decrease to approximately
\(0.684\) at \(n=1000\), while the weak-tail values remain between
\(0.120\) and \(0.157\). Shared is retained as a controlled diagnostic,
rather than replacing the primary E2E result.

Reconstruction also improves uniformly. The median
\(\mathsf E_{\rm rel}\) values lie between \(0.821\) and \(0.840\) for
the isotropic operators and between \(0.584\) and \(0.668\) for
weak-tail sensing. Consequently, both \(\mathsf Q<1\) and
\(\mathsf E_{\rm rel}<1\) hold in all \(45\) matched instances under
each protocol.

The lower panels of Figures~\ref{fig:large-scale-e2e} and
\ref{fig:large-scale-shared} provide complementary terminal spectral
diagnostics. The pooled median normal-curvature ratios are \(1.527\),
\(1.454\), and \(1.489\), while the corresponding active-floor ratios
are \(0.699\), \(0.704\), and \(0.619\) for Gaussian, Rademacher, and
weak-tail sensing. These terminal quantities do not vary monotonically
with the directly measured paired sensitivity and are therefore
reported as diagnostics rather than substituted for \(\mathsf Q\).

All main solutions remain full rank, with minimum
\(\lambda_k=0.0192\). The largest final relative gradient norm is
\(0.0249\), below the prescribed \(0.025\) tolerance, and every paired
solution used to compute \(\mathsf Q\) passes the same criticality
check.

\subsection{LoRA Fine-Tuning Experiments}
\label{app:lora-experiments}

\paragraph{From Matrix-Sensing Guidance to LoRA}

We evaluate whether the guidance principle developed in Algorithm~1 can
improve low-rank adaptation of a pretrained language model. Algorithm~1 is
motivated by the matrix-sensing robustness ratio
\[
    \rho_k^-=\frac{b^-}{\lambda_k(M)},
\]
which identifies two complementary mechanisms: protecting the active
spectral floor in the denominator and controlling leakage outside the active
subspace in the numerator. The LoRA experiment is a practical adaptation of
this principle rather than a direct application of the matrix-sensing
theorems, since LoRA updates are rectangular matrices and do not possess the
same positive-semidefinite manifold geometry.

For layer $\ell$, LoRA parameterizes the trainable update as
\[
    \Delta W_\ell
    =
    \frac{\alpha}{r}B_\ell A_\ell,
\]
where $r$ is the nominal LoRA rank. Let
$s_{\ell,1}\geq\cdots\geq s_{\ell,r}$ denote the singular values of
$\Delta W_\ell$. At the beginning of the guidance window, we freeze a
layer-specific active rank $q_\ell$ as the smallest rank explaining at least
$95\%$ of the spectral energy. Layers for which $q_\ell=r$ are excluded from
spectral guidance.

For the retained active directions, we use the scale-normalized inverse-trace
penalty
\[
\Omega_{\ell,\mathrm{floor}}
=
\left(
    \frac{1}{q_\ell}\sum_{i=1}^{q_\ell}s_{\ell,i}^{2}
    +\epsilon
\right)
\left(
    \frac{1}{q_\ell}\sum_{i=1}^{q_\ell}
    \frac{1}{s_{\ell,i}^{2}+\epsilon}
\right)-1.
\]
This term places stronger pressure on weak directions within the frozen
active spectrum. To avoid improving the active floor by simultaneously
inflating unused directions, we add the normalized tail penalty
\[
\Omega_{\ell,\mathrm{tail}}
=
\frac{
    \sum_{i=q_\ell+1}^{r}s_{\ell,i}^{2}
}{
    \operatorname{stopgrad}
    \left(\sum_{i=1}^{q_\ell}s_{\ell,i}^{2}\right)+\epsilon
}.
\]
The guided objective during the active window is
\[
\mathcal L_{\mathrm{guided}}
=
\mathcal L_{\mathrm{task}}
+
c_t
\left(
10^{-2}\,\overline{\Omega}_{\mathrm{floor}}
+
10^{-3}\,\overline{\Omega}_{\mathrm{tail}}
\right),
\]
where the bars denote averages over guided LoRA layers. Guidance begins after
$10\%$ of training, remains active for $20\%$ of the training trajectory, and
uses a short fade-in followed by cubic decay. Its contribution is clipped to
at most $10\%$ of the detached task loss.

Following the damped Gram preconditioning in Algorithm~1, scaled-active LoRA
also preconditions the factor gradients as
\begin{equation}
\begin{aligned}
\widetilde{\nabla}_{B_\ell}
&=
\nabla_{B_\ell}
\left(A_\ell A_\ell^\top+\xi I\right)^{-1},
\\
\widetilde{\nabla}_{A_\ell}
&=
\left(B_\ell^\top B_\ell+\xi I\right)^{-1}
\nabla_{A_\ell}.
\end{aligned}
\label{eq:scaled-lora-preconditioning}
\end{equation}
We use damping $\xi=0.1$, a damping floor of $10^{-4}$, and cap the
preconditioning gain at $10$. The normal-Rayleigh guidance term from
Algorithm~1 is disabled in this experiment, because a rectangular LoRA
update has no direct theorem-preserving analogue of the positive-semidefinite
normal projector. Consequently, scaled-active LoRA should be interpreted as
an active-floor and tail-guided method combined with damped LoRA
preconditioning.

The plain baseline uses the same initialization, data order, optimizer,
training seed, and objective perturbation, but applies neither spectral
guidance nor ScaledLoRA preconditioning. After the guidance window, the
spectral penalties are removed and the endpoint is polished on the original
training objective. In the reported legacy protocol, the scaled-active branch
retains its damped preconditioner during this polish.

\paragraph{Fixed Logit-Objective Perturbation}

To construct an analogue of matrix-sensing measurement noise, we perturb the
training objective rather than the validation inputs. For training example
$n$ and perturbation seed $s$, the perturbed objective is
\[
\mathcal L_{\varepsilon,s}(\theta)
=
\frac{1}{N}\sum_{n=1}^{N}
\operatorname{CE}
\left(
f_\theta(x_n)+\varepsilon u_{s,n},y_n
\right),
\]
where $u_{s,n}$ is generated deterministically from the pair consisting of
the perturbation seed and training-example identifier. It satisfies
\[
    \mathbf 1^\top u_{s,n}=0,
    \qquad
    \|u_{s,n}\|_2=1,
\]
and is fixed across all epochs. Labels are unchanged and no perturbation is
applied during validation. Therefore, noisy validation accuracy refers to the
clean-validation accuracy of an endpoint trained under the perturbed
objective; it is not accuracy under corrupted validation text.

The prespecified experiment uses $\varepsilon=0.25$ and three fixed
perturbation seeds, $10000$, $10001$, and $10002$. We additionally evaluate
$\varepsilon\in\{0.05,0.1,0.2,0.3,0.4,0.5\}$ without changing any other
training setting.

\paragraph{Experimental configuration.}
We fine-tune \texttt{roberta-base}
\citep{liu2019roberta} on the SST-2 task
\citep{socher2013recursive}, using the version distributed
in GLUE \citep{wang2018glue}. Following LoRA
\citep{hu2022lora}, adapters are inserted into all attention
query and value projections with rank \(r=8\), scaling
\(\alpha=16\), and zero dropout. The classification head is trainable in both
conditions. We use a maximum sequence length of $128$, training batch size
$32$, evaluation batch size $64$, and train for two epochs.

Both methods use SGD with learning rate $10^{-2}$, momentum $0.9$, zero
weight decay, and a learning-rate warm-up ratio of $0.06$. The legacy
protocol does not use a separate shared warm-up stage. We evaluate training
seeds $0$, $1$, and $2$. For each training seed, the plain and scaled-active
runs share their initialization and are evaluated using the same three fixed
objective-perturbation directions.

For aggregation, noisy validation accuracy is first averaged over the three
perturbation directions within each training seed. We then report the mean
and sample standard deviation over the three training seeds. The perturbation
directions are not treated as independent training repetitions.

\paragraph{Results}

At the prespecified perturbation magnitude $\varepsilon=0.25$, plain LoRA
achieves a clean validation accuracy of
$91.093\pm0.331\%$, whereas scaled-active LoRA achieves
$92.087\pm0.303\%$. This corresponds to a paired clean-accuracy improvement
of $0.994\pm0.066$ percentage points.

For endpoints trained under the fixed perturbed objective, plain LoRA
achieves $91.106\pm0.277\%$ validation accuracy and scaled-active LoRA
achieves $92.125\pm0.268\%$. The resulting paired noisy-accuracy improvement
is $1.019\pm0.022$ percentage points. Scaled-active outperforms plain LoRA
for each of the three training seeds.

Figure~\ref{fig:lora-epsilon-sensitivity} evaluates sensitivity to the
perturbation magnitude. Across the six tested magnitudes, the scaled-active
advantage remains between $0.943$ and $1.007$ percentage points. It is
positive for every training-seed and perturbation-magnitude combination.
These $18$ comparisons reuse the same three training seeds and perturbation
directions and are therefore not interpreted as $18$ independent
replications.

The two accuracy curves are approximately parallel. Thus, the experiment
supports a consistent improvement in absolute validation accuracy at the
perturbed training endpoint, but does not establish a smaller
clean-to-noisy accuracy degradation.

\begin{figure*}[t]
    \centering   \includegraphics[width=\textwidth]   {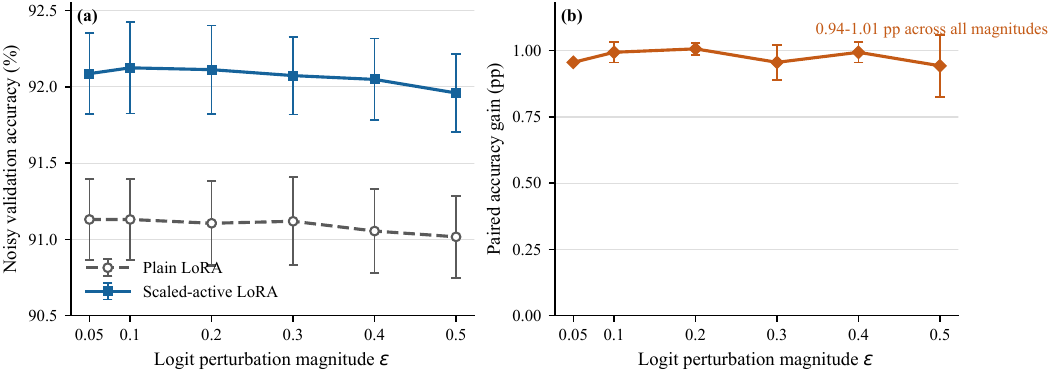}
    \caption{
    Robust validation performance under fixed unit-$\ell_2$
    logit-objective perturbations.
    Panel (a) reports noisy validation accuracy for plain and scaled-active
    LoRA. Panel (b) reports the paired scaled-active-minus-plain accuracy
    difference. Each point is the mean over three training seeds, after first
    averaging three fixed perturbation directions within each seed. Error bars
    denote sample standard deviations across training seeds.
    }
    \label{fig:lora-epsilon-sensitivity}
\end{figure*}

\section{Missing Details and Proofs}
\subsection{More Definitions}
\label{app:defs}

\begin{definition}[Second-order stationary point]
\label{def:SOSP}
A point \(X_w\in\RR^{n\times k}\) is a second-order stationary point
of \(f_w\) if
\[
\begin{gathered}
    \nabla f_w(X_w)=0,\\
    \nabla^2 f_w(X_w)[D,D]\ge0
    \quad\text{for every }D\in\RR^{n\times k}.
\end{gathered}
\]
We write \(M_w=X_wX_w^\top\).
\end{definition}

\begin{definition}[Restricted isometry]
\label{def:ri}
The operator \(\Aop\) satisfies \((\delta,s)\)-RIP if
\[
    (1-\delta)\norm{Z}_F^2
    \le \norm{\Aop(Z)}_2^2
    \le (1+\delta)\norm{Z}_F^2
\]
for every \(Z\in\SSym^n\) with \(\rank(Z)\le s\).
\end{definition}

\begin{assumption}[Unified \(2k\)-RIP]
\label{ass:unified-rip}
Unless stated otherwise, \(\Aop\) satisfies \((\delta,2k)\)-RIP for
a fixed \(\delta\in[0,1)\).  Since \(r_\star\le k\), this assumption
covers all rank-at-most-\(2k\) symmetric matrices used below, including
differences between rank-at-most-\(k\) PSD matrices.  We set
\(s_{\rm RIP}=2k\) throughout.

For \(M\succeq0\), let its nonzero eigenvalues be
\[
        \lambda_1(M)\ge\cdots\ge \lambda_{\rank(M)}(M)>0.
\]
When an effective active rank \(q\) is selected, \(P_q(M)\) denotes the
projector onto the top-\(q\) eigenspace and \(P_q^\perp=I-P_q\).  The tangent
space of the rank-\(q\) PSD manifold at a rank-\(q\) matrix with range
projector \(P\) is
\begin{equation}
    \label{eq:tangent-space}
        \Tcal(P)
        =
        \{\eta\in\SSym^n:\; P^\perp\eta P^\perp=0\}.
\end{equation}
Every \(\eta\in\Tcal(P)\) has rank at most \(2q\).
\end{assumption}

\subsection{Missing Details of Section~\ref{sec:full-rank-stability}}
\label{app:s4proofs}

\begin{lemma}[Factor Hessian identity]
\label{lem:hessian-identity}
Let \(M=XX^\top\).  For every \(D\in\RR^{n\times k}\),
\begin{equation}
    \label{eq:hessian-identity}
    \begin{aligned}
        \nabla^2 f_w(X)[D,D]
        &=
        \norm{\Aop(XD^\top+DX^\top)}_2^2 \\
        &\quad
        + 2\ip{G_w(M)}{DD^\top}.
    \end{aligned}
\end{equation}
Moreover, \(X\) is first-order critical iff
\begin{equation}
    \label{eq:foc-factor}
        G_w(M)X=0.
\end{equation}
\end{lemma}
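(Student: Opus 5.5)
The plan is a direct Taylor expansion of $f_w(X)=F_w(XX^\top)$ along a curve $X+tD$. Since $F_w$ is quadratic on $\SSym^n$, it is enough to write out $F_w(M+\Delta)$ exactly and then substitute the second-order expansion of $(X+tD)(X+tD)^\top$.

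\textbf{Step 1 (matrix-space expansion).} For any symmetric $\Delta$, we have exactly
\[
F_w(M+\Delta)=F_w(M)+\ip{G_w(M)}{\Delta}+\tfrac12\norm{\Aop(\Delta)}_2^2 .
\]
Here $G_w(M)=\Aop^\ast(\Aop(M-M_\star)-w)=H(M-M_\star)-N_w$, which matches the notation of Section~\ref{sec:problem-setting}.

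\textbf{Step 2 (factor expansion).} Write $(X+tD)(X+tD)^\top=M+t\eta+t^2DD^\top$ with $\eta=XD^\top+DX^\top$, and set $\Delta_t=t\eta+t^2DD^\top$. Substituting into Step 1 gives
\[
f_w(X+tD)=f_w(X)+t\ip{G_w(M)}{\eta}+t^2\Big(\ip{G_w(M)}{DD^\top}+\tfrac12\norm{\Aop(\eta)}_2^2\Big)+O(t^3).
\]

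\textbf{Step 3 (read off derivatives).} The second derivative at $t=0$ is $\nabla^2 f_w(X)[D,D]=\norm{\Aop(\eta)}_2^2+2\ip{G_w(M)}{DD^\top}$, which is \eqref{eq:hessian-identity}. For the first-order term, symmetry of $G_w(M)$ gives $\ip{G_w(M)}{XD^\top+DX^\top}=2\ip{G_w(M)X}{D}$, so $\nabla f_w(X)=2G_w(M)X$. Therefore $\nabla f_w(X)=0$ if and only if $G_w(M)X=0$, which is \eqref{eq:foc-factor}.

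There is no real obstacle here. The only points needing care are to keep the exact quadratic form of $F_w$ (so no remainder appears in matrix space) and to use the symmetry of $G_w(M)$ so the two cross terms of $\eta$ combine into $2G_w(M)X$.
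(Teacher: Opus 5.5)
Your proof is correct and is essentially the paper's argument: the paper notes that the first and second variations of $XX^\top$ along $D$ are $XD^\top+DX^\top$ and $2DD^\top$ and applies the chain rule to $F_w(XX^\top)$, and your exact quadratic expansion of $F_w$ along $X+tD$ carries out that computation explicitly. Your use of the symmetry of $G_w(M)$ to combine the two cross terms into $\nabla f_w(X)=2G_w(M)X$ supplies the one step the paper leaves implicit.
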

\begin{proof}[Proof of Lemma~\ref{lem:hessian-identity}]

The first and second variations of \(XX^\top\) in direction \(D\) are
\[
       \eta = XD^\top+DX^\top,\qquad 2DD^\top .
\]
Applying the chain rule to \(F_w(XX^\top)\) gives the gradient
\(2G_w(M)X\) and the quadratic form \eqref{eq:hessian-identity}.
\end{proof}

\begin{remark}[Order of magnitude of chart constants]
\label{rem:chart-orders}
The raw chart is well conditioned when the active spectrum is bounded:
if \(0<c\le\lambda_k(M_w)\le\lambda_1(M_w)\le C\), then
\(\kappa_{\rm raw}(M_w)=O(1)\).  If
\(\lambda_k(M_w)=\varepsilon\ll1\) while \(\lambda_1(M_w)=O(1)\), then
\(\kappa_{\rm raw}(M_w)=\Theta(\varepsilon^{-1})\).  If
\(\lambda_1(M_w)=L\gg1\) and \(\lambda_k(M_w)=\Omega(1)\), then
\(\kappa_{\rm raw}(M_w)=\Theta(L)\).  
These scalings are coordinate geometry,
not RIP. RIP controls \(\Aop\) on tangent matrices, but it does not control the conditioning of a raw SVD/graph chart near the rank-deficient boundary.
\end{remark}

\begin{proposition}
\label{prop:active-margin}
Let \(X\in\RR^{n\times q}\) have rank \(q\), let \(M=XX^\top\), let
\(P\) be the range projector of \(M\), and assume \(G_w(M)X=0\).  For any
factor direction \(D\) producing the tangent matrix
\[
        \eta=XD^\top+DX^\top\in\Tcal(P),
\]
the factor Hessian obeys
\begin{equation}
        \nabla^2 f_w(X)[D,D]
        \ge
        \left(\alpha_T(P)-\rho_q^-(M;w)\right)\norm{\eta}_F^2 .
        \label{eq:active-margin}
\end{equation}
Consequently, under Assumption~\ref{ass:unified-rip},
\begin{equation}
        \nabla^2 f_w(X)[D,D]
        \ge
        \left(1-\delta-\rho_q^-(M;w)\right)\norm{\eta}_F^2 .
\end{equation}
\end{proposition}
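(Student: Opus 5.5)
The plan is to start from the factor Hessian identity of Lemma~\ref{lem:hessian-identity},
\[
\nabla^2 f_w(X)[D,D]=\norm{\Aop(\eta)}_2^2+2\ip{G_w(M)}{DD^\top},\qquad \eta=XD^\top+DX^\top,
\]
and bound the two terms separately. The first term is immediate: since $\eta\in\Tcal(P)$, the definition \eqref{eq:tangent-gains} of the tangent gain gives $\norm{\Aop(\eta)}_2^2\ge\alpha_T(P)\norm{\eta}_F^2$. All the work therefore goes into showing
\[
2\ip{G_w(M)}{DD^\top}\ge-\rho_q^-(M;w)\norm{\eta}_F^2 .
\]

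\emph{Reducing the curvature term to the normal block.} Write $G=G_w(M)$. Because $G$ is symmetric and $GX=0$, we get $GP=0$ and $PG=0$, since $P$ projects onto $\operatorname{range}(X)$. Hence $G=P^\perp GP^\perp$, and so
\[
\ip{G}{DD^\top}=\ip{P^\perp GP^\perp}{(P^\perp D)(P^\perp D)^\top}.
\]
The matrix $(P^\perp D)(P^\perp D)^\top$ is PSD. Testing a symmetric matrix against it is bounded below by minus the operator norm of the matrix's negative part times the trace. This yields
\[
\ip{G}{DD^\top}\ge-\norm{[P^\perp GP^\perp]_-}_{\op}\,\norm{P^\perp D}_F^2 .
\]

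\emph{Converting $\norm{P^\perp D}_F$ into $\norm{\eta}_F$.} This is the key step, and I expect it to be the only real obstacle, since it is where the spectral floor $\lambda_q(M)$ enters. First, since $P^\perp X=0$, the off-diagonal block of $\eta$ is $P^\perp\eta P=P^\perp DX^\top$. Second, $X^\top X$ has smallest eigenvalue $\sigma_{\min}(X)^2=\lambda_q(M)$, so
\[
\norm{P^\perp DX^\top}_F^2=\tr\!\left(P^\perp D\,X^\top X\,D^\top P^\perp\right)\ge\lambda_q(M)\norm{P^\perp D}_F^2 .
\]
Third, decompose $\eta$ into its four blocks with respect to $P$ and $P^\perp$. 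The blocks are Frobenius-orthogonal, and $\eta$ is symmetric, so both off-diagonal blocks have the same norm. Therefore
\[
\norm{\eta}_F^2\ge\norm{P^\perp\eta P}_F^2+\norm{P\eta P^\perp}_F^2=2\norm{P^\perp\eta P}_F^2 .
\]
Combining the three facts gives $\norm{P^\perp D}_F^2\le\norm{\eta}_F^2/(2\lambda_q(M))$.

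\emph{Assembling the bound.} Substituting this into the curvature-term estimate gives
\[
2\ip{G}{DD^\top}\ge-\frac{\norm{[P^\perp GP^\perp]_-}_{\op}}{\lambda_q(M)}\norm{\eta}_F^2=-\rho_q^-(M;w)\norm{\eta}_F^2 .
\]
Here $P=P_q(M)$, because $M$ has rank exactly $q$. Adding the tangent-gain bound from the first step yields \eqref{eq:active-margin}. The consequence under Assumption~\ref{ass:unified-rip} then follows from $\alpha_T(P)\ge1-\delta$, i.e.\ \eqref{eq:tangent-rip}. That inequality holds because every $\eta\in\Tcal(P)$ has rank at most $2q\le2k$, so the $(\delta,2k)$-RIP applies to it.
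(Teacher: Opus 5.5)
Your proof is correct and follows the paper's argument. The paper also splits the Hessian via Lemma~\ref{lem:hessian-identity}, reduces $\ip{G_w(M)}{DD^\top}$ to the normal block using $G_w(M)X=0$, and bounds the normal part of $D$ by the off-diagonal block of $\eta$ divided by $\lambda_q(M)$. The only difference is presentation: the paper works in coordinates $X=U\Lambda^{1/2}$, $D=UA+U_\perp B$, $K=B\Lambda^{1/2}$, while you work coordinate-free with $P^\perp D$ and $X^\top X\succeq\lambda_q(M)I$, which avoids the right-orthogonal change of factor coordinates.
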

\begin{proof}
Write \(M=U\Lambda U^\top\), where
\(\Lambda=\diag(\lambda_1,\ldots,\lambda_q)\) and
\(\lambda_q>0\).  Since \(XX^\top=M\) and \(X\) has full column rank, after a
right-orthogonal change of factor coordinates we may take
\(X=U\Lambda^{1/2}\).  Decompose a factor perturbation as
\[
        D=UA+U_\perp B .
\]
Then
\[
        \eta
        =
        U(\Lambda^{1/2}A^\top+A\Lambda^{1/2})U^\top
        +
        U\Lambda^{1/2}B^\top U_\perp^\top
        +
        U_\perp B\Lambda^{1/2}U^\top .
\]
The normal block of \(DD^\top\) is
\[
        P^\perp DD^\top P^\perp
        =
        U_\perp BB^\top U_\perp^\top .
\]
Since \(G_w(M)X=0\), the range and cross terms of \(G_w(M)\) vanish, so
\[
        \ip{G_w(M)}{DD^\top}
        =
        \ip{P^\perp G_w(M)P^\perp}{U_\perp BB^\top U_\perp^\top}.
\]
Let \(K=B\Lambda^{1/2}\).  The off-diagonal block of \(\eta\) is \(K\), hence
\[
\begin{aligned}
\norm{\eta}_F^2
&\ge
2\norm{K}_F^2,
\\
\tr(BB^\top)
&=
\tr(K\Lambda^{-1}K^\top)
\\
&\le
\frac{\norm{K}_F^2}{\lambda_q}
\le
\frac{\norm{\eta}_F^2}{2\lambda_q}.
\end{aligned}
\]
Therefore
\[
        2\ip{G_w(M)}{DD^\top}
        \ge
        -\frac{\norm{[P^\perp G_w(M)P^\perp]_-}_{\op}}{\lambda_q}
        \norm{\eta}_F^2 .
\]
The first term in \eqref{eq:hessian-identity} is at least
\(\alpha_T(P)\norm{\eta}_F^2\), proving \eqref{eq:active-margin}.  The RIP
specialization follows from \eqref{eq:tangent-rip}.
\end{proof}

\begin{assumption}[RIP-certified endpoint Hessian lower margin]
\label{ass:endpoint-margin}
Let \(M_w\in\Mcal_k\) satisfy \eqref{eq:endpoint-critical}, and set
\[
\begin{aligned}
    \mu_w
    &= (1-\delta)-\rho_k^-(M_w;w),
    \\
    \lambda_w
    &= \lambda_k(M_w),
    \qquad
    P_w
    = P_k(M_w),
    \\
    \gamma_w
    &= \norm{
        P_w^\perp G_w(M_w)P_w^\perp
    }_{\op}.
\end{aligned}
\]
Equivalently,
\begin{equation}
    \label{eq:rho-k-computable}
        \rho_k^-(M_w;w)
        =
        \frac{
        \max\{0,-\lambda_{\min}(P_w^\perp G_w(M_w)P_w^\perp)\}
        }{\lambda_k(M_w)} .
\end{equation}
We assume \(\mu_w>0\).
\end{assumption}

\begin{assumption}[Small noise relative to the endpoint scale]
\label{ass:small-noise}
With \(\mu_w,\lambda_w,\gamma_w\) from
Assumption~\ref{ass:endpoint-margin}, define the computable endpoint variation
scale
\begin{equation}
    \label{eq:endpoint-variation-scale}
        \mathfrak L_w
        =
        \frac{2(1+\delta+\rho_k^-(M_w;w))}{\lambda_w}
        +
        \frac{16\gamma_w}{\lambda_w^2},
\end{equation}
with the convention that \(\mu_w/(4\mathfrak L_w)=+\infty\) if
\(\mathfrak L_w=0\).  The noise obeys
\begin{equation}
    \label{eq:small-noise-condition}
        \frac{2\sqrt{1+\delta}}{\mu_w}\norm{w}_2
        \le
        \frac12 r_w^{\rm stab},
        \qquad
        r_w^{\rm stab}
        =
        \min\left\{
        \frac{\lambda_w}{4},
        \frac{\mu_w}{4\mathfrak L_w}
        \right\}.
\end{equation}
\end{assumption}

\paragraph{Why the margin cannot be dropped.}
If the endpoint Hessian has a flat eigen-direction, linear robustness can fail
even in one dimension.  Consider
\[
        \phi_0(x)=\frac14x^4,
        \qquad
        \phi_\varepsilon(x)=\frac14x^4-\varepsilon x .
\]
The unperturbed critical point is \(x_0=0\), but
\(\phi_0''(0)=0\).  The perturbed critical point solves
\[
        \phi_\varepsilon'(x)=x^3-\varepsilon=0,
        \qquad
        x_\varepsilon=\varepsilon^{1/3}.
\]
Thus \(\abs{x_\varepsilon-x_0}=\varepsilon^{1/3}\), which is not bounded by
\(C\varepsilon\) for any fixed \(C\) as \(\varepsilon\downarrow0\).  A positive
margin is exactly what rules out this flat-bottom behavior and lets the
criticality equation be inverted.

\begin{lemma}[Small-noise critical-point continuation]
\label{lem:small-noise-continuation}
Suppose Assumptions~\ref{ass:unified-rip}--\ref{ass:small-noise} hold.  Then
there exists a \(C^1\) curve
\[
        s\mapsto M_s\in\Mcal_k,\qquad 0\le s\le1,
\]
with \(M_1=M_w\), such that every \(M_s\) is a rank-\(k\) critical point of
\(F_s\):
\begin{equation}
    \label{eq:branch-criticality}
        \grad_{\Mcal_k}F_s(M_s)=0 .
\end{equation}
Moreover, along this branch,
\begin{align}
    \label{eq:endpoint-stable-hessian}
        \ip{\eta}{\Hess_{\Mcal_k}F_s(M_s)[\eta]}
        &\ge
        \frac{\mu_w}{2}\norm{\eta}_F^2,
        \qquad
        \eta\in T_{M_s}\Mcal_k,\\
    \label{eq:endpoint-stable-floor}
        \lambda_k(M_s)&\ge \frac{\lambda_w}{2}.
\end{align}
We call \(M_0:=M_{s=0}\) the paired clean endpoint.
\end{lemma}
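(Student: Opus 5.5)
The plan is a continuation (bootstrap) argument: apply the implicit-function theorem on the manifold $\Mcal_k$, starting at $s=1$ and moving toward $s=0$, while keeping the iterates inside a small Frobenius ball around $M_w$. Inside that ball the Hessian margin and the spectral floor are controlled by the endpoint quantities. Consider the map $\Phi(M,s)=\grad_{\Mcal_k}F_s(M)$, which is equivalent to the system $G_s(M)P=0$.

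\emph{Local existence at $s=1$.} At $s=1$ we have $\Phi(M_w,1)=0$. By Proposition~\ref{prop:active-margin} combined with the Hessian identity of \citet[Theorem~1]{luo2025nonconvex}, the Riemannian Hessian at $M_w$ is bounded below by $\mu_w>0$ on $T_{M_w}\Mcal_k$, so it is invertible. Since $\Mcal_k$ is a smooth embedded manifold and $F_s$ is polynomial in $(M,s)$, the implicit-function theorem yields a $C^1$ branch $M_s$ on some interval $(1-\epsilon,1]$.

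\emph{Speed bound.} Differentiating $\Phi(M_s,s)=0$ gives
\[
\Hess F_s(M_s)[\dot M_s]=\Proj_{T_{M_s}}(\Aop^\ast w),
\qquad\text{since } \partial_s G_s=-\Aop^\ast w .
\]
For unit $\eta$ in the tangent space,
\[
\ip{\Aop^\ast w}{\eta}=\ip{w}{\Aop\eta}\le\sqrt{1+\delta}\,\norm{w}_2,
\]
because $\rank\eta\le 2k$. Hence, as long as the margin is at least $\mu_w/2$ along the branch, $\norm{\dot M_s}_F\le 2\sqrt{1+\delta}\,\norm{w}_2/\mu_w$. This constant is exactly the numerator of Theorem~\ref{thm:rank-stable-bound}, and integrating over $s$ keeps $\norm{M_s-M_w}_F\le \tfrac12 r_w^{\rm stab}$ by Assumption~\ref{ass:small-noise}.

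\emph{Uniform bounds inside the ball} (the main obstacle). I must show that every $M\in\Mcal_k$ with $\norm{M-M_w}_F\le r_w^{\rm stab}$ and every $s\in[0,1]$ satisfy two things. First, the floor: Weyl's inequality gives $\lambda_k(M)\ge \lambda_w-\lambda_w/4\ge\lambda_w/2$. Second, the Riemannian Hessian lower bound $\ge\mu_w/2$; for this I would bound the Hessian variation by $\mathfrak L_w\norm{M-M_w}_F$ plus the noise term.

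The Riemannian Hessian equals the Euclidean part $H$ (constant) plus a Weingarten curvature term of the form $\ip{G_s(M)}{\eta P^\perp\eta\, M^\dagger}$. I would handle it as follows.
\begin{itemize}
\item Control the change in the curvature coefficient $\rho$: the normal projector rotates by $\norm{P-P_w}\le 2\norm{M-M_w}/\lambda_w$ (Davis--Kahan with gap $\lambda_w$), and $M^\dagger$ changes by $O(\norm{M-M_w}/\lambda_w^2)$. This is the source of the $16\gamma_w/\lambda_w^2$ term.
\item Bound the change of $G$, namely $\norm{H(M-M_w)}$ restricted to the relevant low-rank directions, by $(1+\delta)\norm{M-M_w}$ divided by $\lambda_w/2$. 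This gives the $2(1+\delta+\rho)/\lambda_w$ term.
\item Absorb the noise change $(1-s)\Aop^\ast w$, which is again bounded through RIP by a quantity already controlled by the small-noise condition.
\end{itemize}
Summing these, the perturbation is $\le \mathfrak L_w r_w^{\rm stab}+\dots\le\mu_w/4+\mu_w/4$, which leaves the margin $\mu_w/2$. Making the constants match $\mathfrak L_w$ exactly is the delicate bookkeeping step.

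\emph{Closing the continuation.} Let $s_\ast$ be the infimum of the $s$ to which the branch extends while staying in the half-radius ball. Because the speed bound holds on $(s_\ast,1]$, $M_s$ is Cauchy as $s\downarrow s_\ast$. Its limit lies in the closed half-ball, so it still has rank $k$ (the floor keeps it off the boundary of $\Mcal_k$), it is critical by continuity, and it has the Hessian margin. The implicit-function theorem then extends the branch past $s_\ast$, a contradiction unless $s_\ast=0$. This yields \eqref{eq:branch-criticality}, \eqref{eq:endpoint-stable-hessian} and \eqref{eq:endpoint-stable-floor}.
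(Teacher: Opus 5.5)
Your proposal is correct and follows essentially the same route as the paper. The steps match one for one: the Hessian margin at \(M_w\) from Proposition~\ref{prop:active-margin} plus the factor--manifold comparison; a local implicit-function branch; a Weyl/Davis--Kahan perturbation of \(\rho_k^-(M;sw)\) on the ball of radius \(r_w^{\rm stab}\), giving margin \(\mu_w/2\) and floor \(3\lambda_w/4\); the velocity bound \(2\sqrt{1+\delta}\norm{w}_2/\mu_w\) that keeps the branch in the half-ball; and an open--closed continuation down to \(s=0\).

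The differences are minor. Your second-fundamental-form formulation gives the margin at every rank-\(k\) point of the ball rather than only at critical points, and you explicitly check that limit points stay in \(\Mcal_k\). Your attribution of the terms in \(\mathfrak L_w\) differs from the paper's. There, the \(\gamma_w/\lambda_w^2\) term comes from projector rotation acting on the endpoint normal block (operator norm \(\gamma_w\)), and the \(\rho_k^-(M_w;w)/\lambda_w\) term comes from the floor ratio \(\lambda_w/\lambda_k(M)\le1+2r/\lambda_w\), not from the change in \(G\). Either accounting fits within the relaxed constants of \(\mathfrak L_w\).
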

\begin{proof}[Proof of Lemma ~\ref{lem:small-noise-continuation}]
Let \(\pi(X)=XX^\top\).  At every full-column-rank factor \(X\),
the differential
\[
    \mathrm{D}\pi_X[D]
    =
    XD^\top+DX^\top
\]
is surjective onto \(T_M\Mcal_k\), where \(M=\pi(X)\).  Its
kernel is the vertical space generated by the right-orthogonal
invariance:
\[
    \ker(\mathrm{D}\pi_X)
    =
    \{X\Omega:\Omega^\top=-\Omega\}.
\]
Thus, after removing these representation-induced vertical
directions, factor perturbations provide horizontal lifts of all
tangent directions on \(\Mcal_k\).  The factor--manifold Hessian
comparison of
\citet[Theorem~1 and Corollary~1]{luo2025nonconvex}
then transfers the curvature lower bound in
Proposition~\ref{prop:active-margin} to the Riemannian Hessian.
Assumption~\ref{ass:endpoint-margin} therefore implies that
\(\Hess_{\Mcal_k}F_w(M_w)\) is positive definite on
\(T_{M_w}\Mcal_k\).
The implicit-function theorem on the embedded manifold now yields
a local critical branch in the parameter \(s\)
\citep[Proposition~5.5.6]{absil2008optimization};
\citep[Proposition~5.45]{boumal2023introduction}.

 It remains to show that the branch cannot leave the endpoint neighborhood
before reaching \(s=0\).  Let \(P=P_k(M)\) and suppose
\(\norm{M-M_w}_F\le r\le\lambda_w/4\).  Weyl's inequality gives
\(\lambda_k(M)\ge3\lambda_w/4\), and Davis--Kahan gives
\(\norm{P-P_w}_{\op}\le 2r/\lambda_w\)
\citep{davisKahan1970rotation,bhatia1997matrix}.  Since
\(G_w(M_w)P_w=0\), the endpoint gradient is purely normal:
\[
        G_w(M_w)=P_w^\perp G_w(M_w)P_w^\perp .
\]
Using
\[
        G_s(M)
        =
        G_w(M_w)+H(M-M_w)+(1-s)N_w,
\]
RIP and Cauchy--Schwarz give, for every unit vector \(v\),
\[
\begin{aligned}
    \abs{v^\top H(M-M_w)v}
    &=
    \abs{\ip{\Aop(vv^\top)}
              {\Aop(M-M_w)}} \\
    &\le
    (1+\delta)\norm{M-M_w}_F .
\end{aligned}
\]
because \(vv^\top\) has rank one and \(M-M_w\) has rank at most \(2k\).  Also
\(\abs{v^\top N_wv}\le\sqrt{1+\delta}\norm{w}_2\).  Let
\[
        b_w=\norm{[P_w^\perp G_w(M_w)P_w^\perp]_-}_{\op}
        =
        \lambda_w\rho_k^-(M_w;w).
\]
Projector rotation changes the endpoint normal block by at most
\[
\begin{aligned}
    &\norm{P^\perp G_w(M_w)P^\perp
        -P_w^\perp G_w(M_w)P_w^\perp}_{\op}
    \\
    \le\;&
    2\gamma_w\norm{P-P_w}_{\op}
    \le
    \frac{4\gamma_w}{\lambda_w}\,r .
\end{aligned}
\]
Moreover \(\lambda_k(M)\ge\lambda_w-r\), so
\[
        \frac{\lambda_w}{\lambda_k(M)}
        \le
        1+\frac{2r}{\lambda_w},
        \qquad
        \frac{1}{\lambda_k(M)}
        \le
        \frac{2}{\lambda_w}
\]
for \(r\le\lambda_w/4\).  Combining the numerator perturbation with these
denominator bounds yields the local normal-ratio estimate
\begin{equation}
    \label{eq:local-rho-continuity}
        \rho_k^-(M;sw)
        \le
        \rho_k^-(M_w;w)
        +
        \mathfrak L_w r
        +
        \frac{2\sqrt{1+\delta}}{\lambda_w}(1-s)\norm{w}_2 .
\end{equation}
The constants in \(\mathfrak L_w\) are intentionally relaxed; the important
point is that they are endpoint quantities.

 By Assumption~\ref{ass:small-noise}, the two extra terms in
\eqref{eq:local-rho-continuity} are at most \(\mu_w/2\) whenever
\(r\le r_w^{\rm stab}\).  Hence, throughout this endpoint neighborhood,
\[
        (1-\delta)-\rho_k^-(M;sw)\ge\frac{\mu_w}{2},
\]
and Proposition~\ref{prop:active-margin} again gives
\eqref{eq:endpoint-stable-hessian}.

 On any interval where \eqref{eq:endpoint-stable-hessian} holds, differentiating
\eqref{eq:branch-criticality} gives
\[
        \Hess_{\Mcal_k}F_s(M_s)[\dot M_s]
        =
        \Proj_{T_{M_s}\Mcal_k}(\Aop^\ast w),
\]
and RIP gives
\[
        \norm{\Proj_{T_{M_s}\Mcal_k}(\Aop^\ast w)}_F
        \le
        \sqrt{1+\delta}\norm{w}_2 .
\]
Therefore
\begin{equation}
    \label{eq:branch-velocity-bound}
        \norm{\dot M_s}_F
        \le
        \frac{2\sqrt{1+\delta}}{\mu_w}\norm{w}_2 .
\end{equation}
Integrating backward from \(s=1\) gives, on any interval where the branch
exists inside the endpoint neighborhood,
\[
        \norm{M_s-M_w}_F
        \le
        \frac{2\sqrt{1+\delta}}{\mu_w}(1-s)\norm{w}_2
        \le \frac12 r_w^{\rm stab}.
\]
To formalize the continuation, let \(I\subset[0,1]\) be the maximal interval
containing \(1\) on which the implicit-function branch exists and remains in
\(\{M:\norm{M-M_w}_F<r_w^{\rm stab}\}\).  The local implicit-function theorem
makes \(I\) open relative to \([0,1]\).  The preceding estimate shows that any
point of \(I\) actually satisfies the stricter bound
\(\norm{M_s-M_w}_F\le r_w^{\rm stab}/2\); hence limits of branch points remain
inside the same neighborhood, the Hessian stays invertible by
\eqref{eq:endpoint-stable-hessian}, and the implicit-function theorem extends
the branch past the limit.  Thus \(I\) is also closed, so \(I=[0,1]\).
Finally, \eqref{eq:endpoint-stable-floor} follows from
\(r_w^{\rm stab}\le\lambda_w/4\).
\end{proof}

\begin{theorem}[Pointwise rank-stable perturbation bound]
\label{thm:rank-stable-bound}
Suppose Assumptions~\ref{ass:unified-rip}--\ref{ass:small-noise} hold, and
let \(M_0\) be the paired clean endpoint from
Lemma~\ref{lem:small-noise-continuation}.  Then
\begin{equation}
    \label{eq:rank-stable-rip}
        \norm{M_w-M_0}_F
        \le
        \frac{2\sqrt{1+\delta}}
        {(1-\delta)-\rho_k^-(M_w;w)}
        \norm{w}_2 .
\end{equation}
The associated rank-\(k\) spectral projector
\(P_s=P_k(M_s)\), with \(P_w=P_1\), rotates slowly:
\begin{equation}
\begin{aligned}
\norm{P_s-P_w}_{\op}
&\le {}
\\[-1mm]
&\quad
\frac{
    8\sqrt{1+\delta}\,(1-s)\norm{w}_2
}{
    \lambda_k(M_w)
    \bigl[(1-\delta)-\rho_k^-(M_w;w)\bigr]
}.
\end{aligned}
\label{eq:projector-rotation-bound}
\end{equation}
\end{theorem}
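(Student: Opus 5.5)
The bound will come from integrating the velocity estimate along the continuation supplied by Lemma~\ref{lem:small-noise-continuation}. That lemma gives a $C^1$ branch $s\mapsto M_s\in\Mcal_k$ on $[0,1]$ with $M_1=M_w$ and $\grad_{\Mcal_k}F_s(M_s)=0$. Along the branch the Riemannian Hessian satisfies the uniform margin \eqref{eq:endpoint-stable-hessian}, and the floor satisfies $\lambda_k(M_s)\ge\lambda_w/2$.

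First I differentiate the criticality identity in $s$. Since $G_s(M)=H(M-M_\star)-sN_w$, the $s$-derivative of the Riemannian gradient at a critical point is $-\Proj_{T_{M_s}\Mcal_k}(N_w)$. The curvature correction coming from differentiating the projector vanishes on the critical branch, which is exactly what makes the Riemannian Hessian the right linearization. The result is the linear equation
\[
\Hess_{\Mcal_k}F_s(M_s)[\dot M_s]=\Proj_{T_{M_s}\Mcal_k}(\Aop^\ast w).
\]
To bound the right-hand side, I pair it with any unit tangent $\eta$, which has rank at most $2k$. RIP and Cauchy--Schwarz then give $\abs{\ip{\Aop^\ast w}{\eta}}\le\norm{w}_2\norm{\Aop(\eta)}_2\le\sqrt{1+\delta}\norm{w}_2$.

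Next I pair the linear equation with $\dot M_s$ and use the margin $\mu_w/2$. This gives
\[
\tfrac{\mu_w}{2}\norm{\dot M_s}_F^2\le\sqrt{1+\delta}\norm{w}_2\norm{\dot M_s}_F,
\]
which is \eqref{eq:branch-velocity-bound}. Integrating from $0$ to $1$,
\[
\norm{M_w-M_0}_F\le\int_0^1\norm{\dot M_s}_F\,ds\le 2\sqrt{1+\delta}\norm{w}_2/\mu_w,
\]
which is \eqref{eq:rank-stable-rip}. Integrating only over $[s,1]$ gives the sharper $\norm{M_s-M_w}_F\le 2\sqrt{1+\delta}(1-s)\norm{w}_2/\mu_w$.

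For the projector rotation \eqref{eq:projector-rotation-bound} I use Davis--Kahan on the rank-$k$ eigenspace of $M_w$. The gap is $\lambda_k(M_w)$, since the complementary eigenvalues of $M_w$ are zero. This gives $\norm{P_s-P_w}_{\op}\le 2\norm{M_s-M_w}_{\op}/\lambda_w$, valid because $\norm{M_s-M_w}_F\le\lambda_w/8$ under Assumption~\ref{ass:small-noise}. Combining with the previous bound yields the factor $4\sqrt{1+\delta}(1-s)\norm{w}_2/(\lambda_w\mu_w)$. The stated constant $8$ leaves room for a cruder version of Davis--Kahan, for example using the half-gap $\lambda_w/2$ after Weyl.

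The main obstacle is justifying the differentiated criticality equation intrinsically on $\Mcal_k$. One must check that the derivative of $s\mapsto\grad F_s(M_s)$ equals the Riemannian Hessian applied to $\dot M_s$ plus the parameter derivative, with no extra Weingarten term. I would handle this by noting that at critical points the Riemannian Hessian agrees with the derivative of the gradient field, citing \citet[Prop.~5.5.6]{absil2008optimization}. Alternatively, I would work in factor coordinates with a horizontal lift and the Hessian equality of \citet{luo2025nonconvex}. The rest is routine.
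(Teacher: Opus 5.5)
Your proposal is correct and is essentially the paper's proof: differentiate $\grad_{\Mcal_k}F_s(M_s)=0$ to obtain $\Hess_{\Mcal_k}F_s(M_s)[\dot M_s]=\Proj_{T_{M_s}\Mcal_k}(\Aop^\ast w)$, bound the right-hand side by $\sqrt{1+\delta}\norm{w}_2$ via RIP on rank-$2k$ tangent matrices, invert using the uniform margin $\mu_w/2$ from Lemma~\ref{lem:small-noise-continuation}, and integrate over $[0,1]$. One wording correction: the projector-derivative term does not vanish at a critical point; only its normal component does, while its tangential Weingarten part stays inside the Hessian and is exactly the normal-block curvature that $\rho_k^-$ controls.

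The only real variation is the projector bound. You apply a static Davis--Kahan estimate to the integrated displacement $\norm{M_s-M_w}_F\le\lambda_w/8$ and get constant $4$. The paper instead integrates the differential Davis--Kahan/Kato bound $\norm{\dot P_s}_{\op}\le 2\norm{\dot M_s}_{\op}/\lambda_k(M_s)$ with the floor $\lambda_k(M_s)\ge\lambda_w/2$ and gets $8$. Both routes are valid.
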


\begin{proof}[Proof of Theorem ~\ref{thm:rank-stable-bound}]
The rank-\(k\) manifold is equipped with the embedded Frobenius metric.  By
\eqref{eq:branch-criticality}, the Riemannian criticality equation is
\[
        \grad_{\Mcal_k}F_s(M_s)=0 .
\]
Differentiate this identity with respect to \(s\).  At a critical point, the
covariant derivative of the Riemannian gradient in the motion direction
\(\dot M_s\) is the Riemannian Hessian, while the explicit derivative of the
matrix gradient \(G_s(M)=H(M-M_\star)-sN_w\) with respect to \(s\) is
\(-N_w=-\Aop^\ast w\).  Hence

\[
\begin{aligned}
    0
    &=
    \frac{D}{ds}\grad_{\Mcal_k}F_s(M_s)
    \\
    &=
    \Hess_{\Mcal_k}F_s(M_s)[\dot M_s]
    -
    \Proj_{T_{M_s}\Mcal_k}(\Aop^\ast w).
\end{aligned}
\]

and therefore
\[
        \Hess_{\Mcal_k}F_s(M_s)[\dot M_s]
        =
        \Proj_{T_{M_s}\Mcal_k}(\Aop^\ast w).
\]
This is the intrinsic version of differentiating the pullback criticality
equation in local coordinates
\citep[Proposition~5.45]{boumal2023introduction};
\citep[Proposition~5.5.6]{absil2008optimization}.

The right-hand side is controlled by RIP.  Since every tangent matrix in
\(T_{M_s}\Mcal_k\) has rank at most \(2k\),
\[
\begin{aligned}
    \norm{
        \Proj_{T_{M_s}\Mcal_k}(\Aop^\ast w)
    }_F
    &=
    \sup_{
        \substack{
            \eta\in T_{M_s}\Mcal_k \\
            \norm{\eta}_F=1
        }
    }
    \ip{w}{\Aop(\eta)}
    \\
    &\le
    \sqrt{1+\delta}\norm{w}_2 .
\end{aligned}
\]
By the Hessian stability condition \eqref{eq:endpoint-stable-hessian},
\(\Hess_{\Mcal_k}F_s(M_s)\) is self-adjoint and positive definite on
\(T_{M_s}\Mcal_k\), with inverse norm at most \(2/\mu_w\).  Therefore
\[
        \norm{\dot M_s}_F
        \le
        \frac{2\sqrt{1+\delta}}{\mu_w}\norm{w}_2,
        \qquad
        \mu_w=(1-\delta)-\rho_k^-(M_w;w).
\]
Integrating from \(s=0\) to \(s=1\) proves \eqref{eq:rank-stable-rip}.

For the projector statement, the spectral gap between the \(k\)-th and
\((k+1)\)-st eigenvalues of \(M_s\) is \(\lambda_k(M_s)\), because
\(\rank(M_s)=k\).  The differential form of the Davis--Kahan/Kato spectral
projector bound gives the relaxed estimate
\citep{davisKahan1970rotation,bhatia1997matrix}
\[
        \norm{\dot P_s}_{\op}
        \le
        \frac{2\norm{\dot M_s}_{\op}}{\lambda_k(M_s)}
        \le
        \frac{8\sqrt{1+\delta}}{\lambda_w\mu_w}\norm{w}_2,
\]
where the last step uses \eqref{eq:endpoint-stable-floor}.  Integrating from
\(s\) to \(1\) gives \eqref{eq:projector-rotation-bound}.
\end{proof}

\subsubsection{Supporting Chart Diagnostics}
\label{ssec:Supporting-Chart-Diagnostics}

Theorem~\ref{thm:rank-stable-bound} is intrinsic: it controls
perturbations in the Frobenius geometry of \(\Mcal_k\) without
choosing local coordinates. We now express
the same argument in local coordinates centered at \(M_w\). The lemma below shows that its
conditioning is governed by the extreme active eigenvalues of
\(M_w\) and deteriorates as \(\lambda_k(M_w)\) approaches zero.

\begin{lemma}[Raw endpoint chart conditioning]
\label{lem:raw-chart-conditioning}
Let
\[
\begin{aligned}
    M_w
    &=
    U_w\Lambda_wU_w^\top,
    \\
    \Lambda_w
    &=
    \diag(\lambda_1,\ldots,\lambda_k),
    \\
    \lambda_1
    &\ge
    \cdots
    \ge
    \lambda_k
    >
    0 .
\end{aligned}
\]
and complete \(U_w\) to an orthogonal basis \([U_w\ U_{w,\perp}]\).  A raw
graph chart centered at \(M_w\) is
\[
\begin{aligned}
    \Phi(B,Z)
    &=
    Q(Z)BQ(Z)^\top,
    \\
    Q(Z)
    &=
    (U_w+U_{w,\perp}Z)
    (I+Z^\top Z)^{-1/2}.
\end{aligned}
\]
Here \(B\in\SSym_{++}^k\), \(Z\in\RR^{(n-k)\times k}\),
\(\dot B\in\SSym^k\), and \(\dot Z\in\RR^{(n-k)\times k}\).
At the base point \((B,Z)=(\Lambda_w,0)\), the differential is
\begin{equation}
\label{eq:raw-chart-differential}
\begin{aligned}
    D\Phi(\Lambda_w,0)[\dot B,\dot Z]
    &=
    U_w\dot B U_w^\top
    \\
    &\quad
    +
    U_{w,\perp}\dot Z\Lambda_wU_w^\top
    \\
    &\quad
    +
    U_w\Lambda_w\dot Z^\top U_{w,\perp}^\top .
\end{aligned}
\end{equation}
With the raw coordinate norm
\[
        \norm{(\dot B,\dot Z)}_{\rm raw}^2
        =
        \norm{\dot B}_F^2+\norm{\dot Z}_F^2,
\]
the singular values of \(D\Phi(\Lambda_w,0)\), viewed as a map into Frobenius
tangent matrices, lie between
\begin{equation}
\label{eq:raw-chart-singular-values}
\begin{aligned}
    \sigma_-
    &=
    \min\{1,\sqrt{2}\,\lambda_k(M_w)\},
    \\
    \sigma_+
    &=
    \max\{1,\sqrt{2}\,\lambda_1(M_w)\}.
\end{aligned}
\end{equation}
Thus the raw chart condition number is
\begin{equation}
    \label{eq:raw-chart-kappa}
        \kappa_{\rm raw}(M_w)
        =
        \frac{\sigma_+}{\sigma_-}
        =
        \frac{\max\{1,\sqrt2\,\lambda_1(M_w)\}}
        {\min\{1,\sqrt2\,\lambda_k(M_w)\}} .
\end{equation}
In raw coordinates, the same argument as
Theorem~\ref{thm:rank-stable-bound} gives
\[
        \norm{\dot M_s}_F
        \le
        2\kappa_{\rm raw}(M_w)^2
        \frac{\sqrt{1+\delta}}{\mu_w}\norm{w}_2
\]
at the base point.  In tangent-scaled coordinates \(K=Z\Lambda_w\), this
base-point distortion disappears.
\end{lemma}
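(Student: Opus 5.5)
The statement to prove is Lemma~\ref{lem:raw-chart-conditioning}. The plan has three parts: compute the differential of \(\Phi\) at \((\Lambda_w,0)\) directly, read off its singular values from an orthogonal block decomposition, and then repeat the proof of Theorem~\ref{thm:rank-stable-bound} with a change of variables.

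\textbf{Differential.} Since \(Q(0)=U_w\) and \((I+Z^\top Z)^{-1/2}=I+O(\norm{Z}^2)\), the first-order variation of \(Q\) at \(Z=0\) is \(U_{w,\perp}\dot Z\). Differentiating the product \(QBQ^\top\) by the product rule then gives
\[
D\Phi(\Lambda_w,0)[\dot B,\dot Z]
=U_w\dot BU_w^\top+U_{w,\perp}\dot Z\Lambda_wU_w^\top+U_w\Lambda_w\dot Z^\top U_{w,\perp}^\top,
\]
which is \eqref{eq:raw-chart-differential}.

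\textbf{Singular values.} In the basis \([U_w\ U_{w,\perp}]\), the range--range block of the image is \(\dot B\), and the two off-diagonal blocks are \(\dot Z\Lambda_w\) and its transpose. These blocks are mutually Frobenius-orthogonal, so
\[
\norm{D\Phi[\dot B,\dot Z]}_F^2=\norm{\dot B}_F^2+2\norm{\dot Z\Lambda_w}_F^2 .
\]
On the \(\dot B\) component the map is an isometry, so its singular value is \(1\). On the \(\dot Z\) component, write \(\dot Z=\sum_j z_je_j^\top\) column by column. Then \(2\norm{\dot Z\Lambda_w}_F^2=\sum_j 2\lambda_j^2\norm{z_j}^2\), so the singular values are \(\sqrt2\lambda_j\). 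The full singular spectrum is therefore \(\{1\}\cup\{\sqrt2\lambda_j\}\). The image equals the whole tangent space \(T_{M_w}\Mcal_k\) by \eqref{eq:tangent-space}. This gives \eqref{eq:raw-chart-singular-values} and \(\kappa_{\rm raw}=\sigma_+/\sigma_-\).

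\textbf{Velocity bound in raw coordinates.} Let \(\dot\xi\) be the raw-coordinate velocity, so that \(\dot M_s=D\Phi[\dot\xi]\). Pulling back the criticality equation gives
\[
J^\top\Hess\, J\,\dot\xi=J^\top\Proj_T(\Aop^\ast w),
\]
with \(J=D\Phi\) and \(\Hess=\Hess_{\Mcal_k}F_s\). At the base point the critical-point pullback Hessian involves no second-fundamental-form correction term, so this identity holds exactly there. I then bound the two sides separately:
\begin{itemize}
\item The pulled-back Hessian has smallest eigenvalue at least \(\sigma_-^2\mu_w/2\), using \eqref{eq:endpoint-stable-hessian}.
\item The right-hand side has norm at most \(\sigma_+\sqrt{1+\delta}\norm{w}_2\).
\end{itemize}
Hence \(\norm{\dot\xi}\le 2\sigma_+\sqrt{1+\delta}\norm{w}_2/(\sigma_-^2\mu_w)\). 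Multiplying by \(\sigma_+\) yields
\[
\norm{\dot M_s}_F\le 2\kappa_{\rm raw}^2\,\frac{\sqrt{1+\delta}}{\mu_w}\,\norm{w}_2 .
\]
This is a deliberately lossy bound, since it exceeds the intrinsic estimate; the statement only asks for this bound.

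\textbf{Tangent-scaled coordinates.} Setting \(K=\dot Z\Lambda_w\) makes the map \((\dot B,K)\mapsto D\Phi\) have singular values \(1\) and \(\sqrt2\). Its condition number is then the absolute constant \(\sqrt2\), independent of \(M_w\), so no dependence on \(\lambda_1/\lambda_k\) remains. This is the sense in which the base-point distortion disappears.

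\textbf{Main obstacle.} The delicate step is justifying that the pullback Hessian is \(J^\top\Hess J\) with no curvature correction. I would argue this from criticality at the base point: the second-order term of \(\Phi\) pairs with the Riemannian gradient, which vanishes there. Everything else is direct block algebra.
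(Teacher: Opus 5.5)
Your proposal is correct and follows the paper's proof essentially step for step: you use the vanishing first-order variation of $(I+Z^\top Z)^{-1/2}$ at $Z=0$ and the Frobenius-orthogonal block split $\norm{\dot B}_F^2+2\norm{\dot Z\Lambda_w}_F^2$ to get the singular-value bounds, and the velocity bound comes from the same $\sigma_+\cdot(\sigma_-^2\mu_w/2)^{-1}\cdot\sigma_+$ bookkeeping. Your added justification that the pulled-back Hessian equals $J^\top\Hess J$ at the base point (because the Riemannian gradient vanishes there) fills in a step the paper leaves implicit.
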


\begin{proof}[Proof of Lemma~\ref{lem:raw-chart-conditioning}]
Let
\[
        W(Z)=U_w+U_{w,\perp}Z,
        \qquad
        R(Z)=I+Z^\top Z,
\]
so \(Q(Z)=W(Z)R(Z)^{-1/2}\).  For a perturbation \(\dot Z\),
\[
        DR(Z)[\dot Z]=\dot Z^\top Z+Z^\top\dot Z .
\]
At \(Z=0\), \(DR(0)[\dot Z]=0\), and hence the derivative of the
normalization factor \(R(Z)^{-1/2}\) vanishes.  Therefore
\[
        DQ(0)[\dot Z]=U_{w,\perp}\dot Z .
\]
Differentiating \(\Phi(B,Z)=Q(Z)BQ(Z)^\top\) gives
\[
        D\Phi(B,Z)[\dot B,\dot Z]
        =
        Q\dot BQ^\top+\dot Q BQ^\top+QB\dot Q^\top .
\]
Substituting \(Q(0)=U_w\), \(B=\Lambda_w\), and
\(\dot Q=U_{w,\perp}\dot Z\) proves
\eqref{eq:raw-chart-differential}.  The three blocks in
\eqref{eq:raw-chart-differential} are Frobenius-orthogonal, so
\[
        \norm{D\Phi(\Lambda_w,0)[\dot B,\dot Z]}_F^2
        =
        \norm{\dot B}_F^2
        +
        2\norm{\dot Z\Lambda_w}_F^2 .
\]
Since
\[
        \lambda_k(M_w)\norm{\dot Z}_F
        \le
        \norm{\dot Z\Lambda_w}_F
        \le
        \lambda_1(M_w)\norm{\dot Z}_F,
\]
the singular-value bounds \eqref{eq:raw-chart-singular-values} follow.  The
displayed coordinate velocity bound follows from the coordinate version of the
proof: the pulled-back noise gradient is bounded by
\(\sqrt{1+\delta}\sigma_+\norm{w}_2\), the pulled-back Hessian is lower bounded
by \((\mu_w/2)\sigma_-^2\), and matrix velocity conversion costs another
\(\sigma_+\).
\end{proof}

\subsection{Missing Details of Section~\ref{sec:rank-cases}}
\label{s5proofs}

\begin{proposition}[Unused columns force PSD matrix gradient]
\label{prop:rank-def-sosp}
Let \(X\in\RR^{n\times k}\) be a second-order point of \(f_w\).  If
\(\rank(X)<k\), then
\begin{equation}
    \label{eq:rank-def-gradient-psd}
        G_w(XX^\top)\succeq0 .
\end{equation}
\end{proposition}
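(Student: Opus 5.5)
The plan is to combine the factor Hessian identity of Lemma~\ref{lem:hessian-identity} with a direction $D$ that lives entirely in an unused column direction of $X$. Since $\rank(X)<k$, there is a unit vector $u\in\RR^k$ with $Xu=0$. For an arbitrary vector $v\in\RR^n$, take the rank-one factor perturbation $D=vu^\top$. Then
\[
XD^\top+DX^\top = Xuv^\top+vu^\top X^\top = 0,
\]
so the first-order matrix variation $\eta$ vanishes. Also $DD^\top = vu^\top uv^\top = vv^\top$.

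Substituting into \eqref{eq:hessian-identity}, the sensing-curvature term $\norm{\Aop(\eta)}_2^2$ is zero. The identity then reduces to
\[
\nabla^2 f_w(X)[D,D] = 2\ip{G_w(XX^\top)}{vv^\top} = 2\,v^\top G_w(XX^\top)v .
\]
Second-order stationarity (Definition~\ref{def:SOSP}) makes the left side nonnegative for every $D$. Hence $v^\top G_w(XX^\top)v\ge0$ for all $v\in\RR^n$. Since $G_w(M)=H(M-M_\star)-N_w$ is symmetric, being a sum of adjoint images $\Aop^\ast(\cdot)$ of symmetric $A_i$, this is exactly $G_w(XX^\top)\succeq0$.

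The only point needing care is the existence of $u$, i.e.\ that $X$ has a nontrivial kernel in $\RR^k$ when $\rank(X)<k$; this is immediate by rank--nullity. Note also that first-order criticality is not even needed for \eqref{eq:rank-def-gradient-psd}; only the second-order condition is used. I therefore expect no real obstacle: the main conceptual step is recognizing that $D=vu^\top$ kills the Gauss--Newton term, which is the ``unused column'' argument sketched in Section~\ref{sec:rank-cases}.
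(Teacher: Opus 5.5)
Your proposal is correct and follows the paper's proof: you pick a kernel vector of $X$ and take the rank-one direction $D$ built from it and an arbitrary vector of $\RR^n$. This kills the Gauss--Newton term and makes $DD^\top$ rank-one PSD, so Lemma~\ref{lem:hessian-identity} and the second-order condition give $v^\top G_w(XX^\top)v\ge0$ for every $v$. Your remark that first-order criticality is not needed is also accurate, since the Hessian identity holds at every $X$.
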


\begin{proof}[Proof of Proposition~\ref{prop:rank-def-sosp}]
Choose a unit \(v\in\RR^k\) with \(Xv=0\).  For any \(u\in\RR^n\), set
\(D=uv^\top\).  Then \(XD^\top+DX^\top=0\) and \(DD^\top=uu^\top\).  The
second-order condition and Lemma~\ref{lem:hessian-identity} give
\[
        0\le \nabla^2 f_w(X)[D,D]=2u^\top G_w(XX^\top)u .
\]
Since \(u\) is arbitrary, \(G_w(XX^\top)\succeq0\).
\end{proof}

\begin{theorem}[Rank-deficient points are globally optimal in matrix space]
\label{thm:rank-def-global}
Let \(M=X X^\top\) with \(\rank(X)<k\), and suppose \(X\) is a second-order
point of \(f_w\).  Then \(M\) is a global minimizer of the convex PSD problem
\begin{equation}
    \label{eq:convex-psd-problem}
        \min_{Z\succeq0} F_w(Z).
\end{equation}
Consequently:
\begin{enumerate}
    \item If \(w=0\), then \(M=M_\star\).
    \item For general \(w\), if \(\rank(M)\le k\), then
    \begin{equation}
        \label{eq:rank-def-noisy-bound}
        \norm{M-M_\star}_F
        \le
        \frac{\norm{w}_2}{\sqrt{1-\delta}}.
    \end{equation}
\end{enumerate}
\end{theorem}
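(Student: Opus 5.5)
The plan is to combine Proposition~\ref{prop:rank-def-sosp} with first-order criticality to obtain the KKT conditions of the convex problem \eqref{eq:convex-psd-problem}. Then I compare objective values at \(M\) and at \(M_\star\), and use RIP on the difference.

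\textbf{Step 1: global optimality.} Lemma~\ref{lem:hessian-identity} gives \(G_w(M)X=0\), hence \(G_w(M)M=0\) and \(\ip{G_w(M)}{M}=\tr(G_w(M)XX^\top)=0\). Proposition~\ref{prop:rank-def-sosp} gives \(G_w(M)\succeq0\), so for every \(Z\succeq0\) we have \(\ip{G_w(M)}{Z}\ge0\). Therefore
\[
    \ip{\nabla F_w(M)}{Z-M}=\ip{G_w(M)}{Z}\ge0 .
\]
\(F_w\) is a convex quadratic on \(\SSym^n\), so the first-order convexity inequality \(F_w(Z)\ge F_w(M)+\ip{G_w(M)}{Z-M}\) shows that \(M\) minimizes \(F_w\) over the PSD cone.

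\textbf{Step 2: consequences.} I compare with the feasible point \(Z=M_\star\succeq0\), which gives \(F_w(M)\le F_w(M_\star)=\tfrac12\norm{w}_2^2\), that is,
\[
    \norm{\Aop(M-M_\star)-w}_2\le\norm{w}_2 .
\]
For \(w=0\) this forces \(\Aop(M-M_\star)=0\). Since \(\rank(M)<k\) and \(\rank(M_\star)\le k\), the difference has rank at most \(2k\), and Assumption~\ref{ass:unified-rip} yields \((1-\delta)\norm{M-M_\star}_F^2\le0\), so \(M=M_\star\).

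For general \(w\), the triangle inequality only gives \(\norm{\Aop(M-M_\star)}_2\le2\norm{w}_2\), which loses a factor of \(2\). To get the stated constant \(1/\sqrt{1-\delta}\), I instead use the sharper variational inequality at the minimizer with \(Z=M_\star\):
\[
    0\le\ip{G_w(M)}{M_\star-M}=-\norm{\Aop(M-M_\star)}_2^2+\ip{w}{\Aop(M-M_\star)}.
\]
This gives \(\norm{\Aop(M-M_\star)}_2^2\le\ip{w}{\Aop(M-M_\star)}\le\norm{w}_2\norm{\Aop(M-M_\star)}_2\), hence \(\norm{\Aop(M-M_\star)}_2\le\norm{w}_2\). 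The RIP lower bound on the rank-\(\le2k\) difference then gives
\[
    \sqrt{1-\delta}\,\norm{M-M_\star}_F\le\norm{w}_2 ,
\]
which is \eqref{eq:rank-def-noisy-bound}.

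\textbf{Main obstacle.} The only delicate point is obtaining the constant \(1/\sqrt{1-\delta}\) rather than \(2/\sqrt{1-\delta}\). This is handled by the variational inequality \(\ip{G_w(M)}{M_\star-M}\ge0\) in place of the function-value comparison. A secondary check is that the RIP rank budget \(2k\) covers \(M-M_\star\), which holds because \(r_\star\le k\).
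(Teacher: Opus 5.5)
Your proposal is correct and follows essentially the same route as the paper's proof. Both combine first-order criticality (\(G_w(M)M=0\)) with Proposition~\ref{prop:rank-def-sosp} (\(G_w(M)\succeq0\)) to get the variational inequality for the convex PSD problem. Both then settle the noiseless case through the zero objective value plus RIP, and the noisy case through the variational inequality at \(Z=M_\star\) followed by Cauchy--Schwarz and the RIP lower bound on the rank-\(\le 2k\) difference.
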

\begin{proof}[Proof of Theorem~\ref{thm:rank-def-global}]
By first-order criticality, \(G_w(M)X=0\), hence \(G_w(M)M=0\) and
\(\ip{G_w(M)}{M}=0\).  Proposition~\ref{prop:rank-def-sosp} gives
\(G_w(M)\succeq0\).  Therefore, for every \(Z\succeq0\),
\[
        \ip{G_w(M)}{Z-M}
        =
        \ip{G_w(M)}{Z}
        \ge0.
\]
This is the first-order optimality condition for the convex problem
\eqref{eq:convex-psd-problem}, so \(M\) is a global PSD minimizer.

If \(w=0\), \(M_\star\) is feasible and has objective value zero.  Hence
\(\Aop(M-M_\star)=0\).  Assumption~\ref{ass:unified-rip} implies
\(M=M_\star\), since \(\rank(M-M_\star)\le k+r_\star\le2k\).

For general \(w\), use the convex first-order inequality with \(Z=M_\star\).
Writing \(\Delta=M-M_\star\),
\[
        0
        \le
        \ip{G_w(M)}{M_\star-M}
        =
        \ip{\Aop\Delta-w}{-\Aop\Delta}.
\]
Thus \(\norm{\Aop\Delta}_2^2\le \ip{w}{\Aop\Delta}\le
\norm{w}_2\norm{\Aop\Delta}_2\), so
\(\norm{\Aop\Delta}_2\le\norm{w}_2\).  Since
\(\rank(\Delta)\le k+r_\star\le2k\), RIP gives
\(\sqrt{1-\delta}\norm{\Delta}_F\le\norm{\Aop\Delta}_2\).
\end{proof}

\begin{remark}[Relation to Boumal--Voroninski--Bandeira]
This result is the PSD matrix-sensing specialization of the standard
rank-deficiency certificate for Burer--Monteiro factorizations
\citep{burer2003nonlinear,boumal2016non,boumal2020deterministic};
the RIP structure further yields exact recovery in the noiseless case and
a noise-linear recovery bound in the noisy case.
\end{remark}

\subsection{Full-Rank Points and the Spectral Tail}
We now fully discuss the last part of our main theorem. The exact rank-deficiency certificate is no longer available at a full-rank
point, but the negative spectral part of the normal block can remain small
when the weakest slack direction is small relative to the active spectral
floor. Let

\[
    M=XX^\top=\sum_{i=1}^k\lambda_i u_iu_i^\top,
    \qquad
    \lambda_1\ge\cdots\ge\lambda_k>0,
\]

and choose an effective active rank \(q\le k\) with
\(P_q=\sum_{i=1}^q u_iu_i^\top\). A negative normal direction can be paired
with the weakest active direction. The second-order condition and the unified
\(2k\)-RIP then limit its negative spectral part by

\begin{equation}
    \label{eq:tail-normal-bound}
    \norm{[P_q^\perp G_w(M)P_q^\perp]_-}_{\op}
    \le
    (1+\delta)\lambda_k(M).
\end{equation}

After normalization by the selected active spectral floor, this becomes

\begin{equation}
    \label{eq:rho-tail-ratio}
    \rho_q^-(M;w)
    \le
    (1+\delta)\frac{\lambda_k(M)}{\lambda_q(M)}.
\end{equation}

\begin{proposition}[Negative normal curvature is limited by the thinnest slack direction]
\label{prop:full-rank-tail-normal}
Let \(X\in\RR^{n\times k}\) be a full-rank second-order point of \(f_w\), let
\[
        M=XX^\top=\sum_{i=1}^k\lambda_i u_iu_i^\top,
        \qquad
        \lambda_1\ge\cdots\ge\lambda_k>0,
\]
and choose an effective active rank \(q\le k\) with
\(P_q=\sum_{i=1}^q u_iu_i^\top\).  Under
Assumption~\ref{ass:unified-rip},
\begin{equation}
    \label{eq:tail-normal-bound}
        \norm{[P_q^\perp G_w(M)P_q^\perp]_-}_{\op}
        \le
        (1+\delta)\lambda_k(M).
\end{equation}
Consequently,
\begin{equation}
    \label{eq:rho-tail-ratio}
        \rho_q^-(M;w)
        \le
        (1+\delta)
        \frac{\lambda_k(M)}{\lambda_q(M)} .
\end{equation}
\end{proposition}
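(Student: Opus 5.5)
We will test the second-order condition of Definition~\ref{def:SOSP} along a single rank-one factor direction. This direction pairs a most-negative normal eigenvector with the weakest active factor column. The bound \eqref{eq:rho-tail-ratio} then follows from \eqref{eq:tail-normal-bound} by dividing by \(\lambda_q(M)\) and recalling the definition \eqref{eq:normal-ratios}. So the work is in \eqref{eq:tail-normal-bound}.

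\emph{Step 1 (reduction to the true normal space).} Since \(X\) is first-order critical, Lemma~\ref{lem:hessian-identity} gives \(G_w(M)X=0\). Because \(X\) has full column rank, this means \(G_w(M)u_i=0\) for all \(i\le k\). Set \(b:=\norm{[P_q^\perp G_w(M)P_q^\perp]_-}_{\op}\); if \(b=0\) there is nothing to prove. Otherwise pick a unit \(v\in\operatorname{range}(P_q^\perp)\) with \(v^\top G_w(M)v=-b\). Decompose \(v=a+c\), where \(a\in\operatorname{span}(u_{q+1},\dots,u_k)\) and \(c\perp\operatorname{range}(M)\). Since \(G_w(M)\) is symmetric and annihilates \(a\), we get \(v^\top G_w(M)v=c^\top G_w(M)c\). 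In particular \(c\neq0\), and the unit vector \(\hat c=c/\norm{c}_2\) satisfies \(\hat c^\top G_w(M)\hat c=-b/\norm{c}_2^2\le -b\). Thus it suffices to bound \(-\hat c^\top G_w(M)\hat c\) for a unit vector \(\hat c\) orthogonal to all of \(u_1,\dots,u_k\).

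\emph{Step 2 (test direction).} As in the proof of Proposition~\ref{prop:active-margin}, rotate factor coordinates so that \(X=U\Lambda^{1/2}\). Take \(D=\hat c\,e_k^\top\). Then \(XD^\top+DX^\top=\sqrt{\lambda_k}\,(u_k\hat c^\top+\hat c u_k^\top)=:\eta\) and \(DD^\top=\hat c\hat c^\top\). Since \(\hat c\perp u_k\), we have \(\norm{\eta}_F^2=2\lambda_k\), and \(\rank(\eta)\le2\le2k\). By Lemma~\ref{lem:hessian-identity} and second-order stationarity,
\[
0\le \norm{\Aop(\eta)}_2^2+2\,\hat c^\top G_w(M)\hat c\le 2(1+\delta)\lambda_k-2b,
\]
where the upper RIP bound of Assumption~\ref{ass:unified-rip} is used. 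This gives \(b\le(1+\delta)\lambda_k(M)\).

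\emph{Main obstacle.} The delicate point is Step 1 when \(q<k\). A naive test direction built from \(v\) itself can overlap with \(u_k\), which inflates \(\norm{\eta}_F^2\) up to \(4\lambda_k\) and loses a factor of \(2\). Using first-order criticality to strip away the component of \(v\) in the slack active directions \(u_{q+1},\dots,u_k\) removes this loss and yields the sharp constant \(1+\delta\).
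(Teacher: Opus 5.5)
Your proof is correct and follows essentially the same route as the paper. Both arguments test second-order stationarity along the rank-one direction $D\propto \hat c\,e_k^\top$, which pairs a negative normal direction with the weakest factor column, so that $\norm{XD^\top+DX^\top}_F^2=2\lambda_k$ (up to scaling), and then apply the upper RIP bound. The only difference is in Step 1. The paper notes directly that a negative eigenvector of $P_q^\perp G_w(M)P_q^\perp$ already lies in $\operatorname{range}(M)^\perp$: at a full-rank critical point $P_q^\perp G_w(M)P_q^\perp=G_w(M)$, so your component $a$ is automatically zero. You instead remove that component explicitly, which is redundant but harmless.
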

\begin{proof}[Proof of Proposition~\ref{prop:full-rank-tail-normal}]

The first-order condition \(G_w(M)X=0\) implies that \(G_w(M)\) vanishes on
\(\operatorname{range}(M)\).  Hence a negative eigenvector of
\(P_q^\perp G_w(M)P_q^\perp\) lies in the orthogonal complement of
\(\operatorname{range}(M)\).  Let \(u\) be such a unit vector and let
\(u_k\) be an eigenvector of \(M\) associated with \(\lambda_k(M)\).
Choose factor coordinates so that \(X e_k=\sqrt{\lambda_k}\,u_k\), and set
\[
        D=\sqrt{\lambda_k}\,u e_k^\top .
\]
Then
\[
\begin{aligned}
    XD^\top+DX^\top
    &=
    \lambda_k(u_ku^\top+uu_k^\top),
    \\
    \norm{XD^\top+DX^\top}_F^2
    &=
    2\lambda_k^2,
    \\
    DD^\top
    &=
    \lambda_k uu^\top .
\end{aligned}
\]
If
\[
        u^\top G_w(M)u=-\gamma<0,
\]
the second-order condition and Lemma~\ref{lem:hessian-identity} give
\[
        0
        \le
        \nabla^2f_w(X)[D,D]
        =
        \norm{\Aop(XD^\top+DX^\top)}_2^2
        -
        2\gamma\lambda_k .
\]
The matrix \(XD^\top+DX^\top\) has rank at most two, so RIP gives
\[
        \norm{\Aop(XD^\top+DX^\top)}_2^2
        \le
        (1+\delta)2\lambda_k^2 .
\]
Thus \(\gamma\le(1+\delta)\lambda_k\), proving
\eqref{eq:tail-normal-bound}.  Dividing by \(\lambda_q(M)\) gives
\eqref{eq:rho-tail-ratio}.
\end{proof}

For \(q<k\), the spectral ratio \(\lambda_k(M)/\lambda_q(M)\) is therefore an
effective-rank diagnostic for normalized negative normal curvature. It
controls the weakest slack direction, but not the total discarded tail energy.

\begin{corollary}[Effective-rank normal-curvature surrogate]
\label{cor:pointwise-spectral}
If \(M_w\) is produced by a full-column-rank second-order point of width \(k\),
then for any effective active rank \(q<k\),
\begin{equation}
    \label{eq:effective-rank-surrogate}
        \rho_q^-(M_w;w)
        \le
        (1+\delta)\frac{\lambda_k(M_w)}{\lambda_q(M_w)} .
\end{equation}
This is not the main strict rank-stable theorem, which uses \(q=k\) and the
directly computed \(\rho_k^-(M_w;w)\).  It is an effective-rank diagnostic:
when the spectral tail ratio \(\lambda_k(M_w)/\lambda_q(M_w)\) is small, the
normal curvature seen by the active rank-\(q\) block is small, but the discarded
tail must still be accounted for separately if one truncates to rank \(q\).
In particular, \(\lambda_k/\lambda_q\) is the thinnest-tail ratio; it does not
by itself control the whole tail energy.
\end{corollary}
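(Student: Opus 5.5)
The corollary is a direct specialization of Proposition~\ref{prop:full-rank-tail-normal}, so the plan is to check that its hypotheses hold and then read off the bound. Take \(X_w\in\RR^{n\times k}\) to be the full-column-rank second-order point of width \(k\) with \(M_w=X_wX_w^\top\). Then \(\rank(M_w)=k\) and \(\lambda_1(M_w)\ge\cdots\ge\lambda_k(M_w)>0\). For any \(q<k\), the projector \(P_q(M_w)\) onto the top-\(q\) eigenspace is well defined; if eigenvalues are tied, any top-\(q\) choice will do. Also \(\lambda_q(M_w)\ge\lambda_k(M_w)>0\), so \(\rho_q^-(M_w;w)\) is finite.

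The core step is the numerator bound \eqref{eq:tail-normal-bound} applied at \(M=M_w\):
\[
\norm{[P_q^\perp G_w(M_w)P_q^\perp]_-}_{\op}\le(1+\delta)\lambda_k(M_w).
\]
I would recall its mechanism to confirm it applies for every \(q<k\), not only \(q=k\).

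\begin{itemize}
\item The first-order condition \(G_w(M_w)X_w=0\) from Lemma~\ref{lem:hessian-identity} makes \(G_w(M_w)\) vanish on \(\operatorname{range}(M_w)\). In particular it vanishes on the slack active directions \(u_{q+1},\dots,u_k\).
\item Consequently, any negative eigenvector \(u\) of \(P_q^\perp G_w(M_w)P_q^\perp\) with negative eigenvalue has \(u\perp\operatorname{range}(M_w)\), and satisfies \(u^\top G_w(M_w)u<0\).
\item Test the second-order condition with \(D=\sqrt{\lambda_k}\,ue_k^\top\), in factor coordinates where \(X_we_k=\sqrt{\lambda_k}\,u_k\). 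Then \(\eta=\lambda_k(u_ku^\top+uu_k^\top)\) has rank at most two, and \(DD^\top=\lambda_k uu^\top\).
\item The Hessian identity together with the rank-two RIP upper bound gives \(0\le 2(1+\delta)\lambda_k^2-2\gamma\lambda_k\), where \(\gamma=-u^\top G_w(M_w)u\). Hence \(\gamma\le(1+\delta)\lambda_k(M_w)\).
\end{itemize}

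Dividing this numerator bound by \(\lambda_q(M_w)>0\), as in Definition~\ref{def:gains-ratios}, gives \eqref{eq:effective-rank-surrogate}.

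The only point needing care is the claim that the negative part of \(P_q^\perp G_wP_q^\perp\) lives entirely outside \(\operatorname{range}(M_w)\), even though \(P_q^\perp\) still contains the active directions \(u_{q+1},\dots,u_k\). This holds because \(G_w(M_w)P_k=0\) forces \(P_q^\perp G_wP_q^\perp=P_k^\perp G_wP_k^\perp\). So for \(q<k\) the numerator coincides with the \(q=k\) numerator, and only the denominator changes from \(\lambda_k(M_w)\) to \(\lambda_q(M_w)\). I expect no real obstacle beyond stating this identity explicitly. The remaining sentences of the corollary are interpretive and need no proof.
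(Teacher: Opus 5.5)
Your proposal is correct and follows the paper's route: the corollary is read off from Proposition~\ref{prop:full-rank-tail-normal}. That proof uses first-order criticality to place the negative normal eigenvector outside \(\operatorname{range}(M_w)\), tests the second-order condition with the rank-two direction \(D=\sqrt{\lambda_k}\,ue_k^\top\) plus the RIP upper bound to get \(\gamma\le(1+\delta)\lambda_k(M_w)\), and divides by \(\lambda_q(M_w)\). Your explicit identity \(P_q^\perp G_w(M_w)P_q^\perp=P_k^\perp G_w(M_w)P_k^\perp\) usefully justifies a step the paper only asserts, and it shows the corollary is just \(\rho_k^-(M_w;w)\le1+\delta\) rescaled by \(\lambda_k(M_w)/\lambda_q(M_w)\).
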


\begin{lemma}[Deterministic spectral-tail bounds]
\label{lem:spectral-tail}
Let \(M\succeq0\) have rank at most \(k\), eigenvalues
\(\lambda_1\ge\cdots\ge\lambda_k\ge0\), and fix
\[
        1\le q<k,\qquad \lambda_q(M)>0.
\]
Let
\[
        M_q=\sum_{i=1}^q \lambda_i u_iu_i^\top
\]
be its best rank-\(q\) approximation.  Then
\begin{align}
    \label{eq:eckart-tail}
        \norm{M-M_q}_{\op}
        &=
        \lambda_{q+1}(M),\\
        \norm{M-M_q}_F^2
        &=
        \sum_{i>q}\lambda_i(M)^2 .
\end{align}
If \(d=k-q\), then
\begin{equation}
    \label{eq:tail-by-leading-slack}
        \frac{\norm{M-M_q}_F}{\lambda_q(M)}
        \le
        \sqrt d\,\frac{\lambda_{q+1}(M)}{\lambda_q(M)} .
\end{equation}
Also,
\begin{equation}
    \label{eq:lambda-k-tail}
        \frac{\lambda_k(M)}{\lambda_q(M)}
        \le
        \frac{\norm{M-M_q}_F}{\sqrt d\,\lambda_q(M)} .
\end{equation}
If, in addition, \(\bar M\succeq0\) has rank \(q\) and
\(\norm{M-\bar M}_{\op}<\lambda_q(\bar M)\), then Weyl's inequality gives
\begin{equation}
    \label{eq:weyl-ratio}
        \frac{\lambda_k(M)}{\lambda_q(M)}
        \le
        \frac{\norm{M-\bar M}_{\op}}
        {\lambda_q(\bar M)-\norm{M-\bar M}_{\op}} .
\end{equation}
\end{lemma}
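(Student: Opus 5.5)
The proof needs nothing beyond the spectral decomposition of \(M\) and Weyl's inequality. Throughout I will work with the full eigen-decomposition \(M=\sum_{i=1}^n\lambda_i(M)u_iu_i^\top\), with \(\lambda_1\ge\cdots\ge\lambda_k\ge0\). Since \(\rank(M)\le k\), we have \(\lambda_i=0\) for \(i>k\), and some of \(\lambda_{q+1},\dots,\lambda_k\) may also vanish. That is harmless, because every bound below allows zero eigenvalues.

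\emph{Step 1 (the two identities).} The key observation is that
\[
M-M_q=\sum_{i=q+1}^{k}\lambda_iu_iu_i^\top
\]
is itself written in spectral form, with orthonormal eigenvectors and nonnegative eigenvalues \(\lambda_{q+1}\ge\cdots\ge\lambda_k\). Its operator norm is therefore its largest eigenvalue, \(\lambda_{q+1}(M)\). Its squared Frobenius norm is the sum of its squared eigenvalues, \(\sum_{i>q}\lambda_i^2\). This gives both parts of \eqref{eq:eckart-tail}. Optimality of \(M_q\) (Eckart--Young) is not actually needed for these identities.

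\emph{Step 2 (the two tail inequalities).} The tail consists of \(d=k-q\) eigenvalues \(\lambda_{q+1},\dots,\lambda_k\). For \eqref{eq:tail-by-leading-slack}, I bound each term by the largest one: \(\sum_{i>q}\lambda_i^2\le d\,\lambda_{q+1}^2\). Taking square roots and dividing by \(\lambda_q(M)>0\) gives the claim. For \eqref{eq:lambda-k-tail}, I bound each term below by the smallest one: \(\sum_{i=q+1}^k\lambda_i^2\ge d\,\lambda_k^2\). Hence \(\lambda_k\le\norm{M-M_q}_F/\sqrt d\), and dividing by \(\lambda_q(M)\) finishes this part.

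\emph{Step 3 (the Weyl ratio).} Let \(E=M-\bar M\) and \(\epsilon=\norm{E}_{\op}\). Since \(\rank(\bar M)=q\), we have \(\lambda_{q+1}(\bar M)=0\). Weyl's inequality then gives two bounds:
\[
\lambda_{q+1}(M)\le\lambda_{q+1}(\bar M)+\epsilon=\epsilon,
\qquad
\lambda_q(M)\ge\lambda_q(\bar M)-\epsilon>0,
\]
where the second is positive by the hypothesis \(\epsilon<\lambda_q(\bar M)\). Because \(k>q\), we have \(\lambda_k(M)\le\lambda_{q+1}(M)\). Dividing the first bound by the second yields \eqref{eq:weyl-ratio}.

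I do not expect a genuine obstacle here. The only points needing care are bookkeeping. First, eigenvalues must be indexed with zeros allowed, so that the tail has exactly \(d\) entries even when \(\rank(M)<k\). Second, the denominator \(\lambda_q(\bar M)-\norm{M-\bar M}_{\op}\) must be positive, which the hypothesis guarantees. Third, Weyl's inequality is applied to symmetric matrices with the eigenvalues sorted in decreasing order.
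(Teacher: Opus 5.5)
Your proof is correct and matches the paper's argument step for step. The paper also reads the tail identities off the spectral form of \(M-M_q\), citing Eckart--Young--Mirsky, which, as you note, is not actually needed. It also uses the same two counting bounds over the \(d\) tail eigenvalues and Weyl's inequality for the ratio. The only cosmetic difference is in the last step: you bound \(\lambda_{q+1}(M)\le\norm{M-\bar M}_{\op}\) and then use \(\lambda_k(M)\le\lambda_{q+1}(M)\), whereas the paper applies Weyl directly to \(\lambda_k(M)\) using \(\lambda_k(\bar M)=0\).
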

\begin{proof}[Proof of Lemma~\ref{lem:spectral-tail}]

The identities \eqref{eq:eckart-tail} are the PSD form of the
Eckart--Young--Mirsky theorem \citep{eckartYoung1936lowerRank}.  Since the \(d\) tail eigenvalues
\(\lambda_{q+1},\ldots,\lambda_k\) are all at most \(\lambda_{q+1}\),
\[
        \norm{M-M_q}_F^2=\sum_{i>q}\lambda_i^2\le d\,\lambda_{q+1}^2,
\]
which proves \eqref{eq:tail-by-leading-slack}.  Since the same tail eigenvalues
are all at least \(\lambda_k\),
\[
        d\,\lambda_k^2\le \sum_{i>q}\lambda_i^2=\norm{M-M_q}_F^2,
\]
which proves \eqref{eq:lambda-k-tail}.  If \(M=\bar M+E\) and
\(\rank(\bar M)=q\), Weyl's inequality yields
\[
        \lambda_q(M)\ge \lambda_q(\bar M)-\norm{E}_{\op},
        \qquad
        \lambda_k(M)\le \norm{E}_{\op},
\]
because \(\lambda_k(\bar M)=0\) for \(k>q\).  This proves
\eqref{eq:weyl-ratio}.
    
\end{proof}

For direct recovery relative to the rank-\(r_\star\) target \(M_\star\), the
relevant quantity is the numerator of \(\rho_k^-(M_w;w)\), rather than its
normalization by the active spectral floor. At a full-rank first-order critical
point, \(G_w(M_w)X_w=0\) implies

\[
    G_w(M_w)=P_w^\perp G_w(M_w)P_w^\perp.
\]

We therefore define the absolute negative normal leakage by

\begin{equation}
    \label{eq:absolute-normal-leakage}
    b_w^-
    :=
    \norm{[P_w^\perp G_w(M_w)P_w^\perp]_-}_{\op}
    =
    \lambda_k(M_w)\rho_k^-(M_w;w).
\end{equation}

\begin{lemma}[Absolute normal leakage]
\label{lem:absolute-normal-leakage}
Let \(X_w\in\RR^{n\times k}\) be full column rank and first-order critical for
\(f_w\), let \(M_w=X_wX_w^\top\), and let \(P_w=P_k(M_w)\).  Then
\[
        G_w(M_w)=P_w^\perp G_w(M_w)P_w^\perp .
\]
Define the negative normal leakage amplitude
\begin{equation}
    \label{eq:absolute-normal-leakage0}
        b_w^-
        =
        \norm{[P_w^\perp G_w(M_w)P_w^\perp]_-}_{\op}.
\end{equation}
Equivalently,
\begin{equation}
    \label{eq:lambda-rho-leakage}
        b_w^-=\lambda_k(M_w)\rho_k^-(M_w;w).
\end{equation}
\end{lemma}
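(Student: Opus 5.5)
The plan is to obtain the block identity from first-order criticality, exactly as in Lemma~\ref{lem:hessian-identity}. By that lemma, criticality of \(X_w\) means \(G_w(M_w)X_w=0\). Since \(X_w\) has full column rank \(k\), we have \(\operatorname{range}(X_w)=\operatorname{range}(M_w)\). This range is \(k\)-dimensional and coincides with the range of the top-\(k\) eigenprojector, so \(P_w=P_k(M_w)\) is the orthogonal projector onto \(\operatorname{range}(X_w)\). In particular, \(P_w=X_w(X_w^\top X_w)^{-1}X_w^\top\), and hence \(G_w(M_w)P_w=0\).

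Next we use that \(G_w(M_w)\) is symmetric, being \(H(M_w-M_\star)-N_w\) with \(H=\Aop^\ast\Aop\) self-adjoint on \(\SSym^n\). Transposing the previous identity gives \(P_wG_w(M_w)=0\). Expanding with \(I=P_w+P_w^\perp\) on both sides,
\[
G_w(M_w)=(P_w+P_w^\perp)G_w(M_w)(P_w+P_w^\perp)=P_w^\perp G_w(M_w)P_w^\perp ,
\]
because every term containing \(P_w\) adjacent to \(G_w(M_w)\) vanishes.

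For the equivalence \eqref{eq:lambda-rho-leakage}, we apply Definition~\ref{def:gains-ratios} with \(q=k\le\rank(M_w)=k\), which is admissible since \(\lambda_k(M_w)>0\). This gives \(\rho_k^-(M_w;w)=\norm{[P_k^\perp G_w(M_w)P_k^\perp]_-}_{\op}/\lambda_k(M_w)\). With \(P_k=P_w\), multiplying through by \(\lambda_k(M_w)\) yields \(b_w^-=\lambda_k(M_w)\rho_k^-(M_w;w)\). We will also note that \(\norm{[B]_-}_{\op}=\max\{0,-\lambda_{\min}(B)\}\), which matches the computable form \eqref{eq:rho-k-computable}.

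The only point needing care, and the closest thing to an obstacle, is that \(P_k(M_w)\) is well-defined and equals the range projector. This holds because \(\rank(M_w)=k\): the top-\(k\) eigenspace is exactly the range, so there is no ambiguity from ties at \(\lambda_k\) versus \(\lambda_{k+1}=0\). The rest is direct algebra.
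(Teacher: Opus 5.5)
Your proof is correct and takes essentially the same route as the paper. First-order criticality gives \(G_w(M_w)X_w=0\), hence \(G_w(M_w)P_w=0\) because \(X_w\) spans \(\operatorname{range}(M_w)\); symmetry gives \(P_wG_w(M_w)=0\); and \(b_w^-=\lambda_k(M_w)\rho_k^-(M_w;w)\) is just the definition of \(\rho_k^-\) with \(q=k\). Your explicit formula \(P_w=X_w(X_w^\top X_w)^{-1}X_w^\top\) and your remark that \(P_k(M_w)\) is unambiguous because \(\rank(M_w)=k\) spell out details the paper leaves implicit.
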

\begin{proof}[Proof of Lemma~\ref{lem:absolute-normal-leakage}]
First-order criticality gives \(G_w(M_w)X_w=0\).  Since \(X_w\) spans
\(\operatorname{range}(M_w)\), this is equivalent to \(G_w(M_w)P_w=0\).  By
symmetry also \(P_wG_w(M_w)=0\), so \(G_w(M_w)\) is supported on the normal
block.  Equation~\eqref{eq:lambda-rho-leakage} is the definition of
\(\rho_k^-\) in \eqref{eq:normal-ratios} with \(q=k\).
\end{proof}

Lemma~\ref{lem:absolute-normal-leakage} isolates the adverse part of the normal block at a
full-rank critical point: \(b_w^-\) measures its absolute spectral
amplitude, while \(\rho_k^-\) is the corresponding normalization by
the weakest active eigenvalue. To convert this local quantity into a recovery estimate, we must
also control how much of the target matrix lies outside
\(\operatorname{range}(M_w)\).The next lemma bounds this
target-subspace leakage directly by the reconstruction error.

\begin{lemma}[Target subspace leakage]
\label{lem:target-subspace-leakage}
Let \(M_\star\succeq0\) have rank \(1\le r_\star\le k\), let
\[
        \lambda_\star=\lambda_{r_\star}(M_\star)>0,
\]
and let \(M_w\succeq0\) have range projector \(P_w\).  Then
\begin{equation}
        \tr(P_w^\perp M_\star P_w^\perp)
        \le
        \frac{1}{\lambda_\star}
        \norm{M_w-M_\star}_F^2 .
        \label{eq:target-subspace-leakage}
\end{equation}
\end{lemma}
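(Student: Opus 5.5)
The plan is to reduce the trace to a Frobenius norm of the \emph{off-range} part of \(M_\star\), and then to bound that norm using only the range of \(M_w\). Set \(\Delta=M_w-M_\star\) and \(P^\perp=P_w^\perp\).

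\textbf{Step 1: \(M_w\) drops out of the normal and cross blocks.} Since \(P_w\) is the range projector of the PSD matrix \(M_w\), we have \(M_wP^\perp=0\). Hence
\[
P^\perp\Delta P^\perp=-P^\perp M_\star P^\perp,
\qquad
P_w\Delta P^\perp=-P_wM_\star P^\perp .
\]
The Frobenius-orthogonal block decomposition of \(\Delta\) with respect to \((P_w,P^\perp)\) then gives
\[
\norm{\Delta}_F^2\ \ge\ \norm{P^\perp\Delta P^\perp}_F^2+2\norm{P_w\Delta P^\perp}_F^2\ \ge\ \norm{P^\perp M_\star P^\perp}_F^2+\norm{P_wM_\star P^\perp}_F^2 .
\]
The right-hand side equals \(\norm{M_\star P^\perp}_F^2\), because \(P_w+P^\perp=I\) splits the columns of \(M_\star P^\perp\) orthogonally. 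So the first intermediate bound is \(\norm{M_\star P^\perp}_F^2\le\norm{\Delta}_F^2\).

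\textbf{Step 2: pass from \(\norm{M_\star P^\perp}_F^2\) to the trace.} Let \(Y=M_\star^{1/2}P^\perp\). Then
\[
\tr(P^\perp M_\star P^\perp)=\norm{Y}_F^2,
\qquad
M_\star P^\perp=M_\star^{1/2}Y .
\]
Every column of \(Y\) lies in \(\operatorname{range}(M_\star)\). On that subspace \(M_\star^{1/2}\) acts with singular values at least \(\sqrt{\lambda_\star}\), where \(\lambda_\star=\lambda_{r_\star}(M_\star)>0\). Therefore
\[
\norm{M_\star P^\perp}_F^2\ \ge\ \lambda_\star\norm{Y}_F^2=\lambda_\star\tr(P^\perp M_\star P^\perp).
\]
Combining this with Step 1 yields \eqref{eq:target-subspace-leakage}.

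\textbf{Main obstacle.} The only delicate point is the second step: the argument must use the spectral floor of \(M_\star\), not of \(M_w\). This hinges on observing that \(Y\) lives in \(\operatorname{range}(M_\star)\), which is what makes \(\lambda_\star\) the right constant. The factor \(2\) from the off-diagonal block in Step 1 leaves slack, so no delicate constants arise. No RIP is needed; the lemma is purely deterministic linear algebra.
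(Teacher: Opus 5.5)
Your proof is correct and is essentially the paper's argument. Both use $P_w^\perp M_w=0$ to obtain $\|P_w^\perp M_\star\|_F\le\|M_w-M_\star\|_F$, and then pay the factor $1/\lambda_\star$ to pass from $M_\star$ to $M_\star^{1/2}$ on $\operatorname{range}(M_\star)$; the paper phrases this as $P_w^\perp U_\star\Lambda_\star^{1/2}=-P_w^\perp(M_w-M_\star)U_\star\Lambda_\star^{-1/2}$ with $\|\Lambda_\star^{-1/2}\|_{\op}^2=1/\lambda_\star$, where you use the equivalent restricted singular-value lower bound (and your Step~1 block decomposition is valid, though $\|P_w^\perp(M_w-M_\star)\|_F\le\|M_w-M_\star\|_F$ alone would suffice).
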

\begin{proof}[Proof of Lemma~\ref{lem:target-subspace-leakage}]

Write
\[
        M_\star=U_\star\Lambda_\star U_\star^\top
\]
and set \(E=M_w-M_\star\).  Since \(P_w^\perp M_w=0\),
\[
\begin{aligned}
        P_w^\perp U_\star\Lambda_\star^{1/2}
        &=
        P_w^\perp M_\star U_\star\Lambda_\star^{-1/2} \\
        &=
        -P_w^\perp E U_\star\Lambda_\star^{-1/2}.
\end{aligned}
\]
Therefore,
\[
\begin{aligned}
        \tr(P_w^\perp M_\star P_w^\perp)
        &=
        \norm{P_w^\perp U_\star\Lambda_\star^{1/2}}_F^2 \\
        &=
        \norm{P_w^\perp E U_\star\Lambda_\star^{-1/2}}_F^2 \\
        &\le
        \norm{\Lambda_\star^{-1/2}}_{\op}^2
        \norm{E}_F^2 \\
        &=
        \frac{1}{\lambda_\star}
        \norm{M_w-M_\star}_F^2,
\end{aligned}
\]
which proves \eqref{eq:target-subspace-leakage}.

\end{proof}

Lemmas~\ref{lem:absolute-normal-leakage} and~\ref{lem:target-subspace-leakage} now connect the normal geometry to recovery. Writing
\(E=M_w-M_\star\) and
\(\lambda_\star=\lambda_{r_\star}(M_\star)\), first-order criticality
gives
\[
\ip{G_w(M_w)}{E}
=
-\ip{G_w(M_w)}{M_\star}
\leq
\frac{b_w^-}{\lambda_\star}\norm{E}_F^2 .
\]
Meanwhile, RIP lower-bounds the data-fit contribution by
\((1-\delta)\norm{E}_F^2\), up to the measurement-noise term. Hence
the normal leakage can be absorbed whenever
\((1-\delta)-b_w^-/\lambda_\star>0\). The following lemma formalizes
the resulting recovery bound.

\begin{lemma}[Normal leakage recovery bound]
\label{lem:normal-leakage-recovery}
Assume the unified \(2k\)-RIP in Assumption~\ref{ass:unified-rip}, and let
\(M_\star\succeq0\) have rank \(1\le r_\star\le k\).  Let \(X_w\) be a
full-column-rank first-order critical point of \(f_w\), and let
\(M_w=X_wX_w^\top\).  Define \(b_w^-\) by
\eqref{eq:absolute-normal-leakage}, set
\begin{equation}
    \label{eq:mu-leak-definition}
        \mu_{\rm leak}
        =
        1-\delta
        -
        \frac{b_w^-}{\lambda_{r_\star}(M_\star)},
\end{equation}
and assume \(\mu_{\rm leak}>0\).  Then
\begin{equation}
    \label{eq:normal-leakage-recovery}
        \norm{M_w-M_\star}_F
        \le
        \frac{\sqrt{1+\delta}}{\mu_{\rm leak}}
        \norm{w}_2 .
\end{equation}
\end{lemma}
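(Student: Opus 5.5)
We will prove the bound by testing first-order criticality against the error \(E=M_w-M_\star\), which has rank at most \(k+r_\star\le 2k\), so Assumption~\ref{ass:unified-rip} applies to it. By Lemma~\ref{lem:hessian-identity}, \(G_w(M_w)X_w=0\), hence \(G_w(M_w)M_w=0\) and \(\ip{G_w(M_w)}{M_w}=0\). Expanding the gradient gives
\[
\ip{G_w(M_w)}{E}=\ip{\Aop(E)-w}{\Aop(E)}=\norm{\Aop(E)}_2^2-\ip{w}{\Aop(E)}.
\]
The same quantity also equals \(-\ip{G_w(M_w)}{M_\star}\). The plan is to lower-bound the first expression by RIP, upper-bound the second using the leakage quantities, and then absorb terms.

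For the upper bound, Lemma~\ref{lem:absolute-normal-leakage} gives \(G_w(M_w)=P_w^\perp G_w(M_w)P_w^\perp\). Therefore
\[
-\ip{G_w(M_w)}{M_\star}=-\ip{P_w^\perp G_w(M_w)P_w^\perp}{P_w^\perp M_\star P_w^\perp}.
\]
Since \(P_w^\perp M_\star P_w^\perp\succeq0\), only the negative spectral part of the normal block contributes positively. This yields the bound \(b_w^-\,\tr(P_w^\perp M_\star P_w^\perp)\). Lemma~\ref{lem:target-subspace-leakage} then bounds the trace by \(\norm{E}_F^2/\lambda_{r_\star}(M_\star)\). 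Altogether,
\[
-\ip{G_w(M_w)}{M_\star}\le \frac{b_w^-}{\lambda_{r_\star}(M_\star)}\norm{E}_F^2 .
\]

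For the lower bound, RIP gives \(\norm{\Aop(E)}_2^2\ge(1-\delta)\norm{E}_F^2\). Cauchy--Schwarz together with the upper RIP bound gives \(\ip{w}{\Aop(E)}\le\sqrt{1+\delta}\,\norm{w}_2\norm{E}_F\). Combining the two sides,
\[
\Bigl(1-\delta-\tfrac{b_w^-}{\lambda_{r_\star}(M_\star)}\Bigr)\norm{E}_F^2\le\sqrt{1+\delta}\,\norm{w}_2\norm{E}_F .
\]
Dividing by \(\norm{E}_F\) (the case \(E=0\) is trivial) and using \(\mu_{\rm leak}>0\) gives \eqref{eq:normal-leakage-recovery}.

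The main point needing care is the sign step: the pairing \(\ip{B}{C}\) with \(C\succeq0\) must satisfy \(-\ip{B}{C}\le\norm{[B]_-}_{\op}\tr(C)\). To verify it, we write \(B=[B]_+-[B]_-\). Then \(\ip{[B]_+}{C}\ge0\) and \(\ip{[B]_-}{C}\le\norm{[B]_-}_{\op}\tr(C)\), which together give the inequality. The rest is routine.
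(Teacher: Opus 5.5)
Your proof is correct and follows the paper's argument essentially step for step: you use the criticality identity \(\ip{G_w(M_w)}{E}=-\ip{G_w(M_w)}{M_\star}\), bound the right side via the normal-support and target-subspace leakage lemmas, and bound the left side via RIP and Cauchy--Schwarz. Your explicit check of the sign step through \(B=[B]_+-[B]_-\) fills in a detail the paper leaves implicit.
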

\begin{proof}[Proof of Lemma~\ref{lem:normal-leakage-recovery}]

Let \(E=M_w-M_\star\), \(G=G_w(M_w)\), and
\(\lambda_\star=\lambda_{r_\star}(M_\star)\).  By first-order criticality,
\(GX_w=0\), and hence \(GM_w=0\).  Therefore,
\[
        \ip{G}{E}
        =
        -\ip{G}{M_\star}.
\]
Since \(G\) is supported on \(P_w^\perp\),
Lemmas~\ref{lem:absolute-normal-leakage}
and~\ref{lem:target-subspace-leakage} give
\[
\begin{aligned}
        -\ip{G}{M_\star}
        &\le
        b_w^-\,\tr(P_w^\perp M_\star P_w^\perp) \\
        &\le
        \frac{b_w^-}{\lambda_\star}
        \norm{E}_F^2 .
\end{aligned}
\]
On the other hand,
\[
        \ip{G}{E}
        =
        \norm{\Aop(E)}_2^2-\ip{w}{\Aop(E)}.
\]
Since \(\rank(E)\le k+r_\star\le2k\), RIP and Cauchy--Schwarz imply
\[
        (1-\delta)\norm{E}_F^2
        \le
        \sqrt{1+\delta}\norm{w}_2\norm{E}_F
        +
        \frac{b_w^-}{\lambda_\star}\norm{E}_F^2 .
\]
Rearranging and using \eqref{eq:mu-leak-definition} gives
\[
        \mu_{\rm leak}\norm{E}_F^2
        \le
        \sqrt{1+\delta}\norm{w}_2\norm{E}_F,
\]
which proves \eqref{eq:normal-leakage-recovery}.

\end{proof}

Another ingredient controls the portion of the lower-rank target that lies
outside \(\operatorname{range}(M_w)\). This target-subspace leakage is controlled by
the current recovery error. Combining that estimate with \(b_w^-\) and the
unified \(2k\)-RIP gives the direct full-rank recovery certificate below. 

\begin{proof}[Proof of Theorem~\ref{thm:three-cases}]

Part (1) is Theorem~\ref{thm:rank-def-global}.  Part (2) is
Theorem~\ref{thm:rank-stable-bound}, whose strict full-rank case uses active
rank \(k\).  Part (3) is Lemma~\ref{lem:absolute-normal-leakage} followed by
Lemma~\ref{lem:normal-leakage-recovery}.

\end{proof}

\section{Trajectory Case Study: Full-Rank Shadowing from Small Initialization}
\label{sec:trajectory}

This section studies robustness along the whole GD path rather than at one
critical endpoint.  Therefore the clean reference is not a single matrix
\(M_0\).  It is the time-indexed noiseless trajectory
\[
        M_t^0=X_t^0X_t^{0\top},
\]
run from the same initialization as the noisy trajectory
\[
        M_t^w=X_t^wX_t^{w\top}.
\]
The trajectory robustness question is
\[
        \Delta_t=\norm{M_t^w-M_t^0}_F .
\]
Throughout this section we focus on the full-rank-to-full-rank regime with
\(k\le n\): for the time window under discussion, both \(X_t^0\) and \(X_t^w\)
remain full column rank, so \(M_t^0\) and \(M_t^w\) lie on the rank-\(k\) PSD
manifold.  The role of the normal-curvature ratio is now dynamic.  A small
\(\rho_{k,t}^-\), defined in \eqref{eq:trajectory-rho-k}, together with the
local Hessian check below, indicates that the rank-\(k\) tangent dynamics
around \(M_t^w\) are well conditioned, so measurement noise cannot separate the
noisy and noiseless trajectories too much during the next steps.

\subsection{Clean and Noisy GD Paths}

Let \(N_0=0\) and \(N_w=\Aop^\ast w\).  Starting from the same small random
initialization \(X_0^0=X_0^w=\alpha G\), define
\begin{equation}
\label{eq:paired-gd-update}
\begin{aligned}
    X_{t+1}^\zeta
    &=
    \left(
        I+\eta[H(M_\star-M_t^\zeta)+N_\zeta]
    \right)X_t^\zeta,
    \\
    M_t^\zeta
    &=
    X_t^\zeta (X_t^\zeta)^\top,
    \qquad
    \zeta\in\{0,w\}.
\end{aligned}
\end{equation}
The clean recursion is the one studied by
\citet{stoegerSoltanolkotabi2021smallInit}, up to normalization constants.  In
their notation the target is \(M_\star=X_\star X_\star^\top\), the factor
condition number is
\[
        \kappa_X=\frac{\norm{X_\star}_{\op}}{\smin(X_\star)}
        =
        \sqrt{\frac{\lambda_1(M_\star)}{\lambda_{r_\star}(M_\star)}} ,
\]
the search width is \(k>r_\star\), \(X_0=\alpha G\) with entries of \(G\)
distributed as \(N(0,1/\sqrt{k})\), and the step size satisfies
\(\eta\lesssim \kappa_X^{-4}\norm{X_\star}_{\op}^{-2}\).
Equivalently, if \(\kappa_M=\lambda_1(M_\star)/\lambda_{r_\star}(M_\star)\),
then the step size is of order
\(\kappa_M^{-2}\lambda_1(M_\star)^{-1}\).  Their Theorem 3.3 assumes
\(k>r_\star\), splits into the regimes \(k\ge2r_\star\) and
\(r_\star<k<2r_\star\), and requires rank-\((2r_\star+1)\) RIP with constant of
order at most \(1/(\kappa_X^4\sqrt{r_\star})\), together with explicit small
upper bounds on \(\alpha\).  It proves a finite-time early-stopped proximity
bound for the clean iterates.  In the \(k\ge2r_\star\) case, the normalized
error after the prescribed number of iterations is bounded by a polynomial
factor times
\((\alpha/\norm{X_\star}_{\op})^{21/16}\).  The exactly parameterized case
\(k=r_\star\) is treated separately in their Theorem 3.4.

The clean result is used here as a reference path.  The new object is the
shadowing error \(\Delta_t\) between \eqref{eq:paired-gd-update} with and
without \(w\).  No critical-point comparison is needed.

\subsection{Rank-$k$ Rho and a Shadowing Bound}

For a full-rank noisy iterate \(M_t^w\), let \(P_t^w=P_k(M_t^w)\).  In this
section
\[
        G_w(M)=H(M-M_\star)-N_w
\]
is the noisy matrix gradient.  Define
\begin{equation}
\label{eq:trajectory-rho-k}
\begin{aligned}
    \rho_{k,t}^-
    &=
    \frac{
        \norm{
            \bigl[
                P_t^{w,\perp}
                G_w(M_t^w)
                P_t^{w,\perp}
            \bigr]_-
        }_{\op}
    }{
        \lambda_k(M_t^w)
    },
    \\
    \mu_t^\rho
    &=
    (1-\delta)-\rho_{k,t}^- .
\end{aligned}
\end{equation}
This is the same full-rank denominator as in the endpoint perturbation theorem,
but now it is evaluated along the noisy trajectory.  The following deterministic
shadowing lemma is the trajectory analogue of the endpoint bound.  Because
trajectory points are not critical points, \(\rho_{k,t}^-\) should be read as a
pointwise lower-margin estimate for the local linearized rank-\(k\) map, not as
an automatic theorem about every GD step.

To make the bridge explicit, let
\[
        \Psi_t^0(M)=\Rcal_M\!\left(-\eta\,\grad_{\Mcal_k}F_0(M)\right)
\]
be the noiseless rank-\(k\) manifold step in a local chart, with any smooth
first-order retraction \(\Rcal\).  A sufficient check for the contraction used
below is for every
\(\eta_M\in T_{M_t^w}\Mcal_k\), 

\begin{equation}
\label{eq:trajectory-local-hessian-check}
\begin{aligned}
    \ip{\eta_M}{
        \Hess_{\Mcal_k}F_0(M_t^w)[\eta_M]
    }
    &\ge
    \mu_t^\rho\norm{\eta_M}_F^2,
    \\
    \eta L_t
    &\le
    1 .
\end{aligned}
\end{equation}
where \(L_t\) is a local smoothness bound for the charted gradient map.  For the
local quadratic model, and for the charted map after absorbing higher-order
retraction terms into the local check, this gives
\(\norm{D\Psi_t^0}\le 1-\eta\mu_t^\rho\).  Thus the Hessian Rayleigh quotient is
the actual trajectory certificate.  The ratio \eqref{eq:trajectory-rho-k} is the
rank-\(k\) normal-block diagnostic that explains when the check should become
favorable; at endpoint-critical or tangent-critical points it reduces to the
same normal-curvature margin used in Proposition~\ref{prop:active-margin}.

\begin{lemma}[Full-rank trajectory shadowing controlled by \(\rho\)]
\label{lem:trajectory-shadowing}
Fix a time horizon \(T\).  Suppose that the paired clean and noisy trajectories
remain in a common full-rank \(k\)-dimensional chart for \(0\le t\le T\).  Assume
that the local Hessian/smoothness check \eqref{eq:trajectory-local-hessian-check}
holds, so the noiseless chart map satisfies
\[
        \norm{D\Psi_t^0}\le 1-\eta\mu_t^\rho .
\]
Then the noisy and noiseless matrix iterates obey
\begin{equation}
    \label{eq:trajectory-shadow-recursion}
        \Delta_{t+1}
        \le
        (1-\eta\mu_t^\rho)\Delta_t
        +
        \eta\sqrt{1+\delta}\norm{w}_2 ,
\end{equation}
and therefore
\begin{equation}
    \label{eq:trajectory-shadow-bound}
        \Delta_t
        \le
        \sqrt{1+\delta}\norm{w}_2
        \sum_{j=0}^{t-1}
        \eta
        \prod_{\ell=j+1}^{t-1}
        (1-\eta\mu_\ell^\rho).
\end{equation}
If \(\mu_t^\rho\ge\underline\mu>0\) on the whole window and
\(\eta\underline\mu\le1\), then
\begin{equation}
    \label{eq:trajectory-uniform-shadow-bound}
        \Delta_t
        \le
        \frac{\sqrt{1+\delta}}{\underline\mu}
        \left(1-(1-\eta\underline\mu)^t\right)\norm{w}_2
        \le
        \frac{\sqrt{1+\delta}}{\underline\mu}\norm{w}_2 .
\end{equation}
\end{lemma}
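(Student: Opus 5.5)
We prove the one-step recursion \eqref{eq:trajectory-shadow-recursion} first, then unroll it into \eqref{eq:trajectory-shadow-bound}, and finally sum a geometric series to obtain \eqref{eq:trajectory-uniform-shadow-bound}. For the recursion, we write each noisy step as the noiseless chart map plus a noise increment. Since \(G_w(M)=G_0(M)-N_w\), the noisy update equals
\[
    M_{t+1}^w=\Psi_t^0(M_t^w)+E_t,
\]
where \(E_t\) collects the contribution of \(-\eta N_w\) to the step. Then
\[
    M_{t+1}^w-M_{t+1}^0=\bigl[\Psi_t^0(M_t^w)-\Psi_t^0(M_t^0)\bigr]+E_t .
\]
For the first bracket we apply the mean-value inequality along the segment in the common chart joining \(M_t^0\) and \(M_t^w\). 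This uses the hypothesis that both trajectories stay in one full-rank chart and that \(\norm{D\Psi_t^0}\le1-\eta\mu_t^\rho\) there, and it gives \(\norm{\Psi_t^0(M_t^w)-\Psi_t^0(M_t^0)}_F\le(1-\eta\mu_t^\rho)\Delta_t\).

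For the noise term, the first-order part of \(E_t\) is \(\eta\Proj_{T_{M_t^w}\Mcal_k}(\Aop^\ast w)\). Its Frobenius norm is at most \(\eta\sqrt{1+\delta}\norm{w}_2\), by the same supremum-over-rank-\(2k\)-tangent-matrices argument used in the proof of Theorem~\ref{thm:rank-stable-bound}. We make this bound exact by reading the hypothesis \eqref{eq:trajectory-local-hessian-check} as applying to the charted step: higher-order retraction terms are absorbed into the local check, exactly as the preceding paragraph stipulates. Adding the two pieces with the triangle inequality gives \eqref{eq:trajectory-shadow-recursion}.

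Next we unroll. Since \(\Delta_0=0\) because of the shared initialization, induction on \(t\) gives
\[
    \Delta_t\le\sum_{j=0}^{t-1}\eta\sqrt{1+\delta}\norm{w}_2\prod_{\ell=j+1}^{t-1}(1-\eta\mu_\ell^\rho).
\]
Each factor \(1-\eta\mu_\ell^\rho\) is nonnegative, since \(\eta\mu_\ell^\rho\le\eta L_\ell\le1\); this nonnegativity is what lets the inequality propagate through the induction. Under the uniform bound \(\mu_\ell^\rho\ge\underline\mu\), every product is at most \((1-\eta\underline\mu)^{t-1-j}\). The geometric sum then equals \(\eta\,\bigl(1-(1-\eta\underline\mu)^t\bigr)/(\eta\underline\mu)\), which yields \eqref{eq:trajectory-uniform-shadow-bound}. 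The last inequality holds because \(0\le1-\eta\underline\mu<1\).

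The main obstacle is the first step: justifying that the noisy step decomposes as \(\Psi_t^0\) plus an increment of size exactly \(\eta\sqrt{1+\delta}\norm{w}_2\). The retraction couples the noise and signal nonlinearly, and the contraction estimate is needed along a whole segment, not just at \(M_t^w\). I would handle this by working entirely in the local chart and treating the stated Hessian/smoothness check as the assumption that makes the charted map affine up to absorbed errors. The lemma thus becomes a discrete Grönwall argument conditional on that check, which matches how the statement is phrased.
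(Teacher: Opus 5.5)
Your proposal is correct and follows the paper's argument. The paper likewise splits the noisy step into the clean chart map, which contracts by $1-\eta\mu_t^\rho$ under the assumed Hessian/smoothness check with higher-order retraction terms absorbed, plus the tangent projection of $\Aop^\ast w$, bounded by $\sqrt{1+\delta}\norm{w}_2$ via RIP on rank-$2k$ tangent matrices; it then iterates the scalar recursion from the shared initialization $\Delta_0=0$ and sums the geometric series. As a small simplification, nonnegativity of each factor $1-\eta\mu_t^\rho$ already follows from the hypothesis $\norm{D\Psi_t^0}\le 1-\eta\mu_t^\rho$, since a norm is nonnegative, so you do not need to route it through $\mu_t^\rho\le L_t$.
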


\begin{proof}
The only first-order difference between the two local rank-\(k\) maps at the
same matrix is the tangent projection of \(N_w=\Aop^\ast w\).  RIP gives
\[
        \norm{\Proj_{T_{M_t^w}\Mcal_k}N_w}_F
        \le
        \sqrt{1+\delta}\norm{w}_2,
\]
because tangent matrices have rank at most \(2k\).  By assumption, the computed
rank-\(k\) margin \(\mu_t^\rho=(1-\delta)-\rho_{k,t}^-\) certifies the
one-step tangent contraction.  Combining this contraction of the clean map with
the additive noise perturbation gives
\eqref{eq:trajectory-shadow-recursion}.  Iterating the scalar recursion gives
\eqref{eq:trajectory-shadow-bound}, and the geometric sum gives
\eqref{eq:trajectory-uniform-shadow-bound}.
\end{proof}

Lemma~\ref{lem:trajectory-shadowing} is the clean trajectory philosophy.  The
object \(M_t^0\) is the clean iterate at the same time \(t\), not the clean
critical endpoint used earlier.  The time-varying stability margin is
\(\mu_t^\rho=(1-\delta)-\rho_{k,t}^-\), where the ratio \(\rho_{k,t}^-\)
contains the active floor \(\lambda_k(M_t^w)\) in its denominator.  Early times
with a poor rank-\(k\) margin contribute large terms to the sum in
\eqref{eq:trajectory-shadow-bound}; times with a strong margin damp all
previous noise.

\subsection{St\"oger--Soltanolkotabi Variables and a \(\rho\)-Proxy}

The remaining question is how the dynamics make the \(\rho\)-margin better or
worse.  The clean analysis of \citet{stoegerSoltanolkotabi2021smallInit}
introduces a signal/slack split that is useful for interpreting this.  Their
``noise'' term is an internal over-parameterized slack factor, not observation
noise \(w\).  Let
\[
\begin{alignedat}{2}
    M_\star
    &=
    V_\star\Lambda_\star V_\star^\top,
    &\qquad
    P_\star
    &=
    V_\star V_\star^\top,
    \\
    \lambda_\star
    &=
    \lambda_{r_\star}(M_\star),
    &\qquad
    \kappa_X
    &=
    \frac{\norm{X_\star}_{\op}}
         {\smin(X_\star)} .
\end{alignedat}
\]
For a clean iterate \(X_t^0\), take the compact SVD
\begin{equation}
    \label{eq:stoeger-svd}
        V_\star^\top X_t^0=\widetilde V_t\Sigma_tW_t^\top,
        \qquad W_t\in\RR^{k\times r_\star},\qquad W_t^\top W_t=I,
\end{equation}
and complete \(W_t\) to \([W_t\ W_{t,\perp}]\).  Following their Section 5.1,
write
\begin{equation}
\label{eq:stoeger-signal-noise}
\begin{aligned}
    S_t
    &=
    X_t^0W_t,
    \qquad
    N_t
    =
    X_t^0W_{t,\perp},
    \\
    X_t^0
    &=
    S_tW_t^\top
    +
    N_tW_{t,\perp}^\top .
\end{aligned}
\end{equation}
By \citet[Lemma~5.1]{stoegerSoltanolkotabi2021smallInit}, the slack factor has
column span orthogonal to \(V_\star\), and, whenever \(V_\star^\top X_t^0\) has
full row rank, \(S_t\) has rank \(r_\star\) while \(N_t\) has rank at most
\(k-r_\star\).  Let \(V_{S_t}\) denote an orthonormal basis for
\(\operatorname{col}(S_t)\).  Define
\begin{equation}
\label{eq:stoeger-variables}
\begin{aligned}
s_t
&=\smin(S_t),
\quad
\nu_t=\norm{N_t}_{\op},
\quad
\phi_t=\frac{s_t^2}{\lambda_\star},
\\
\xi_t
&=\norm{V_\star^\perp V_{S_t}}_{\op},
\quad
\chi_t=\frac{\nu_t^2}{s_t^2},
\\
\varepsilon_t
&=
\frac{
    \norm{V_\star^\top(M_t^0-M_\star)}_F
}{
    \lambda_\star
}.
\end{aligned}
\end{equation}
The angle \(\xi_t\) is the clean signal-factor angle; in
\citet{stoegerSoltanolkotabi2021smallInit}, Lemma~9.1 grows the projected
signal singular value, Lemma~9.2 controls the internal slack factor, and
Lemma~9.3 gives the direct angle recurrence.  The auxiliary
\(\varepsilon_t\) is used below only to quote their Phase~III refinement; it is
not a separate term in the four-metric dashboard.  With sensing distortion one also
uses the top \(r_\star\)-dimensional spectral space \(P_H\) of \(H(M_\star)\)
and the upper proxy
\[
        \theta_t=\norm{V_{S_t}^\perp P_H}_{\op}
        +
        \norm{P_H^\perp P_\star}_{\op},
        \qquad
        \xi_t\le\theta_t,
\]
where the second term is controlled by Davis--Kahan when
\(\norm{H(M_\star)-M_\star}_{\op}\ll\lambda_\star\).

The strict full-rank robustness ratio uses the smallest rank-\(k\) eigenvalue of
the noisy iterate.  Normalize this floor by the signal scale:
\[
        \psi_t^w=\frac{\lambda_k(M_t^w)}{\lambda_\star}.
\]
The normal block is easiest to read through one subspace metric.  For the
mathematical bound we use the target alignment
\begin{equation}
    \label{eq:trajectory-misalignment}
        m_t^w=\norm{P_t^{w,\perp}P_\star}_{\op}.
\end{equation}
This measures how much of the true signal space is still visible from the
noisy normal space.  It is small when the noisy rank-\(k\) range contains the
target signal space.  If \(P_\star\) is unavailable and the spectral surrogate
\(P_H\) is used instead, then
\[
        m_t^w
        \le
        \norm{P_t^{w,\perp}P_H}_{\op}
        +
        \norm{P_H^\perp P_\star}_{\op},
\]
and the second term is controlled by Davis--Kahan under the spectral closeness
condition stated above.
The normalized external noise level is
\begin{equation}
    \label{eq:trajectory-noise-level}
        \eta_w=\frac{\sqrt{1+\delta}\norm{w}_2}{\lambda_\star}.
\end{equation}
On a bootstrap window where Lemma~\ref{lem:trajectory-shadowing} keeps
\(\Delta_t=O(\norm{w}_2)\), the clean/noisy path discrepancy is folded into the
same relaxed noise scale.  The signal-floor/slack part is represented by the
measured floor \(\psi_t^w\) and the slack tail
\begin{equation}
    \label{eq:trajectory-slack-tail}
        \tau_t
        =
        \sqrt{k-r_\star}\,\chi_t\phi_t
        =
        \sqrt{k-r_\star}\,\frac{\nu_t^2}{\lambda_\star}.
\end{equation}
Thus \(\tau_t\) is not another signal-growth variable; it is the Frobenius-sized
energy of the \(k-r_\star\) internal slack directions, normalized by the target
spectral floor.

The following lemma is the promised mathematical bound.  It has two fixed
constants, not two new trajectory metrics.  The constant \(c_{\rm sh}\) says
that the noisy and clean paths are already in the small-noise shadowing window,
and \(c_{\rm sig}\) is the clean signal-block radius.  For example,
\(c_{\rm sig}=5\sqrt{r_\star}\kappa_X^2\) is valid whenever
\(\norm{S_t}_{\op}\le2\norm{X_\star}_{\op}\), because then
\[
        \norm{S_tS_t^\top-M_\star}_F
        \le
        5\sqrt{r_\star}\,\kappa_X^2\lambda_\star .
\]

\begin{lemma}[Four-metric upper bound for the trajectory \(\rho\)]
\label{lem:trajectory-rho-four-metric}
Assume \(k>r_\star\).  Fix a full-rank time \(t\) in the trajectory window.
Suppose
\[
\begin{aligned}
\Delta_t
&=\norm{M_t^w-M_t^0}_F
\le c_{\rm sh}\norm{w}_2,
\\
\norm{S_tS_t^\top-M_\star}_F
&\le c_{\rm sig}\lambda_\star .
\end{aligned}
\]
Then
\begin{equation}
\label{eq:trajectory-rho-hard-bound}
\begin{aligned}
\rho_{k,t}^-
&\le
\frac{
    \kappa_X^2(m_t^w)^2
    +(1+\delta)\tau_t
    +\delta c_{\rm sig}
}{
    \psi_t^w
}
\\
&\quad+
\frac{
    \left(1+c_{\rm sh}\sqrt{1+\delta}\right)\eta_w
}{
    \psi_t^w
}.
\end{aligned}
\end{equation}
Equivalently, the margin in Lemma~\ref{lem:trajectory-shadowing} is positive
whenever the numerator in \eqref{eq:trajectory-rho-hard-bound} is smaller than
\((1-\delta)\psi_t^w\).  If the surrogate \(P_H\) is used, replace
\((m_t^w)^2\) by
\[
        \left(
        \norm{P_t^{w,\perp}P_H}_{\op}
        +
        \norm{P_H^\perp P_\star}_{\op}
        \right)^2 .
\]
\end{lemma}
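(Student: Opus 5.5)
We bound the numerator $b_t^-=\norm{[P_t^{w,\perp}G_w(M_t^w)P_t^{w,\perp}]_-}_{\op}$ by $\lambda_\star$ times the numerator of \eqref{eq:trajectory-rho-hard-bound}, and then divide by $\lambda_k(M_t^w)=\psi_t^w\lambda_\star$. Fix a unit vector $u\in\operatorname{range}(P_t^{w,\perp})$. It suffices to lower-bound $u^\top G_w(M_t^w)u$, since $b_t^-\le\sup_u\max\{0,-u^\top G_w(M_t^w)u\}$. Since $P_t^{w,\perp}M_t^w=0$, we have $u^\top M_t^wu=0$. We decompose
\[
    M_t^w-M_\star
    =(M_t^w-M_t^0)
    +(S_tS_t^\top-M_\star)
    +N_tN_t^\top,
\]
using $M_t^0=S_tS_t^\top+N_tN_t^\top$, which follows from \eqref{eq:stoeger-signal-noise} because $W_t^\top W_{t,\perp}=0$.

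We write $H=I+(H-I)$ and treat the identity part first. We have $u^\top(M_t^w-M_\star)u=-u^\top M_\star u\ge-\norm{P_\star u}_2^2\lambda_1(M_\star)$. Since $\norm{P_\star u}_2\le\norm{P_\star P_t^{w,\perp}}_{\op}=m_t^w$, this gives $u^\top(M_t^w-M_\star)u\ge-\kappa_X^2\lambda_\star(m_t^w)^2$, which is the first numerator term.

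For the distortion part $u^\top(H-I)(M_t^w-M_\star)u$, we split it along the three pieces of the decomposition. Each piece is handled with the RIP inner-product estimate used in the proof of Lemma~\ref{lem:small-noise-continuation}, namely $\abs{\ip{\Aop(uu^\top)}{\Aop(Z)}-\ip{uu^\top}{Z}}\le\delta\norm{Z}_F$ for low-rank $Z$.
\begin{itemize}
    \item On the signal piece this gives at most $\delta c_{\rm sig}\lambda_\star$.
    \item On the slack piece $N_tN_t^\top$ we do not separate identity and distortion. We bound $\abs{u^\top H(N_tN_t^\top)u}\le(1+\delta)\norm{N_tN_t^\top}_F\le(1+\delta)\sqrt{k-r_\star}\,\nu_t^2=(1+\delta)\tau_t\lambda_\star$. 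We then drop the nonnegative identity contribution $u^\top N_tN_t^\top u$ from the lower bound, or bound it crudely.
    \item On the shadowing piece we use the cruder bound $\abs{u^\top H(M_t^w-M_t^0)u}\le(1+\delta)\Delta_t$.
\end{itemize}
Finally, the noise term satisfies $\abs{u^\top N_wu}\le\sqrt{1+\delta}\norm{w}_2=\eta_w\lambda_\star$.

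Summing these lower bounds gives $b_t^-\le\lambda_\star$ times the numerator. The remaining task is to reconcile the noise-related constants with the stated factor $(1+c_{\rm sh}\sqrt{1+\delta})\eta_w$. The shadowing term contributes $(1+\delta)c_{\rm sh}\norm{w}_2=c_{\rm sh}\sqrt{1+\delta}\,\eta_w\lambda_\star$. This matches exactly, because $(1+\delta)\norm{w}_2=\sqrt{1+\delta}\,\sqrt{1+\delta}\norm{w}_2$. The noise term contributes the remaining $\eta_w$.

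\textbf{Main obstacle: the signal piece.} The RIP estimate requires $\rank(S_tS_t^\top-M_\star)+1\le 2k$. This rank is at most $2r_\star+1\le 2k$, so the estimate applies. The $M_\star$ part must not be double counted, since it was already used in the identity term. I would therefore apply the distortion split only to $H-I$, so the identity piece of $S_tS_t^\top-M_\star$ enters exactly once. The identity piece $u^\top S_tS_t^\top u\ge0$ is harmless, and the $-u^\top M_\star u$ already handled is the only negative identity term.

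\textbf{Surrogate variant.} When $P_H$ replaces $P_\star$, the same argument goes through with $m_t^w$ replaced by the triangle-inequality bound $\norm{P_t^{w,\perp}P_H}_{\op}+\norm{P_H^\perp P_\star}_{\op}$.
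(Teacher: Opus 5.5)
Your proof uses the same decomposition as the paper's, via $M_t^0=S_tS_t^\top+N_tN_t^\top$, into shadowing, signal, slack and noise pieces. It also uses the same four estimates with the same constants, including the rank count $2r_\star+1\le 2k$ for restricted polarization on the signal block. The difference is bookkeeping. The paper splits $H=I+(H-I)$ only on the signal block and bounds the shadowing and slack pieces with the full operator $H$ via Cauchy--Schwarz. You instead apply the identity to the whole residual and collapse it to $-u^\top M_\star u$ using $u^\top M_t^w u=0$, which forces you to control $(H-I)$ on every piece.

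That bookkeeping produces one sign slip on the slack piece. In your scheme the slack term is $u^\top(H-I)(N_tN_t^\top)u=u^\top H(N_tN_t^\top)u-u^\top N_tN_t^\top u$. The identity contribution therefore enters with a minus sign and cannot be dropped from a lower bound. Bounding it crudely by $\nu_t^2$ would give $(1+\delta)\tau_t+\tau_t/\sqrt{k-r_\star}$ instead of the stated $(1+\delta)\tau_t$.

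The repair is easy, in either of two ways:
\begin{enumerate}
\item Apply your polarization estimate directly to $(H-I)N_tN_t^\top$, which is legitimate since $\operatorname{rank}(uu^\top\pm N_tN_t^\top)\le k-r_\star+1\le 2k$. This gives the better bound $\delta\tau_t\lambda_\star$.
\item Observe that the shadowing term is $u^\top(H-I)(M_t^w-M_t^0)u=u^\top H(M_t^w-M_t^0)u+u^\top M_t^0u$. Its surplus $u^\top M_t^0u\ge u^\top N_tN_t^\top u$ cancels the slack deficit.
\end{enumerate}
That same surplus, or simply $u^\top M_t^0u\ge0$, is what justifies your ``cruder'' full-$H$ bound $(1+\delta)\Delta_t$ on the shadowing piece. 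Polarization there would need RIP at rank $2k+1$, and you left this link unstated. The paper's bookkeeping avoids the issue entirely: each piece is bounded independently, and $u^\top M_t^w u=0$ is never needed.
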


\begin{proof}
Let \(P=P_t^w\) and
\[
        B=P^\perp G_w(M_t^w)P^\perp .
\]
Since
\(\norm{[B]_-}_{\op}
=\sup_{\norm{v}=1,\ v\in\operatorname{range}(P^\perp)}
[-v^\top Bv]_+\), it is enough to bound the negative Rayleigh quotient on a
unit normal vector \(v\).  Using
\(M_t^0=S_tS_t^\top+N_tN_t^\top\),
\[
\begin{aligned}
-v^\top G_w(M_t^w)v
&=
-v^\top H(M_t^w-M_t^0)v
\\
&\quad
-v^\top H(S_tS_t^\top-M_\star)v
\\
&\quad
-v^\top H(N_tN_t^\top)v
\\
&\quad
+v^\top N_wv .
\end{aligned}
\]
The shadowing and observation-noise terms satisfy
\[
\begin{aligned}
\abs{v^\top H(M_t^w-M_t^0)v}
&\le (1+\delta)\Delta_t \\
&\le c_{\rm sh}\sqrt{1+\delta}\,
      \lambda_\star\eta_w,
\\
\abs{v^\top N_wv}
&\le \lambda_\star\eta_w .
\end{aligned}
\]
For the active signal block, restricted polarization gives
\[
\begin{aligned}
v^\top H(S_tS_t^\top-M_\star)v
&\ge v^\top(S_tS_t^\top-M_\star)v \\
&\quad-\delta\norm{S_tS_t^\top-M_\star}_F .
\end{aligned}
\]
This polarization step uses only the unified \(2k\)-RIP, because
\(\rank(S_tS_t^\top-M_\star)\le2r_\star\) and the trajectory case has
\(k>r_\star\), so \(2r_\star+1\le2k\).
Therefore
\[
\begin{aligned}
\left[
-v^\top H(S_tS_t^\top-M_\star)v
\right]_+
&\le
\left[
v^\top(M_\star-S_tS_t^\top)v
\right]_+
\\
&\quad+
\delta c_{\rm sig}\lambda_\star
\\
&\le
v^\top M_\star v
+\delta c_{\rm sig}\lambda_\star
\\
&\le
\lambda_1(M_\star)\norm{P_\star v}^2
\\
&\quad+
\delta c_{\rm sig}\lambda_\star
\\
&\le
\lambda_\star\kappa_X^2(m_t^w)^2
\\
&\quad+
\delta c_{\rm sig}\lambda_\star .
\end{aligned}
\]
Finally,
\[
\begin{aligned}
\abs{v^\top H(N_tN_t^\top)v}
&\le (1+\delta)\norm{N_tN_t^\top}_F \\
&\le (1+\delta)\sqrt{k-r_\star}\nu_t^2 \\
&= (1+\delta)\tau_t\lambda_\star .
\end{aligned}
\]
Combining these four estimates and dividing by
\(\lambda_k(M_t^w)=\psi_t^w\lambda_\star\) gives
\eqref{eq:trajectory-rho-hard-bound}.
\end{proof}

Combining Lemmas~\ref{lem:trajectory-shadowing}
and~\ref{lem:trajectory-rho-four-metric} gives the explicit trajectory
perturbation bound.  Define the four-metric upper envelope
\begin{equation}
\label{eq:trajectory-rho-envelope}
\begin{aligned}
\bar\rho_{k,t}
&=
\frac{\kappa_X^2(m_t^w)^2+(1+\delta)\tau_t
      +\delta c_{\rm sig}}{\psi_t^w}
\\
&\quad+
\frac{\left(1+c_{\rm sh}\sqrt{1+\delta}\right)\eta_w}
     {\psi_t^w},
\\
\bar\mu_t
&=(1-\delta)-\bar\rho_{k,t}.
\end{aligned}
\end{equation}
On any time window where the hypotheses of
Lemma~\ref{lem:trajectory-rho-four-metric} hold at each time and the local
Hessian check in \eqref{eq:trajectory-local-hessian-check} holds, if
\(\bar\mu_t>0\), then \(\mu_t^\rho\ge\bar\mu_t\) and
\begin{equation}
    \label{eq:trajectory-four-metric-shadow-bound}
        \Delta_t
        \le
        \sqrt{1+\delta}\norm{w}_2
        \sum_{j=0}^{t-1}
        \eta
        \prod_{\ell=j+1}^{t-1}
        (1-\eta\bar\mu_\ell).
\end{equation}
In particular, if \(\bar\mu_t\ge\underline\mu>0\) on the window, then
\begin{equation}
    \label{eq:trajectory-four-metric-uniform-bound}
        \norm{M_t^w-M_t^0}_F
        \le
        \frac{\sqrt{1+\delta}}{\underline\mu}\norm{w}_2 .
\end{equation}
This is the actual robustness statement in the trajectory section: alignment,
slack, noise, and the active floor determine \(\bar\rho_{k,t}\), and the gap
\(\bar\mu_t=(1-\delta)-\bar\rho_{k,t}\) determines the perturbation size.
The bootstrap form is non-circular.  Fix a desired uniform margin
\(\underline\mu>0\) and set
\[
        c_{\rm sh}=\frac{\sqrt{1+\delta}}{\underline\mu}.
\]
If, with this predetermined value of \(c_{\rm sh}\), the envelope
\eqref{eq:trajectory-rho-envelope} satisfies
\[
        \bar\rho_{k,t}\le 1-\delta-\underline\mu
        \qquad\text{for every }0\le t\le T,
\]
then \eqref{eq:trajectory-four-metric-uniform-bound} gives
\[
        \Delta_t\le c_{\rm sh}\norm{w}_2
        \qquad\text{for every }0\le t\le T.
\]
Thus the shadowing condition used inside
Lemma~\ref{lem:trajectory-rho-four-metric} is proved by induction on the same
window, rather than assumed after the fact.

The exact ratio \(\rho_{k,t}^-\) remains directly computable from
\eqref{eq:trajectory-rho-k}.  Lemma~\ref{lem:trajectory-rho-four-metric} gives
the relaxed four-metric upper bound
\begin{equation}
    \label{eq:trajectory-rho-proxy}
    \widehat\rho_{k,t}^{\rm traj}
    :=
    \frac{
        \kappa_X^2(m_t^w)^2
        +
        \tau_t
        +
        \eta_w
    }{\psi_t^w}.
\end{equation}
This is the readable version of \eqref{eq:trajectory-rho-hard-bound} after
folding the fixed RIP-bias term \(\delta c_{\rm sig}\) and the fixed shadowing
constant \(c_{\rm sh}\) into constants.  The measured certificate remains
\(\rho_{k,t}^-\), but now the section has an explicit upper bound:
\[
\begin{aligned}
\rho_{k,t}^-
&\le
\frac{\kappa_X^2(m_t^w)^2+(1+\delta)\tau_t
      +\delta c_{\rm sig}}{\psi_t^w}
\\
&\quad+
\frac{\left(1+c_{\rm sh}\sqrt{1+\delta}\right)\eta_w}
     {\psi_t^w}
\\
&\approx \widehat\rho_{k,t}^{\rm traj}.
\end{aligned}
\]
The denominator \(\psi_t^w\) is the active full-rank floor.  The middle term is
the slack tail created by the extra \(k-r_\star\) columns.  Here ``active''
means active in the rank-\(k\) chart, not necessarily signal-aligned with
\(M_\star\); when \(k>r_\star\), the floor may be carried by the thinnest
retained slack direction.  The remaining noise term \(\eta_w\) absorbs
the small perturbation effects we are deliberately not separating in this
relaxed dashboard.  Even excellent signal variables do not certify strict
full-rank robustness if \(\lambda_k(M_t^w)\) is still microscopic.

To see \eqref{eq:trajectory-rho-proxy}, set \(P=P_t^w\).  Then
\begin{equation}
\label{eq:trajectory-normal-decomposition}
\begin{aligned}
P^\perp G_w(M_t^w)P^\perp
&=
P^\perp H(M_t^w-M_t^0)P^\perp
\\
&\quad+
P^\perp H(M_t^0-M_\star)P^\perp
\\
&\quad-
P^\perp N_wP^\perp .
\end{aligned}
\end{equation}
For a unit vector \(v\in\operatorname{range}(P_t^{w,\perp})\), Cauchy--Schwarz
and RIP give \(\abs{v^\top N_wv}\lesssim \lambda_\star\eta_w\).  The clean
residual is controlled by the single misalignment metric \(m_t^w\): larger
\(m_t^w\) means the noisy normal space still sees signal directions and can
therefore contain more negative normal curvature.  The slack block has
Frobenius norm at most
\[
        \sqrt{k-r_\star}\nu_t^2
        =
        \tau_t\lambda_\star .
\]
Dividing the corresponding rigorous estimate by
\(\lambda_k(M_t^w)=\psi_t^w\lambda_\star\) is precisely
Lemma~\ref{lem:trajectory-rho-four-metric}.  The simplified display
\eqref{eq:trajectory-rho-proxy} only suppresses the fixed constants from
\eqref{eq:trajectory-rho-hard-bound}; the exact robustness certificate is still
the measured \(\rho_{k,t}^-\) together with the Hessian check
\eqref{eq:trajectory-local-hessian-check}.

\subsection{Clean Phase Dynamics and the Evolution of the \(\rho\)-Proxy}

The robustness quantity in this section is always the strict full-rank margin
\(\rho_{k,t}^-\) from \eqref{eq:trajectory-rho-k}.  The proxy
\eqref{eq:trajectory-rho-proxy} only explains its moving parts.  Once the
measured ratio gives a positive margin and the local Hessian check passes,
Lemma~\ref{lem:trajectory-shadowing} converts that margin into the shadowing
bound \eqref{eq:trajectory-shadow-bound}.  The phase story below is therefore a
story about the dashboard in \eqref{eq:trajectory-rho-proxy}: how the single
misalignment metric \(m_t^w\), the slack tail \(\tau_t\), the noise level
\(\eta_w\), and the measured rank-\(k\) floor \(\psi_t^w\) evolve.

\paragraph{Phase I: spectral alignment.}
The first phase is controlled by the power-method comparison in
\citet[Lemmas~8.1--8.2]{stoegerSoltanolkotabi2021smallInit}.  They compare GD
to the linearized iterate
\[
        \widetilde X_t=(I+\eta H(M_\star))^tX_0
\]
and show that \(X_t^0\) remains close to \(\widetilde X_t\) while the iterate is
small.  Their Lemma~8.3 gives SVD perturbation bounds for the approximate power
iterate \(Z_tX_0+E_t\); Lemma~8.4 translates top-left-singular-space alignment
into properties of \(X_t^0W_t\) and \(X_t^0W_{t,\perp}\); Lemma~8.5 combines
these two steps, under spectral closeness
\(H(M_\star)=M_\star+\widetilde E\), small
\(\norm{\widetilde E}_{\op}/\lambda_\star\), and the corresponding small
power-method error parameter, into lower bounds on \(\smin(X_t^0W_t)\), upper
bounds on \(\norm{X_t^0W_{t,\perp}}\), and angle bounds for
\(\norm{V_\star^\perp V_{S_t}}_{\op}\).  Their Theorem~8.8 supplies the
Gaussian smallest-singular-value lower bound for an \(r_\star\times k\)
Gaussian matrix with entries normalized by \(1/\sqrt{k}\):
\[
        \smin(G)
        \ge
        \varepsilon\,\frac{\sqrt{k}-\sqrt{r_\star-1}}{\sqrt{k}}
\]
with probability at least
\(1-(C\varepsilon)^{k-r_\star+1}-e^{-ck}\).  This is the probabilistic source of
improved initial overlap under over-parameterization.

In the dashboard \eqref{eq:trajectory-rho-proxy}, Phase I primarily reduces the
clean component of \(m_t^w\).  More precisely, Lemma~8.5 controls the clean
subspace angle through \(\xi_t\) after the approximate power-method phase,
while the small-noise/bootstrap condition keeps the noisy component at the
\(\eta_w\) scale.  In this relaxed view, Phase I is best viewed as subspace
preparation: the trajectory has learned where the signal is, but the full-rank
chart may still be fragile.

\paragraph{Phase II: saddle avoidance and signal-floor growth.}
After alignment, the clean trajectory obeys the signal-floor growth inequality
\citep[Lemma~9.1]{stoegerSoltanolkotabi2021smallInit}
\[
\begin{aligned}
\smin(V_\star^\top X_{t+1}^0)
&\ge
\left(1+\frac14\eta\lambda_\star\right)
\smin(V_\star^\top X_t^0)
\\
&\quad-
\eta\,\smin(V_\star^\top X_t^0)^3 .
\end{aligned}
\]
under their step-size, angle, norm, and RIP-error conditions.  While this
projected signal floor is below signal scale, and while
\(\xi_t=\norm{V_\star^\perp V_{S_t}}_{\op}\) remains small, it is comparable to
\(\smin(S_t)\) up to the principal-angle factor
\(\sqrt{1-\xi_t^2}\).  Hence a useful reading is
\begin{equation}
    \label{eq:phi-growth-rule}
        s_{t+1}\gtrsim s_t(1+c\eta\lambda_\star),
        \qquad
        \phi_{t+1}\gtrsim \phi_t(1+c\eta\lambda_\star)^2 .
\end{equation}
The same analysis controls the slack factor
\(\nu_t=\norm{N_t}_{\op}\) and the signal angle
\citep[Lemmas~9.2--9.3]{stoegerSoltanolkotabi2021smallInit}.
In the proof of Theorem~9.6, starting from a post-spectral time \(t_0\) with
initial signal scale \(\gamma\), Phase II is summarized by inductive bounds of
the form
\[
\begin{aligned}
\smin(V_\star^\top X_t^0)
&\gtrsim
(1+c\eta\lambda_\star)^{t-t_0}\gamma,
\\
\nu_t
&\lesssim
(1+c_{\rm slack}\eta\lambda_\star)^{t-t_0}\gamma,
\\
\xi_t
&\lesssim
\kappa_X^{-2}.
\end{aligned}
\]
Therefore the clean signal floor grows while the slack ratio
\(\chi_t=\nu_t^2/s_t^2\) and the angle remain controlled.  In the dashboard
\eqref{eq:trajectory-rho-proxy}, this is the favorable middle regime:
\(m_t^w\) is small because the surrogate signal space is aligned, while
\(\tau_t=\sqrt{k-r_\star}\chi_t\phi_t\) is harmless only if the product
\(\chi_t\phi_t\) remains small relative to \(\psi_t^w\).  The noise level
\(\eta_w\) is the fixed floor.  The measured
denominator \(\psi_t^w\) can become meaningful if the smallest rank-\(k\)
direction is no longer microscopic.  The caveat is the slack tail
\[
        (1+\delta)\tau_t
        =
        (1+\delta)\sqrt{k-r_\star}\,\frac{\nu_t^2}{\lambda_\star}.
\]
Lemma~9.2 controls slack relative to the signal factor, but the proxy cares
about \(\tau_t/\psi_t^w\).  Thus slack stays harmless only when its normalized
energy remains small relative to the measured rank-\(k\) floor.  For strict
full-rank robustness, the decisive test is still whether the
computed
\[
        \rho_{k,t}^-=
        \frac{\norm{[P_t^{w,\perp}G_w(M_t^w)P_t^{w,\perp}]_-}_{\op}}
        {\lambda_k(M_t^w)}
\]
falls below \(1-\delta\).  Phase II is the period in which this can happen:
the numerator becomes more signal-aligned, while the denominator becomes
acceptable if the smallest retained rank-\(k\) direction, possibly a slack
direction relative to \(M_\star\), has grown enough to make the rank-\(k\)
chart well conditioned.

\paragraph{Phase III: local refinement and slack risk.}
Once \(s_t\) reaches signal scale, the active reconstruction error contracts
\citep[Lemma~9.5]{stoegerSoltanolkotabi2021smallInit}:
\[
\begin{aligned}
\norm{V_\star^\top(M_\star-M_{t+1}^0)}_F
&\le
(1-c\eta\lambda_\star)
\norm{V_\star^\top(M_\star-M_t^0)}_F
\\
&\quad+
C\eta\lambda_\star\norm{N_tN_t^\top}_F .
\end{aligned}
\]
Their Theorem~9.6 then combines Phase II and Phase III.  Its post-spectral
hypotheses include \(\min\{k,n\}>r_\star\), bounded iterate norm, small slack,
alignment
\(\norm{V_\star^\perp V_{S_{t_0}}}_{\op}\lesssim\kappa_X^{-2}\), an initial
signal scale
\[
        \gamma\le
        \frac{c\,\smin(X_\star)}{\min\{k,n\}\kappa_X^2},
\]
and a rank-\((2r_\star+1)\) RIP constant of order at most
\(1/(\kappa_X^4\sqrt{r_\star})\).  Under these hypotheses, the additional
iteration count is
\[
        O\!\left(
        \frac{1}{\eta\lambda_\star}
        \log\!\left(
        \max\left\{1,\frac{\kappa_X r_\star}{\min\{k,n\}-r_\star}\right\}
        \frac{\norm{X_\star}}{\gamma}
        \right)
        \right)
\]
and the clean error is bounded by a polynomial factor times
\((\gamma/\norm{X_\star})^{21/16}\).  In the dashboard
\eqref{eq:trajectory-rho-proxy}, Phase III no longer introduces a separate
active-error metric; the four-metric view keeps \(\eta_w\) as the fixed
external noise floor and tracks the remaining risk through \(\tau_t\).  This is
good for clean active reconstruction, but it is not by itself a strict
full-rank robustness certificate.  The slack tail can still dominate if the
extra columns fail to decay relative to \(\psi_t^w\).  Then
\(\tau_t/\psi_t^w\) can flatten or increase, and the strict full-rank
\(\rho\)-margin stops improving.

The trajectory-aware stopping rule is therefore simple: continue while the
measured \(\rho_{k,t}^-\) decreases or the shadowing bound
\eqref{eq:trajectory-shadow-bound} tightens; stop, damp, or truncate once the
full-rank \(\rho\)-margin stops improving.

\fi

\end{document}